\documentclass{article}

\usepackage{appendix}
\usepackage{amsmath, amsfonts, amssymb, amsthm}
\usepackage[margin=1in]{geometry}
\usepackage{graphicx}
\usepackage{hyperref}
\hypersetup{colorlinks, citecolor=blue}
\usepackage{color}
\usepackage{cleveref}
\usepackage{bm}

\let \hat \widehat

\newcommand{\cL}{\mathcal L}
\newcommand{\cN}{\mathcal N}
\newcommand{\bE}{\mathbb{E}}
\newcommand{\bx}{\mathbf{x}}
\newcommand{\bX}{\mathbf{X}}
\newcommand{\by}{\mathbf{y}}
\newcommand{\bz}{\mathbf{z}}
\newcommand{\cX}{\mathcal X}
\newcommand{\bR}{\mathbb{R}}
\newcommand{\bW}{\mathbf{W}}

\newcommand{\rd}{\mathrm{d}}
\newcommand{\KL}{\mathrm{KL}}
\newcommand{\TV}{\mathrm{TV}}

\newcommand{\distri}{p_{\mathrm{data}}}

\newcommand{\score}{\mathrm{score}}

\newtheorem{Theorem}{Theorem}[section]
\newtheorem{Assumption}{Assumption}[section]
\newtheorem{Proposition}{Proposition}[section]
\newtheorem{Corollary}{Corollary}[section]
\newtheorem{Lemma}{Lemma}[section]
\newtheorem{Remark}{Remark}[section]

\title{\bf Provable Statistical Rates for Consistency Diffusion Models}
\author{Zehao Dou\footnote{Department of Statistics and Data Science, Yale University} \and Minshuo Chen\footnote{Electrical and Computer Engineering, Princeton University} \and Mengdi Wang\(^\dagger\) \and Zhuoran Yang\(^*\)}
\date{}

\begin{document}
\maketitle
\begin{abstract}
Diffusion models have revolutionized various application domains, including computer vision and audio generation. Despite the state-of-the-art performance, diffusion models are known for their slow sample generation due to the extensive number of steps involved. In response, consistency models have been developed to merge multiple steps in the sampling process, thereby significantly boosting the speed of sample generation without compromising quality. This paper contributes towards the first statistical theory for consistency models, formulating their training as a distribution discrepancy minimization problem. Our analysis yields statistical estimation rates based on the Wasserstein distance for consistency models, matching those of vanilla diffusion models. Additionally, our results encompass the training of consistency models through both distillation and isolation methods, demystifying their underlying advantage.
\end{abstract}

\allowdisplaybreaks[4]
\section{Introduction}
Diffusion models have reached state-of-the-art performance in cross-domain applications, including computer vision \cite{song2019generative, dathathri2019plug, ho2020denoising, song2020score}, audio generation \cite{kong2020diffwave, chen2020wavegrad}, language generation \cite{li2022diffusion, yu2022latent, lovelace2022latent}, reinforcement learning and control \cite{pearce2023imitating, chi2023diffusion, hansen2023idql, reuss2023goal}, as well as computational biology \cite{lee2022proteinsgm, luo2022antigen, gruver2023protein}. These break-through performances are enabled by the unique design in the diffusion models. Specifically, diffusion models utilize the forward and backward processes to generate new samples. In the forward process, a clean data point is progressively contaminated by random noise, while the backward process attempts to remove the noise iteratively (typically taking 500 to 1000 steps \cite{song2019generative}) with the help of a specific type of neural network known as a score neural network.

~\\
Due to the enormous size of the score neural network, e.g., the smallest stable diffusion model uses a network of more than 890M parameters \cite{rombach2022high}, the sample generation speed of diffusion models is limited \cite{song2023consistency}, compared to Generative Adversarial Networks \cite{goodfellow2020generative} and AutoEncoders \cite{kingma2019introduction}. To overcome this shortcoming, there are extensive methodological studies aiming to accelerate diffusion models. Notable methods include using stride in sampling to reduce the number of backward steps \cite{nichol2021improved, song2020improved, lu2022dpm}, changing the backward process to a deterministic probabilistic flow \cite{song2020denoising, karras2022elucidating, zhang2022gddim}, and utilizing pretrained variational autoencoders to reduce the data dimensionality before applying diffusion models \cite{rombach2022high}. These methods lead to sampling speed acceleration, but may compromise the quality of the generated samples.

~\\
More recently, consistency models \cite{song2023consistency} achieve a significant sampling speed boost, while maintaining the high quality in generated samples. Roughly speaking, consistency models merge a large number of consecutive steps in the original backward process by additionally training a consistency network via distillation or isolation. The distillation method requires a pretrained diffusion model, yet isolation lifts this requirement. In either ways, it suffices to deploy the consistency model for very few times or even a single time to generate a new sample.

~\\
Despite the empirical success, theoretical underpinnings of consistency models are limited. In particular, the following question is largely open:
\begin{center}
\it What is the statistical error rate of consistency models for estimating the data distribution? How does it compare to the vanilla diffusion models?
\end{center}
In this paper, we provide the first theoretical study towards a positive answer to the preceding question. Specifically, we consider both the distillation and isolation methods and establish statistical estimation rate of consistency models in terms of the Wasserstein distance. We summarize our contributions as follows:
\begin{itemize}
    \item We formulate the training of consistency models as a Wasserstein distance minimization problem. This formulation is the first principled objective of consistency models, encompassing the practical consistency models' training proposed in \cite{song2023consistency}.

    \item We establish statistical distribution estimation guarantees of consistency models trained under the distillation method. We demonstrate in Theorem~\ref{thm:distillation} that the distribution estimation error is dominated by the score estimation error, showing that consistency models preserve the distribution estimation ability of vanilla diffusion models, but allow efficient sample generation.

    \item We extend our study to the isolation method, establishing analogous statistical estimation result. An $\widetilde{O}(n^{-1/d})$ statistical error rate is obtained in Theorem~\ref{thm:isolation} without any pretraining on the score function.
\end{itemize} 
These results are the first attempt to demystify consistency models from a statistical estimation perspective.

\subsection{Related Work}
Our work is related to the recent sampling theory of consistency models \cite{lyu2023convergence}, where they assume the score function as well as a multi-step backward process sampler have been accurately estimated. Our analysis, however, does not require such assumptions. In fact, we provide sample complexity bounds of ensuring these estimation errors being small. Apart from \cite{lyu2023convergence}, recent theoretical advances in diffusion models can be roughly categorized into sampling and statistical theories.

\paragraph{Sampling Theory of Diffusion Models} This line of works show that the distribution generated by a diffusion model is close to the data distribution, as long as the score function is assumed to be accurately estimated. Specifically, \cite{de2021diffusion, albergo2023stochastic} study sampling from diffusion Schr\"{o}dinger bridges with $L_\infty$ accurate score functions. Concrete sampling distribution error bounds of diffusion models are provided in \cite{block2020generative, lee2022convergencea, chen2022sampling, lee2022convergenceb} under different settings, yet they all assume access to $L_2$ accurate score functions. \cite{lee2022convergencea} require the data distribution satisfying a log-Sobolev inequality. Concurrent works \cite{chen2022sampling} and \cite{lee2022convergenceb} relax the log-Sobolev assumption to only having bounded moments conditions.

It is worth mentioning that \cite{lee2022convergenceb} allow the error of the score function to be time-dependent. Recently, \cite{chen2023restoration, chen2023probability, benton2023linear} largely enrich the study of sampling theory using diffusion models. Specifically, novel analyses based on Taylor expansions of the discretized backward process \cite{li2023diffusion} or localization method \cite{benton2023linear} are developed. Further, \cite{chen2023restoration, chen2023probability} extend to broad backward sampling methods. Besides Euclidean data, \cite{de2022convergence} made the first attempt to analyze diffusion models for learning low-dimensional manifold data. Moreover, \cite{montanari2023posterior} consider using diffusion processes to sample from noisy observations of symmetric spiked models and \cite{el2023sampling} study polynomial-time algorithms for sampling from Gibbs distributions based on diffusion processes.

\paragraph{Statistical Theory of Diffusion Models}
Distribution estimation bounds of diffusion models are first explored in \cite{song2020sliced} and \cite{liu2022let} from an asymptotic statistics point of view. These results do not provide an explicit sample complexity bound. Later, \cite{oko2023diffusion} and \cite{chen2023score} establish sample complexity bounds of diffusion models for both Euclidean data and low-dimensional subspace data. More recently, \cite{yuan2023reward} study the distribution estimation of conditional diffusion models with scalar reward guidance. \cite{mei2023deep} investigate statistical properties of diffusion models for learning high-dimensional graphical models.

\smallskip
\noindent \textbf{Notation}: 
For a mapping $F: \mathbb R^D \rightarrow \mathbb R^d$ and a distribution $\mathcal D$ supported on $\mathbb R^D$, $F_\sharp \mathcal D$ stands for the push forward distribution, which means $\mathrm{Law}(F(\bx))$ where $\bx\sim \mathcal D$. For brevity, we denote $c \mathcal D := f_\sharp \mathcal D$ where $f: \bx\mapsto c\bx$ is a scaling function. For two distributions $\mathcal D_1, \mathcal D_2$ supported on $\bR^d$, denote $\mathcal D_1 \star \mathcal D_2$ as their convolution, which stands for $\mathrm{Law}(\bx+\by)$ where $\bx\sim \mathcal D_1$ and $\by\sim \mathcal D_2$. The given dataset is $\{\bx^j\}_{j\in [n]}$, which is assumed to be i.i.d sampled from $\distri$, our target distribution. The empirical distribution is denoted as $\hat{\distri} = \frac1n\sum_j \delta_{\bx^j}$. Here, $\delta_\bx$ stands for the Dirac delta distribution at point $\bx$. 

\section{Diffusion Model Preliminary}
\label{sec: back}
We adopt a continuous time description of diffusion models, which provide rich interpretations. In practice, a proper discretization is applied accordingly. Diffusion models consist of two coupled processes. In the forward process, we gradually add noise to data following a stochastic differential equation:
\begin{equation}
\label{eqn-sde}
\rd\bx_t = \mu(\bx_t, t) \rd t + \sigma(t) \rd \bW_t~~~\text{for}~t\in[0,T].
\end{equation}
Here $\bW_t(\cdot)$ is the standard Brownian motion, $\mu(\bx_t, t)$ and $\sigma(t)$ are the drift term and the diffusion term respectively, and $T$ is a terminal time. The forward process \eqref{eqn-sde} starts from $\bx_0\sim \distri$, the distribution of data. At each time $t\in [0,T]$, we denote $\bx_t\sim p_t$ as the marginal distribution of the forward process.

As shown in \cite{anderson1982reverse}, the forward process enjoys a time reversal, which is termed as the backward process:
\begin{equation}
\label{eqn-sde-backward}
\rd\bx_t = \Big[\mu(\bx_t, t) - \sigma(t)^2 \nabla_{\bx_t} \log p_t(\bx_t)\Big] \rd t + \sigma(t) \rd\overline{\bW}_t. 
\end{equation}
Here $\overline{W}_t(\cdot)$ is a standard Brownian motion with time flowing backward from $T$ to 0 and $\nabla_{\bx_t} \log p_t(\bx_t)$ is the score function. In practice, we use a score neural network to estimate the unknown score function via denoising score matching \cite{song2019generative, vincent2011connection}. It is worth mentioning that \eqref{eqn-sde-backward} is not the only backward process whose solution trajectories match the distribution of the forward process. We present the following example.

A commonly used specialization of \eqref{eqn-sde} is the variance preserving SDE (VP-SDE, \cite{dhariwal2021diffusion}), i.e., 
\begin{equation}
\label{eqn:vp-sde}
\rd\bx_t = -\frac{\beta(t)}{2} \bx_t \rd t + \sqrt{\beta(t)} \rd\bW_t.
\end{equation}
Here $\beta(t) > 0$ is the noise schedule, which is usually chosen as a linear function over $t$. Under VP-SDE, we have that the transition kernel $p(\bx_t\mid \bx_0)$ is Gaussian satisfying
\begin{equation}
\label{eqn:Gaussian}
p(\bx_t\mid \bx_0) = \mathcal N(\bx_t\mid m(t)\bx_0, \sigma(t)^2 \bm I),
\end{equation}
where
\[m(t) = \exp\left(-\frac12\int_0^t \beta(s)\rd s\right)~~~\text{and}~~~\sigma(t)^2 = 1 - m(t)^2. \]
At the terminal time $T$, the marginal distribution $p_T := p(\bx_T)$ is approximately a standard Gaussian distribution. The corresponding backward process to \eqref{eqn:vp-sde} is
\begin{equation}
% \label{eqn:vp-sde-back}
\rd\bx_t = \left[-\frac{\beta(t)}{2} \bx_t - \beta(t) \nabla_{\bx_t} \log p_t(\bx_t)\right] \rd t + \sqrt{\beta(t)} \rd\overline{\bW}_t \nonumber
\end{equation}
Interestingly, \eqref{eqn:vp-sde} also assumes a probability ODE flow as a backward process:  
\begin{equation}
\label{eqn:vp-sde-back-ode}
\rd\bx_t = \left[-\frac{\beta(t)}{2} \bx_t - \frac{\beta(t)}{2} \nabla_{\bx_t} \log p_t(\bx_t)\right] \rd t.
\end{equation}
As can be seen, the transition in \eqref{eqn:vp-sde-back-ode} is deterministic.

\section{Consistency Models Minimize Discrepancy}\label{sec:consistency_train}
Consistency models merge multiple backward steps in the vanilla diffusion models to expedite the sampling. As proposed in \cite{song2023consistency}, the training of consistency models utilizes either distillation or isolation. Unfortunately, only iterative algorithms are derived in \cite{song2023consistency}, making the training objective elusive. In this section, we formulate the training of consistency models as a Wasserstein distance minimization problem, which encodes the original derivation in \cite{song2023consistency}, but also enables broad modifications.

To motivate the consistency models, we consider the probabilistic ODE \eqref{eqn:vp-sde-back-ode} as our backward process. Consistency models seek a mapping $f_\theta(\bx, t)$ that identifies a solution trajectory in the backward ODE to a single point. In particular, we define
\begin{equation}
\label{eqn: sectioned}
f_{\theta}(\bx, t) = \begin{cases}
\bx~~~~~~~~~ & \mbox{$t=\varepsilon$} \\
F_{\theta}(\bx, t) ~~~~~&\mbox{$t \in (\varepsilon, T]$}
\end{cases},
\end{equation}
where $F_{\theta}(\cdot, \cdot) : \bR^d \times [\epsilon, T] \mapsto \bR^d$ is a free-form deep neural network with parameter $\theta$ and $\varepsilon$ is an early-stopping time to prevent instability \cite{song2020improved}. The neural network $F_\theta(\bx, t)$ should satisfy a time-invariant property with respect to the solution trajectories in the ODE \eqref{eqn:vp-sde-back-ode}. Specifically, for any two time points $t_1 \neq t_2 \in [\varepsilon, T]$, we denote the contemporary generated samples as $\bx_{t_1}$ and $\bx_{t_2}$. Then in the ideal case, it holds that $F_\theta(\bx_{t_1}, t_1) = F_\theta(\bx_{t_2}, t_2) = \bx_{\varepsilon}$. In other words, $F_{\theta}$ attempts to identify an ODE trajectory to its end point, which is the generated data point.
\begin{figure}
    \centering
    \includegraphics[width=0.75\linewidth]{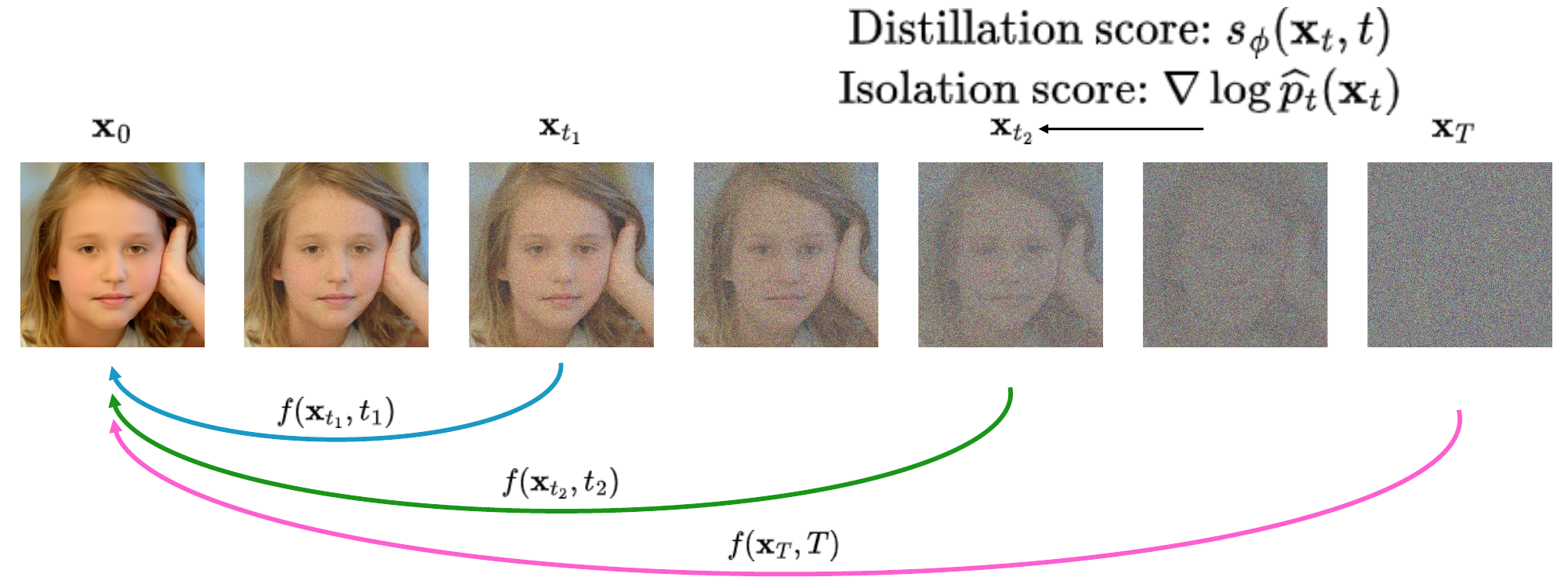}
    \caption{Illustration of Consistency Models: At each time step $t$, the consistency model $f(\cdot, t)$ will map $\bx_t$ to $\bx_0$ along the trajectory of probability flow ODE. We also demonstrate the score function applied at time $t$ in both distillation training and isolation training.}
    \label{fig:1}
\end{figure}

\paragraph{Training of Consistency Models} The training of consistency models leverage the time-invariance of $f_\theta(\bx, t)$. We discretize the time interval $[\varepsilon, T]$ into $N$ uniform sub-intervals, with breaking points $\varepsilon = t_0 < t_1 < \ldots < t_N = T$. We denote $t_k = t_0 + k \Delta t$ where $\Delta t = \frac{T-\varepsilon}{N}$ is the length of each sub-interval. We also denote $\{\tau_k\}_{k\in [N']}$ as a subset of time steps such that $\tau_k := t_{kM}$ where $N = N'M$ with $\tau_0 = t_0 = \varepsilon$ and $\tau_{N'} = t_N = T$. Corresponding to the exposure of the time-invariance property of $f_\theta$, consistency models aim to enforce
\[f_\theta(\cdot, \tau_k)_\sharp \bX_{\tau_k} \overset{\rm law}{=} f_\theta(\cdot, \tau_{k-1})_\sharp \bX_{\tau_{k-1}} \overset{\rm law}{=} \bX_\varepsilon ~~~\forall k\in [N'].\]
To this end, we define the following Wasserstein distance-based consistency loss for training $f_{\theta}$:
\begin{equation}
\label{eqn:consistency-loss-1}
\sum_{k=1}^{N'} W_1\left(f_{\theta}(\cdot, \tau_k)_\sharp \bX_{\tau_k}, f_{\theta}(\cdot, \tau_{k-1})_\sharp \bX_{\tau_{k-1}}\right). 
\end{equation}
Here $\bX_t = \mathrm{Law}(\bx_t) = m(t) \distri \star \mathcal N(0, \sigma(t)^2 \bm I)$ for $\forall t\in [\varepsilon, T]$ by \cref{eqn:Gaussian}. We remark that \eqref{eqn:consistency-loss-1} accommodates to both deterministic and stochastic backward processes by measuring the distribution discrepancy, while our discussion focuses on deterministic ODEs.

Notice that the Wasserstein distance in \eqref{eqn:consistency-loss-1} is not tractable, since we have no access to the target distribution $\distri$, let alone $\bX_t$. Therefore, we replace it by the empirical counterpart $\hat{\distri}=\frac1n\sum_j \delta_{\bx^j}$ as well as $\cX_t := m(t) \hat{\distri} \star \mathcal N(0, \sigma^2(t))$, the empirical version of $\bX_t$. We cast \eqref{eqn:consistency-loss-1} into 
\begin{equation}
\label{eqn:consistency-loss-2}
\sum_{k=1}^{N'} W_1\left(f_{\theta}(\cdot, \tau_k)_\sharp \cX_{\tau_k}, f_{\theta}(\cdot, \tau_{k-1})_\sharp \cX_{\tau_{k-1}}\right). 
\end{equation}
In practice, there are two different approaches, named distillation and isolation, to determine the corresponding sample from $\bX_{\tau_{k-1}}$ given a sample $\bx_{\tau_k}\sim \bX_{\tau_k}$. Both of them pushes $\bx_{\tau_k}$ along the backward probability flow (\ref{eqn:vp-sde-back-ode}) by ODE update, but consistency distillation relies on a pretrained plug-in score estimator $s_\phi(\bx, t)$ while consistency isolation does not require any pretrained models. In this work, we study both the consistency distillation and isolation in \cref{sec: distillation} and provide a statistical rate of $W_1\left(f_{\hat{\theta}}(\cdot, T)_\sharp \mathcal N(0, \bm I), \distri\right)$ for the learned consistency model $f_{\hat{\theta}}(\cdot, \cdot)$. 
\paragraph{Distillation Method}
Given a time step $\tau_k$ as well as $\bx_{\tau_k}\sim \bX_{\tau_k}$, we obtain a corresponding sample $\bx_{\tau_{k-1}}$ by running $M$ discretization steps of probability ODE \eqref{eqn:vp-sde-back-ode} solver starting from $\bx_{\tau_k}$. For a one-step update, we denote
\begin{equation}
\label{eqn:1}
\hat{\bx}_{t_{k-1}}^\phi = \bx_{t_k}-\Delta t\cdot \Phi(\bx_{t_k}, t_k; \phi) := G(\bx_{t_k}, t_k;\phi).
\end{equation}
Here, $\Phi(\cdot, \cdot; \phi)$ is the update function of numerical ODE. In our variance preserving framework (\ref{eqn:vp-sde-back-ode}), we have:
\begin{equation}
\label{eqn:2}
\Phi(\bx_{t_k}, t_k; \phi) = -\frac{\beta(t_k)}{2}\bx_{t_k}-\frac{\beta(t_k)}{2}\cdot s_{\phi}(\bx_{t_k}, t_k). 
\end{equation}
After applying $M$ consecutive updates, we obtain $\hat{\bx}_{\tau_{k-1}}^{\phi, M}$ from $\bx_{\tau_k}\sim \bX_{\tau_k}$, which is defined as
\[\by_M := \bx_{\tau_k} = \bx_{t_{kM}}, ~\by_{j-1} := G\left(\by_j, t_{(k-1)M+j}; \phi\right)\]
for $j\in[M]$, and eventually $\hat{\bx}_{\tau_{k-1}}^{\phi, M}:= \by_0$. Here, the update function $G(\cdot, \cdot;\phi)$ is the same as \cref{eqn:1}. For simplicity, it is equivalent to express it as
\begin{equation*}
\hat{\bx}_{\tau_{k-1}}^{\phi, M} = G_{(M)}(\bx_{\tau_k}, \tau_k; 
\phi):= G(\cdot, t_{(k-1)M+1}; \phi)\circ \ldots \circ  G(\cdot, t_{kM}; \phi)(\bx_{\tau_k}). 
\end{equation*}

In this way, we can approximate distribution $\bX_{\tau_{k-1}}$ with $G_{(m)}(\cdot, \tau_k;\phi)_\sharp \bX_{\tau_k}$, whose error only comes from the discretization loss of ODE solver as well as the score estimation loss of $s_\phi(\cdot, t)$. Now, we have the training objective of consistency models as follows:
\begin{equation}
\label{eqn:consistency}
\cL_{\mathrm{CD}}^N (\theta;\phi) = \sum_{k=1}^{N'} W_1\left(f_{\theta}(\cdot, \tau_k)_\sharp \cX_{\tau_k}, f_{\theta}(\cdot , \tau_{k-1})_\sharp \hat{\cX}_{\tau_{k-1}}^{\phi,M}\right).
\end{equation}
Here, $\hat{\cX}_{\tau_{k-1}}^{\phi,M} = G_{(M)}(\cdot, \tau_k;\phi)_\sharp \cX_{\tau_k}$ is the underlying distribution of $\hat{\bx}_{\tau_{k-1}}^{\phi, M} = G_{(M)}(\bx_{\tau_k}, \tau_k;\phi)$ where $\bx_{\tau_k}\sim \cX_{\tau_k}$. Our consistency model $f_{\hat{\theta}}$ is optimized over function class $\mathrm{Lip}(R)$, with regard to the optimization problem:
\begin{equation}
\label{eqn:consistency-objective}
\hat{\theta} = \arg\min_{\theta:~f_{\theta}\in \mathrm{Lip}(R)} \cL_{\mathrm{CD}}^N (\theta;\phi). 
\end{equation}
Here, $\mathrm{Lip}(R)$ denotes the set of functions $f(\bx, t)$ such that $f(\cdot, t)$ is Lipschitz-$R$ continuous over $\bx$ at any given time step $t\in[\varepsilon, T]$ with boundary condition $f(\cdot,\varepsilon) = \mathrm{id}$. 

\begin{figure}
    \centering
    \includegraphics[width=0.75\linewidth]{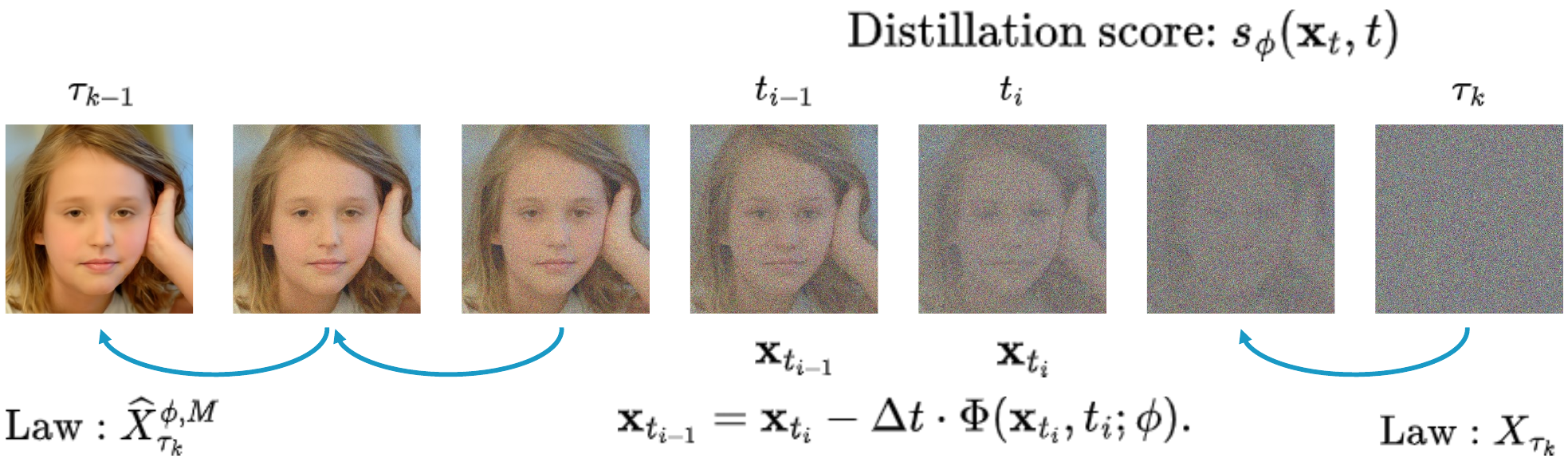}
    \caption{Illustration of $\widehat{X}_{\tau_k}^{\phi, M}$: When starting from distribution $X_{\tau_k}$ at time $\tau_k$ and following the discrete distillation-based backward process, it ends at $\tau_{k-1}$ with underlying law $\widehat{X}_{\tau_k}^{\phi, M}$. }
    \label{fig:2}
\end{figure}

\paragraph{Isolation Method}
Besides training in distillation, consistency models can also be trained without a pre-learned score estimator. Instead of using score model $s_\phi(\bx_t, t)$ to approximate the true score function $\nabla \log p_t(\bx_t)$, we can also use the following Tweedie's formula
\[\nabla\log p_t(\bx_t) = -\bE\left[\frac{\bx_t - m(t)\bx_0}{\sigma(t)^2}\bigg| \bx_t\right]\]
where $\bx_0\sim \distri$ and $p(\bx_t\mid \bx_0) = \mathcal N(\bx_0, \sigma(t)^2\bm I)$. Since $\distri$ is intractable, we make an unbiased approximation as follows:
\begin{equation}
\label{eqn:isolation-pre}
\nabla\log p_t(\bx_t) \approx -\bE_{\bx_0\sim \hat{\distri}}\left[\frac{\bx_t - m(t)\bx_0}{\sigma(t)^2}\bigg| \bx_t\right].
\end{equation}
\begin{Lemma}
\label{lemma:isolation-pre}
For the approximator above, it exactly equals to the score function of distribution $\cX_t$, i.e.
\[-\bE_{\bx_0\sim \hat{\distri}}\left[\frac{\bx_t - m(t)\bx_0}{\sigma(t)^2}\bigg| \bx_t\right] = \nabla\log \hat{p}_t(\bx_t).\]
Here, $\hat{p}_t(\cdot)$ is the density of $\cX_t = m(t)\hat{\distri}\star \mathcal N(0,\sigma(t)^2\bm I)$, which is a mixture of Gaussian. Therefore, it has explicit formulation and needs no additional training. 
\end{Lemma}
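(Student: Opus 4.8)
The plan is to prove the identity by directly differentiating the log-density of the Gaussian mixture $\cX_t$ and recognizing the resulting expression as a Bayes posterior average --- that is, the lemma is simply Tweedie's formula specialized to the empirical prior $\hat{\distri}$.

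First I would write the density of $\cX_t=m(t)\hat{\distri}\star\mathcal N(0,\sigma(t)^2\bm I)$ explicitly. Since $\hat{\distri}=\frac1n\sum_j\delta_{\bx^j}$, convolving with the Gaussian kernel gives
\[
\hat p_t(\bx_t)=\frac1n\sum_{j=1}^n\varphi_t\big(\bx_t-m(t)\bx^j\big),\qquad \varphi_t(\bz):=\big(2\pi\sigma(t)^2\big)^{-d/2}\exp\!\Big(-\tfrac{\|\bz\|^2}{2\sigma(t)^2}\Big),
\]
which is smooth and strictly positive, so $\nabla\log\hat p_t$ is well defined everywhere. Using $\nabla_{\bz}\varphi_t(\bz)=-\varphi_t(\bz)\,\bz/\sigma(t)^2$ and the chain rule,
\[
\nabla\hat p_t(\bx_t)=-\frac1n\sum_{j=1}^n\varphi_t\big(\bx_t-m(t)\bx^j\big)\,\frac{\bx_t-m(t)\bx^j}{\sigma(t)^2},
\]
so that $\nabla\log\hat p_t(\bx_t)=\nabla\hat p_t(\bx_t)/\hat p_t(\bx_t)$ is the negative of a convex combination of the vectors $(\bx_t-m(t)\bx^j)/\sigma(t)^2$ with weights $w_j(\bx_t)=\varphi_t(\bx_t-m(t)\bx^j)\big/\sum_{i=1}^n\varphi_t(\bx_t-m(t)\bx^i)$.

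Finally I would identify $w_j(\bx_t)$ with the posterior over the clean sample. Viewing the empirical forward process as the joint law of $\bx_0\sim\hat{\distri}$ and $\bx_t\mid\bx_0\sim\mathcal N(m(t)\bx_0,\sigma(t)^2\bm I)$, Bayes' rule gives $\Pr(\bx_0=\bx^j\mid\bx_t)=w_j(\bx_t)$, with the uniform prior masses $1/n$ canceling between numerator and denominator. Hence
\[
-\bE_{\bx_0\sim\hat{\distri}}\!\left[\frac{\bx_t-m(t)\bx_0}{\sigma(t)^2}\,\bigg|\,\bx_t\right]=\sum_{j=1}^n w_j(\bx_t)\left(-\frac{\bx_t-m(t)\bx^j}{\sigma(t)^2}\right)=\nabla\log\hat p_t(\bx_t),
\]
which is the claim. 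The computation is entirely elementary; the only point requiring a line of care is verifying that the Gaussian normalizing constants and the prior weights $1/n$ cancel between numerator and denominator, which is exactly what makes the softmax weights appearing in $\nabla\log\hat p_t$ coincide with the Bayes posterior. I do not anticipate any genuine obstacle here: the content is precisely that Tweedie's formula holds for an arbitrary prior, taken here to be the empirical distribution, whose Gaussian smoothing admits a closed-form mixture density.
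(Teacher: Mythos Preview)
Your proposal is correct and follows essentially the same approach as the paper: both arguments write the mixture density $\hat p_t$ explicitly, compute its gradient, and identify the resulting softmax weights with the Bayes posterior $p(\bx_0=\bx^j\mid\bx_t)$. The only cosmetic difference is the direction of the computation --- you start from $\nabla\log\hat p_t$ and recognize the posterior, whereas the paper starts from the posterior expectation and manipulates it into $\nabla\log\hat p_t$ --- but the content is identical.
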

\begin{proof}
Detailed proof is left in Appendix \S \ref{sec:A.1}.
\end{proof}
\cref{lemma:isolation-pre} concludes that, taking a backward ODE step in the isolation setting is equivalent to moving along the following empirical backward diffusion ODE. 
\begin{equation}
\label{eqn:backward-total}
\begin{aligned}
&\text{Forward:~}\rd\bx_t = -\frac{\beta(t)}{2} \bx_t \rd t + \sqrt{\beta(t)} \rd\bW_t, ~\bx_0\sim \hat{\distri}. \\
&\text{Backward:~}\rd\bx_t = \left[-\frac{\beta(t)}{2} \bx_t - \frac{\beta(t)}{2} \nabla_{\bx_t} \log \hat{p}_t(\bx_t)\right] \rd t.
\end{aligned}
\end{equation}
Its only differences with the diffusion model introduced in distillation training is that $\bx_0\sim \hat{\distri}$ instead of $\distri$, and the empirical score function $\nabla \log \hat{p}_t(\cdot)$ is applied instead of true score $\nabla \log p_t(\cdot)$. Under this forward SDE, it's obvious that $\mathrm{Law}(\bx_t) = \cX_t$. In this case, a one-step update of the backward probability ODE at $\bx_{t_k}\sim \cX_{t_k}$ accurately links $\cX_{t_k}$ to $\cX_{t_{k-1}}$. Therefore, the isolation training objective of consistency models is as follows:
\begin{equation}
\label{eqn:consistency-2}
\cL_{\mathrm{CT}}^N (\theta) = \sum_{k=1}^{N'} W_1\left(f_{\theta}(\cdot, \tau_k)_\sharp \cX_{\tau_k}, f_{\theta}(\cdot , \tau_{k-1})_\sharp \cX_{\tau_{k-1}}\right).
\end{equation}
Similarly, our consistency model $f_{\hat{\theta}}$ is optimized with regard to the optimization problem:
\begin{equation}
\label{eqn:isolation-objective}
\hat{\theta} = \arg\min_{\theta:~f_{\theta}\in \mathrm{Lip}(R)} \cL_{\mathrm{CT}}^N (\theta). 
\end{equation}
Notice that, there is no parameter $\phi$ in the objective since the isolation training does not need the pre-trained score model $s_\phi(\cdot, \cdot)$. 

\paragraph{Connection to Original Consistency Model Training in \cite{song2023consistency}}
For the practical training of consistency models, \cite{song2023consistency} proposed the following sample-based consistency loss:
\begin{equation*}
\cL(\theta, \theta^-;\phi) = \mathbb{E}\left[\lambda(t_k)\cdot d\left(f_{\theta}(\bx_{t_k}, t_k), f_{\theta^-}(\hat{\bx}_{t_{k-1}}^\phi , t_{k-1})\right)\right].
\end{equation*}
Here, the expectation is taken over $k\sim \mathrm{Unif}[1,N]$ and $\bx_t \sim \cX_t$ for $\forall t\in [0,T]$. $\theta^-$ is the running average of the past values of $\theta$ in previous iterations during the optimization, and $d(\cdot, \cdot)$ is a metric function over the sample space. Besides, $\lambda(\cdot)$ is a positive weighting function over time and $\hat{\bx}_{t_{k-1}}^\phi$ is obtained by making a discretization step through backward probability flow (\cref{eqn:1}) from $\bx_{t_k}$. In comparison, we make the following minor modifications to ease the theoretical analysis on the statistical rate of consistency models.

~\\
We let $\lambda(\cdot)\equiv 1$, which is applied both practically and theoretically \cite{lyu2023convergence}. A simplification of $\theta^- = \theta$ is made since the optimization techniques while learning consistency models is not what we consider from the statistical point of view. We also extend the one-step ODE solver to multi-step ODE solver, which pushes $\bx_{\tau_k}$ back to $\hat{\bx}_{\tau_{k-1}}^{\phi,M}$. 

~\\
Another main difference is that we use Wasserstein-1 metric $W_1(\cdot, \cdot)$ over distribution space instead of the sample-based metric $d(\cdot, \cdot)$ as the training objective of consistency models. Our ultimate goal is to upper bound the distance between $\distri$ and $f_{\hat{\theta}}(\cdot, T)_\sharp \mathcal N(0,\bm I)$, which makes the distribution-based metric sufficient for our analysis. 

~\\
In specific, consistency models aim to learn a direct transformation $f_\theta$ that matches the distribution generated via multiple backward steps. To achieve this goal, the original training loss of consistency models requires pointwise alignment between the outputs of $f_\theta$ and multiple backward steps, measured for example by an $l_2$-distance. In our formulation, $W_1$-distance is used, which is a discrepancy measure in the distributional sense and ensures the outputs of $f_\theta$ matches that of multiple backward steps in distribution. Note that $W_1$-distance is weaker than the pointwise $l_2$-distance: A small $l_2$-distance implies a small $W_1$-distance. Therefore, our analysis is derived under weaker conditions but covers the stronger $l_2$-distance. In practice, $l_2$-distance is used due to its easy implementation.

\section{Statistical Rates of Consistency Models}
\label{sec: distillation}
In this section, we propose our main theorems for the statistical error rates of consistency models, under both settings of distillation training and isolation training. 
\paragraph{Consistency Distillation}
After obtaining the global optima $f_{\hat{\theta}}$ in the optimization problem (\ref{eqn:consistency-objective}), we first construct a baseline consistency model $f_{\theta^*}(\cdot, t)$ induced by natural probability flow ODE solver, named as DDPM solver, whose formulation is presented below. Next, we can upper bound the gap between these two one-step consistency models by applying the optimality condition, with the performance gap represented as
\[W_1\left(f_{\hat{\theta}}(\cdot, T)_\sharp \bX_T, f_{\theta^*}(\cdot, T)_\sharp \bX_T\right).\]
Furthermore, we conclude our main theorem which upper bounds the following statistical error
\begin{equation}
\label{eqn:target}
\cL(\hat{\theta}) = W_1\left(f_{\hat{\theta}}(\cdot, T)_\sharp \mathcal N(0, \bm I), \distri\right).
\end{equation}
by using the bounds of $W_1(\bx_T, \mathcal N(0,\bm I))$ and the approximation error $W_1(f_{\theta^*}(\cdot, T)_\sharp \bX_T, \bX_\varepsilon)$ of the DDPM solver $f_{\theta^*}(\cdot, \cdot)$. Here, the baseline DDPM solver $f_{\theta^*}(\cdot, \cdot)$ is structured as an $N$-layer ResNet \cite{he2016deep} with an inserted pretrained score estimator $s_\phi(\cdot, \cdot)$:
\[f_{\theta^*}(\bx, t) = f_{\theta^*}\left(\hat{\bx}^\phi, ~t - \Delta t\right)~~~~\forall t\in [t_1, T]\]
where $\hat{\bx}^\phi :=\bx + \left(\frac{\beta(t)}{2}\bx + \frac{\beta(t)}{2} s_{\phi}(\bx, t)\right)\Delta t $ is a single-step numerical ODE update from $\bx$ at time $t$. For $\forall t\in [\varepsilon, t_1]$, we let
\[f_{\theta^*}(\bx, t) = \bx + \left(\frac{\beta(t)}{2}\bx + \frac{\beta(t)}{2} s_{\phi}(\bx, t)\right)\cdot (t-\varepsilon). \]
Observe that this structure naturally assures that $f_{\theta^*}(\cdot, \varepsilon) = \mathrm{id}$.

~\\
In this work, we propose an upper bound for the statistical rate of consistency error (\ref{eqn:target}), given a pretrained score estimator $s_\phi$ and a global optimal solution $\hat{\theta}$. Formally, we state our assumptions and main theorem as follows.
\begin{Assumption}[Gaussian tail] 
\label{assump: 1}
For the target distribution $\distri$, it is twice continuously differentiable and it has a Gaussian tail, i.e. there exists positive constants $\alpha_1, \alpha_2 > 0$ such that 
\[\mathbb{P}_{X\sim\distri}\left[\|X\|_2\geqslant R_0\right] \leqslant \mathbb{P}_{Z\sim\mathcal N(0, I)}\left[\|Z\|_2 \geqslant \frac{R_0 - \alpha_1}{\alpha_2}\right]\]
holds for all $R_0 > \alpha_1$. Notice that, this assumption directly leads to the finite second order moment of $\distri$:
\[\mathcal M_2^2 = \mathbb{E}_{X\sim\distri} \|X\|_2^2 < \infty. \]
As we know, the Sub-Gaussian tail is a very mild assumption, encapsulating various practical distributions, such as those with compact support set. Sub-Gaussian tail is also widely studied in existing literature on high-dimensional statistics \cite{wainwright2019high}. 
\end{Assumption}
\begin{Assumption}[Lipschitz score function]
\label{assump: 2}
For any time step $t\in [0, T]$, the score function $\nabla \log p_t (\cdot )$ is $L$-Lipschitz. 
\end{Assumption}
The two assumptions above are mild and have been widely used in relevant works \cite{lyu2023convergence, block2020generative, lee2022convergencea, lee2022convergenceb}. Unlike \cite{block2020generative, de2021diffusion, lee2022convergencea}, we do not need extra conditions on the target distribution such as log-Sobolev inequality or log-concavity, but the Gaussian tail condition is stronger than a bounded second order moment. In this paper, Assumption \ref{assump: 1} is necessary since we need to bound Wasserstein distance with KL divergence. Besides, we can also remove the Lipschitz assumption on the score function by adapting analysis in \cite{benton2023linear}. However, it is only used for technical convenience in bounding the discretization error in Theorems \ref{thm:distillation} and \ref{thm:isolation}, which is only a lower-order term.

\begin{Assumption}[Lipschitz continuity of $f_{\theta^*}$]
\label{assump: 3}
We assume that the baseline consistency model $f_{\theta^*}(\cdot, t)$ is $R$-Lipschitz continuous for $\forall t\in [\varepsilon, T]$.
\end{Assumption}
\begin{Remark}
\label{remark:lipschits-constant}
As \cite{caffarelli1992regularity} proposes, for two distributions $\mu$ and $\rho$ with $\alpha$-Hölder densities and convex support set, there exists a transformation $T^*$ which is $(\alpha+1)$-Hölder smooth, such that $T^*_\sharp \rho = \mu$. This conclusion shows us the existence of transformation with regularity. Assumption \ref{assump: 3} is natural and has been previously used in Assumption 5 of \cite{lyu2023convergence} and Theorem 1 of \cite{song2023consistency}. 
\end{Remark}
\begin{Remark}
\label{remark:more}
Notice that the existence of $f_{\theta^*}(\cdot, t)$ does not imply an access to it. Indeed,  $f_{\theta^*}(\cdot, t)$ is induced by a continuous-time ODE, which is nearly impossible to be queried exactly. Therefore, we need to learn $f_{\theta^*}$ during the training of consistency models. Our Assumption \ref{assump: 3} only asserts that $f_{\theta^*}(\cdot, t)$ is $R$-Lipschitz continuous. However, we do not have access to the ground truth $\theta^*$. 
\end{Remark}
\begin{Assumption}[Bounded coefficient]
\label{assump: 4}
In our variance preserving SDE (\ref{eqn:vp-sde}), the coefficient function $\beta(t)$ is upper and lower bounded by $\overline{\beta}$ and $\underline{\beta}$, such that:
\[\underline{\beta} \leqslant \beta(t) \leqslant \overline{\beta} < \frac{1}{d\log n + d^2 \log(d/\varepsilon)}~~~\text{for}~\forall t\in [\varepsilon, T].\]
\end{Assumption}
Compared with \cite{lyu2023convergence}, we do not require additional assumptions on score estimation error or consistency loss. Actually, bounding these two losses are important parts of our proof.  Now, we introduce our main theorem in below. 
\begin{Theorem}[Main Theorem 1: Distillation]
Under Assumptions \ref{assump: 1} - \ref{assump: 4}, there exists a score estimator $s_\phi(\cdot, t)$ such that the consistency model $f_{\hat{\theta}}(\cdot, t)$ obtained from \eqref{eqn:consistency-objective} satisfies that:
\begin{equation*}
\begin{aligned}
\bE\left[W_1\left(f_{\hat{\theta}}(\cdot, T)_\sharp \mathcal N(0,\bm I), \distri\right)\right]&\lesssim \sqrt{d}R\exp(-\underline{\beta} T/2) + \frac{R\overline{\beta}dLT}{\sqrt{M}} + 6RN'n^{-1/d}\\
&~~~~~~+R\overline{\beta}\sqrt{d} \varepsilon_{\mathrm{score}}\cdot \sqrt{\frac{TN'}{\varepsilon}} + \sqrt{d\overline{\beta}\varepsilon},
\end{aligned}
\end{equation*}
where $R$ is the Lipschitz constraint of the optimization problem (\ref{eqn:consistency-objective}), and the expectation is taken with respect to the choice of dataset $\{\bx^j\}_{j\in [n]}$. $\varepsilon_\mathrm{score}=\mathcal O(n^{-1/(d+5)})$ stands for the score estimation error.
\label{thm:distillation}
\end{Theorem}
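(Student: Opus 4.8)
The plan is to decompose the target quantity $W_1\bigl(f_{\hat\theta}(\cdot,T)_\sharp \mathcal N(0,\bm I),\distri\bigr)$ into a chain of pieces, each controllable by a separate lemma, and then to add them up via the triangle inequality for $W_1$. Writing $\bX_\varepsilon = m(\varepsilon)\distri\star\mathcal N(0,\sigma(\varepsilon)^2\bm I)$, I would first replace $\distri$ by $\bX_\varepsilon$ at a cost of $W_1(\distri,\bX_\varepsilon)\lesssim \sqrt{d\overline\beta\varepsilon}$ (the last term; this uses that $\sigma(\varepsilon)^2 = 1-m(\varepsilon)^2 = O(\overline\beta\varepsilon)$ and $1-m(\varepsilon) = O(\overline\beta\varepsilon)$, so the Gaussian perturbation and the rescaling each move mass by $O(\sqrt{d\overline\beta\varepsilon})$ in $W_1$). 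Next, replace the input $\mathcal N(0,\bm I)$ to $f_{\hat\theta}(\cdot,T)$ by $\bX_T$: since $f_{\hat\theta}(\cdot,T)$ is $R$-Lipschitz, this costs $R\cdot W_1(\mathcal N(0,\bm I),\bX_T)$, and a standard forward-process mixing estimate gives $W_1(\mathcal N(0,\bm I),\bX_T)\lesssim \sqrt{d}\exp(-\underline\beta T/2)$ (exponential contraction of the OU semigroup under VP-SDE, using Assumption~\ref{assump: 4} and the second-moment bound from Assumption~\ref{assump: 1}). This yields the first term $\sqrt{d}R\exp(-\underline\beta T/2)$.

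It remains to bound $W_1\bigl(f_{\hat\theta}(\cdot,T)_\sharp \bX_T,\ \bX_\varepsilon\bigr)$, which I would split again as
\begin{equation*}
W_1\bigl(f_{\hat\theta}(\cdot,T)_\sharp \bX_T, f_{\theta^*}(\cdot,T)_\sharp \bX_T\bigr) + W_1\bigl(f_{\theta^*}(\cdot,T)_\sharp \bX_T,\ \bX_\varepsilon\bigr).
\end{equation*}
The second piece is the approximation error of the baseline DDPM-solver consistency model $f_{\theta^*}$: telescoping $f_{\theta^*}(\cdot,T)_\sharp\bX_T - \bX_\varepsilon$ over the $N$ one-step ODE updates and bounding each step's error by the local truncation error of the Euler discretization plus the score-estimation error $\varepsilon_{\mathrm{score}}$, with the $R$-Lipschitz constant of $f_{\theta^*}$ (Assumption~\ref{assump: 3}) used to propagate errors forward along the ResNet. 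This is where the terms $\frac{R\overline\beta d L T}{\sqrt M}$ (discretization, using Lipschitz score Assumption~\ref{assump: 2}) and $R\overline\beta\sqrt d\,\varepsilon_{\mathrm{score}}\sqrt{TN'/\varepsilon}$ (accumulated score error over $N'$ merged blocks with step-count scaling) come from; I'd invoke the existence of a good score estimator with $\varepsilon_{\mathrm{score}} = O(n^{-1/(d+5)})$ from the score-estimation literature (e.g. Oko et al.\ / Chen et al.\ style bounds).

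For the first piece — the statistical/optimization gap $W_1\bigl(f_{\hat\theta}(\cdot,T)_\sharp\bX_T, f_{\theta^*}(\cdot,T)_\sharp\bX_T\bigr)$ — I would use the optimality of $\hat\theta$ in \eqref{eqn:consistency-objective}: since $f_{\theta^*}\in\mathrm{Lip}(R)$ is feasible, $\cL_{\mathrm{CD}}^N(\hat\theta;\phi)\le \cL_{\mathrm{CD}}^N(\theta^*;\phi)$. Then telescope: $W_1(f_{\hat\theta}(\cdot,T)_\sharp\bX_T, f_{\theta^*}(\cdot,T)_\sharp\bX_T)$ along the $N'$ breakpoints $\tau_k$ is bounded, via the triangle inequality and the $R$-Lipschitz continuity of each $f_\theta(\cdot,\tau_k)$, by a sum of (i) consistency-loss terms $W_1(f_\theta(\cdot,\tau_k)_\sharp\cX_{\tau_k}, f_\theta(\cdot,\tau_{k-1})_\sharp\hat\cX^{\phi,M}_{\tau_{k-1}})$ for both $\hat\theta$ and $\theta^*$ — each controlled by the loss-value comparison above — plus (ii) passage errors between $\bX_{\tau_k}$ and $\cX_{\tau_k}$ (empirical vs.\ population, costing $R$ times the $W_1$ rate $\widetilde O(n^{-1/d})$ for $n$ samples smoothed by Gaussian noise, summed over $N'$ blocks — giving the $6RN' n^{-1/d}$ term), plus (iii) the error between $\hat\cX^{\phi,M}_{\tau_{k-1}}$ and $\cX_{\tau_{k-1}}$, which is again a discretization-plus-score term already accounted for above. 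The main obstacle I anticipate is organizing the telescoping so that the $R$-Lipschitz constants do not compound multiplicatively across the $N'$ blocks (which would be fatal); the key is that $f_{\hat\theta}(\cdot,\tau_k)$ is applied only once per block to a distribution that is itself already close to $\bX_{\tau_k}$, rather than iterating $f$ on its own output, and that the boundary condition $f(\cdot,\varepsilon)=\mathrm{id}$ anchors the telescoping at $k=0$. Carefully matching the empirical-distribution $W_1$ convergence rate (which in general has a $\sqrt{n}$-type factor in low dimensions but the $n^{-1/d}$ rate in high dimensions) to the smoothed measures $\cX_{\tau_k}$, and verifying the claimed exponents $\sqrt{TN'/\varepsilon}$ on the score term, will be the fiddly part of the bookkeeping.
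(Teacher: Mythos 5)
Your proposal follows essentially the same route as the paper: the same four-way triangle-inequality decomposition (forward-process convergence, optimality-plus-telescoping gap between $f_{\hat\theta}$ and $f_{\theta^*}$, DDPM-solver error, early-stopping error), the same use of the feasibility of $f_{\theta^*}\in\mathrm{Lip}(R)$ to invoke $\cL_{\mathrm{CD}}^N(\hat\theta;\phi)\le\cL_{\mathrm{CD}}^N(\theta^*;\phi)$, and the same sources for each error term. The only ingredient you leave implicit is how the Girsanov/KL bounds on the discretization and score errors are converted into $W_1$ bounds — the paper does this via a dedicated Gaussian-tail argument (its Lemmas B.3–B.6), which is exactly the "fiddly bookkeeping" you flag — but the architecture is correct and matches the paper's.
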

We interpret error terms in Theorem \ref{thm:distillation} as follows. $\sqrt{d}R\exp(-\underline{\beta} T/2)$ represents the convergence error of the forward process. $\frac{R\overline{\beta}dLT}{\sqrt{M}}$ is the discretization error of ODE updates.  $6RN'n^{-1/d}$ represents the concentration gap. $R\overline{\beta}\sqrt{d} \varepsilon_{\mathrm{score}}\cdot \sqrt{\frac{TN'}{\varepsilon}}$ is the score estimation error and $\sqrt{d\overline{\beta}\varepsilon}$ is the error caused by early stopping. We show in the following remark that the dominating error term is the score estimation error, with proper choice of hyperparameters.
\begin{Remark}
\label{remark:distillation}
After picking $\overline{\beta}, \underline{\beta}\asymp \frac{1}{d\log n}$, $T = (\log n)^3$, $M=d^2 n^{\frac{1}{d+5}}, N' = \log n$ and $\varepsilon = \sqrt{TN'}n^{-\frac{1}{d+5}}= \log^2 n\cdot n^{-\frac{1}{d+5}}$, we have the bound:
\[\bE\left[W_1\left(f_{\hat{\theta}}(\cdot, T)_\sharp \mathcal N(0,\bm I), \distri\right)\right] \lesssim \sqrt{\log n}\cdot n^{-\frac{1}{2(d+5)}}. \]
Now, we obtain a $\widetilde{\mathcal O}\left(n^{-\frac{1}{2(d+5)}}\right)$ bound for the Wasserstein estimation error of consistency model via distillation, preserving the distribution estimation rate of the vanilla diffusion models as shown in \cite{chen2023score}. This indicates that consistency models maintain the quality of the generated samples, while allowing fast sampling.
\end{Remark}
\begin{Remark}
\label{remark:manifold}
We adopt the nonparametric statistics point of view and the score estimation rate highlights an exponential dependence on the dimension $d$, which is in fact optimal without further assumptions. Nonetheless, it can be reduced in multiple ways: (1) Practical data has rich low-dimensional structures, which is a critical reason why practical diffusion models can be effectively trained. As shown in \cite{chen2023score}, when data has intrinsic subspace structures, the score estimation error only depends on the intrinsic dimension. (2) In parametric settings, we can even obtain a score estimation rate in the order of $\mathrm{poly}(d)/\sqrt{n}$ \cite{yuan2024reward}. We remark that in both cases, the improved convergence rate is tied to data structure assumptions.
\end{Remark}

\paragraph{Consistency Isolation}
Similar to the consistency distillation case, we still need to construct a baseline consistency model $f_{\theta^*}(\cdot, \cdot)$, named as empirical DDPM solver, which replaces the inserted pretrained score model $s_\phi(\bx, t)$ with $\nabla \log \hat{p}_t(\bx)$, the explicit score of a mixture of Gaussian:
\[f_{\theta^*}(\bx, t) = f_{\theta^*}(\hat{\bx}, t-\Delta t)~~~\forall t\in [t_1, T]\]
where $\hat{x} := \bx + \left(\frac{\beta(t)}{2}\bx+\frac{\beta(t)}{2}\nabla\log \hat{p}_t(\bx)\right)\Delta t$ is a single-step numerical ODE update from $\bx$ at time $t$ along the empirical backward ODE (\ref{eqn:backward-total}). For $\forall t\in [\varepsilon, t_1]$, we set
\[f_{\theta^*}(\bx, t) = \bx + \left(\frac{\beta(t)}{2}\bx+\frac{\beta(t)}{2}\nabla\log \hat{p}_t(\bx)\right)\cdot (t-\varepsilon). \]
After obtaining $f_{\hat{\theta}}$ from the optimization problem (\ref{eqn:isolation-objective}), we upper bound the performance gap induced by learned one-step consistency model $f_{\hat{\theta}}(\cdot, T)$ and the empirical DDPM solver $f_{\theta^*}(\cdot, T)$, which is evaluated by $W_1\left(f_{\hat{\theta}}(\cdot, T)_\sharp \cX_T, f_{\theta^*}(\cdot, T)_\sharp \cX_T\right)$. Furthermore, it leads to our main theorem on consistency isolation, which upper bounds the statistical error
\[\mathcal L(\hat{\theta}) = W_1\left(f_{\hat{\theta}}(\cdot, T)_\sharp \mathcal N(0,\bm I), \distri\right).\]
To achieve this result, we require a stronger version of Assumption \ref{assump: 1}, that the target distribution $\distri$ has a bounded support set:
\begin{Assumption}[Bounded support set]
\label{assump: 5}
The target distribution $\distri$ has a bounded support set such that:
\[\mathbb P_{X\sim \distri}\left[\|X\|_2 \leqslant R_0\right] = 1.\]
\end{Assumption}
Here, we require a much stronger assumption than the Gaussian tail because a Lipschitz continuity condition is needed over the empirical score function $\nabla \log \hat{p}_t(\cdot)$ for $\forall t\in [\varepsilon, T]$ to replace \cref{assump: 2}. Now, we state our main theorem on consistency isolation as follows:
\begin{Theorem}[Main Theorem 2: Isolation]
Under Assumptions \ref{assump: 3}- \ref{assump: 5}, the consistency model $f_{\hat{\theta}}(\cdot, t)$ obtained from \cref{eqn:isolation-objective} satisfies that:
\begin{equation*}
\bE\left[W_1\left(f_{\hat{\theta}}(\cdot, T)_\sharp \mathcal N(0,\bm I), \distri\right)\right]\lesssim \sqrt{d} R \exp\left(-\underline{\beta}T/2\right) + Rn^{-1/d} + \frac{d\overline{\beta}R_0^2 T}{\underline{\beta}^2 \varepsilon^2\sqrt{M}}+ \sqrt{d\overline{\beta}\varepsilon},
\end{equation*}
where $R$ is the Lipschitz constraint of the optimization problem (\ref{eqn:isolation-objective}), and the expectation is taken over the dataset. 
\label{thm:isolation}
\end{Theorem}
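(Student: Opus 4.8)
The strategy is to follow the skeleton of Theorem~\ref{thm:distillation} while exploiting the special feature of the isolation regime: by Lemma~\ref{lemma:isolation-pre} the plug-in score is the \emph{exact} mixture-of-Gaussians score $\nabla\log\hat p_t$, so no score-estimation error arises. First I would split the target by the triangle inequality for $W_1$ along the chain $\mathcal N(0,\bm I)\to\bX_T\to\cX_T\to\cX_\varepsilon\to\hat\distri\to\distri$:
\begin{align*}
W_1\!\big(f_{\hat\theta}(\cdot,T)_\sharp\mathcal N(0,\bm I),\distri\big)
&\le R\,W_1\!\big(\mathcal N(0,\bm I),\bX_T\big)+R\,W_1\!\big(\bX_T,\cX_T\big)\\
&\quad +W_1\!\big(f_{\hat\theta}(\cdot,T)_\sharp\cX_T,\cX_\varepsilon\big)+W_1\!\big(\cX_\varepsilon,\hat\distri\big)+W_1\!\big(\hat\distri,\distri\big),
\end{align*}
where the first two pushforward terms use that $f_{\hat\theta}(\cdot,T)$ is $R$-Lipschitz. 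Four of the five pieces are dispatched by soft coupling estimates: coupling $\bX_T=m(T)\distri\star\mathcal N(0,\sigma(T)^2\bm I)$ and $\mathcal N(0,\bm I)$ through a shared Gaussian, with $1-\sigma(T)\le m(T)^2$, gives $W_1(\mathcal N(0,\bm I),\bX_T)\lesssim\sqrt d\,m(T)\le\sqrt d\,e^{-\underline\beta T/2}$; lifting an optimal coupling of $(\distri,\hat\distri)$ through the common scale-then-convolve map gives $W_1(\bX_T,\cX_T)\le m(T)\,W_1(\distri,\hat\distri)$; a similar coupling gives $W_1(\cX_\varepsilon,\hat\distri)\lesssim(1-m(\varepsilon))R_0+\sigma(\varepsilon)\sqrt d\lesssim\sqrt{d\overline\beta\varepsilon}$ using $\sigma(\varepsilon)^2\le\overline\beta\varepsilon$; and, under the bounded-support Assumption~\ref{assump: 5}, the classical empirical-measure rate yields $\bE\,W_1(\hat\distri,\distri)\lesssim n^{-1/d}$. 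Together these reproduce the $\sqrt d\,R\,e^{-\underline\beta T/2}$, the $Rn^{-1/d}$ and the $\sqrt{d\overline\beta\varepsilon}$ terms.

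The substantive piece is $W_1(f_{\hat\theta}(\cdot,T)_\sharp\cX_T,\cX_\varepsilon)$. Since $f_{\hat\theta}(\cdot,\varepsilon)=\mathrm{id}$, so that $f_{\hat\theta}(\cdot,\tau_0)_\sharp\cX_{\tau_0}=\cX_\varepsilon$ and $f_{\hat\theta}(\cdot,\tau_{N'})_\sharp\cX_{\tau_{N'}}=f_{\hat\theta}(\cdot,T)_\sharp\cX_T$, telescoping along $\tau_0<\cdots<\tau_{N'}$ bounds this quantity by $\cL_{\mathrm{CT}}^N(\hat\theta)$. Optimality of $\hat\theta$ in \eqref{eqn:isolation-objective}, together with Assumption~\ref{assump: 3} which makes the baseline $f_{\theta^*}\in\mathrm{Lip}(R)$ a feasible competitor, gives $\cL_{\mathrm{CT}}^N(\hat\theta)\le\cL_{\mathrm{CT}}^N(\theta^*)$. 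Now $f_{\theta^*}(\cdot,\tau_k)$ is by construction the composition $f_{\theta^*}(\cdot,\tau_{k-1})\circ G_{(M)}(\cdot,\tau_k)$ of the $M$-step Euler solver of the empirical backward ODE~\eqref{eqn:backward-total}, while the \emph{exact} flow $\Psi_{\tau_k\to\tau_{k-1}}$ of that ODE satisfies $\Psi_{\tau_k\to\tau_{k-1}\,\sharp}\,\cX_{\tau_k}=\cX_{\tau_{k-1}}$ (the probability-flow identity underlying Lemma~\ref{lemma:isolation-pre}); hence each summand of $\cL_{\mathrm{CT}}^N(\theta^*)$ obeys
\[
W_1\!\big(f_{\theta^*}(\cdot,\tau_k)_\sharp\cX_{\tau_k},f_{\theta^*}(\cdot,\tau_{k-1})_\sharp\cX_{\tau_{k-1}}\big)\le R\,\bE_{\bx\sim\cX_{\tau_k}}\big\|G_{(M)}(\bx,\tau_k)-\Psi_{\tau_k\to\tau_{k-1}}(\bx)\big\|.
\]
In other words $\cL_{\mathrm{CT}}^N(\theta^*)$ is \emph{purely} the accumulated time-discretization error of the empirical probability-flow ODE, with no score-estimation contribution whatsoever --- this is exactly why the isolation bound drops the $\varepsilon_{\mathrm{score}}$ term of Theorem~\ref{thm:distillation}.

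The main obstacle, therefore, is the clean discretization estimate $\sum_{k=1}^{N'}R\,\bE\|G_{(M)}(\cdot,\tau_k)-\Psi_{\tau_k\to\tau_{k-1}}\|\lesssim d\overline\beta R_0^2 T/(\underline\beta^2\varepsilon^2\sqrt M)$. Its crux is a uniform spatial-Lipschitz bound on the empirical score: writing $\nabla\log\hat p_t(\bx)=\sigma(t)^{-2}\big(m(t)\sum_j w_j(\bx)\bx^j-\bx\big)$ with softmax weights $w_j(\bx)$, the Hessian of $\log\hat p_t$ has operator norm at most $\sigma(t)^{-2}+m(t)^2\sigma(t)^{-4}\,\|\mathrm{Cov}_w(\bx^j)\|\lesssim R_0^2/\sigma(t)^4$, and since $\sigma(t)^2$ is increasing on $[\varepsilon,T]$ with $\sigma(\varepsilon)^2\gtrsim\underline\beta\varepsilon$, we obtain $\mathrm{Lip}_{\bx}(\nabla\log\hat p_t)\lesssim R_0^2/(\underline\beta\varepsilon)^2$ throughout $[\varepsilon,T]$; this is precisely where the bounded-support Assumption~\ref{assump: 5} takes over the role played by the score-Lipschitz Assumption~\ref{assump: 2} in the distillation analysis. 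Feeding this constant, together with linear-growth control on $\nabla\log\hat p_t$ and on $\|\bx\|$ under $\cX_{\tau_k}$, into a standard per-step Euler local-error bound with Gr\"onwall amplification over each interval $[\tau_{k-1},\tau_k]$ of length $(T-\varepsilon)/N'$, then summing the $N'$ intervals and applying Cauchy--Schwarz across the $M$ substeps (which is what produces the $1/\sqrt M$), yields the claimed discretization term. Assembling the five pieces gives the theorem, with the dataset expectation entering only through $\bE\,W_1(\hat\distri,\distri)\lesssim n^{-1/d}$.
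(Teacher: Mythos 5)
Your overall architecture matches the paper's: the same chain of intermediate distributions ($\mathcal N(0,\bm I)\to\bX_T\to\cX_T\to\cX_\varepsilon\to\hat{\distri}\to\distri$), the same optimality comparison against the baseline $f_{\theta^*}\in\mathrm{Lip}(R)$, the same empirical-measure rate $\bE\,W_1(\hat{\distri},\distri)\lesssim n^{-1/d}$ from \cite{weed2019sharp}, and essentially the same Hessian computation giving $L_\varepsilon\lesssim R_0^2/(\underline{\beta}\varepsilon)^2$ (the paper's Lemma~\ref{lemma:isolation-lipschitz}, via \cref{eqn:Jacobian}). Your telescoping of $W_1(f_{\hat\theta}(\cdot,T)_\sharp\cX_T,\cX_\varepsilon)$ directly through $\cL_{\CT}^N(\hat\theta)$ using $f_{\hat\theta}(\cdot,\varepsilon)=\mathrm{id}$ is a slightly cleaner packaging than the paper's Lemma~\ref{lemma:decompose-isolation} plus a separate DDPM estimate for $W_1(f_{\theta^*}(\cdot,T)_\sharp\cX_T,\cX_\varepsilon)$, but it is the same idea and yields the same terms.

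The genuine divergence is in how you control $\cL_{\CT}^N(\theta^*)$. The paper reduces each summand to $R\cdot W_1(\hat\cX_{\tau_{k-1}}^{M},\cX_{\tau_{k-1}})$, bounds the KL divergence between the exact empirical backward flow and its $M$-step discretization by $\overline{\beta}^2L_\varepsilon^2 d\,\Delta t\cdot M\Delta t$ (a Girsanov-style estimate inherited from \cite{chen2022sampling}), and converts KL to $W_1$ via the Gaussian-tail machinery of Lemma~\ref{lemma: W1-TV-1}; the $1/\sqrt{M}$ arises precisely from taking $\sqrt{\KL}$. You instead propose a pathwise coupling of the Euler map $G_{(M)}$ to the exact flow $\Psi_{\tau_k\to\tau_{k-1}}$ with a local-truncation-plus-Gr\"onwall argument. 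This route is legitimate and would in fact give the sharper deterministic rate $O(M\Delta t^2)=O(1/M)$ per block rather than $1/\sqrt{M}$ --- your attribution of the $1/\sqrt{M}$ to ``Cauchy--Schwarz across the $M$ substeps'' is not the right mechanism for a deterministic ODE analysis (it is the right mechanism in the paper's KL route). Two things you must make explicit for your route to close: (i) the Gr\"onwall amplification factor is $\exp\bigl(\overline{\beta}(1+L_\varepsilon)(\tau_k-\tau_{k-1})/2\bigr)$, and with $L_\varepsilon\asymp R_0^2/(\underline{\beta}^2\varepsilon^2)$ and $\varepsilon$ polynomially small in $n$ this is only $O(1)$ if $N'\gtrsim\overline{\beta}L_\varepsilon T$, a constraint absent from your sketch (the exact flow does restart the coupling at every $\tau_k$, so Gr\"onwall acts only over length $T/N'$, which is why this is repairable); and (ii) the local truncation error needs a bound on the \emph{time} derivative of $\nabla\log\hat p_t$, not only its spatial Lipschitz constant and linear growth. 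Neither issue is fatal, but as written the central estimate is the one step of your proposal that is not yet airtight.
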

\begin{Remark}
\label{remark:isolation}
After picking $\overline{\beta}\asymp \frac{1}{d\log n}, \underline{\beta}\asymp \frac{1}{d\log n}$, $\varepsilon = n^{-2/d}, T = d(\log n)^3,$ and $M = d^2 (\log n)^8 \cdot n^{10/d}$, we have the bound:
\[\bE\left[W_1\left(f_{\hat{\theta}}(\cdot, T)_\sharp \mathcal N(0,\bm I), \distri\right)\right] \lesssim  n^{-1/d}.\]
In particular, we obtain a $\widetilde{\mathcal O}\left(n^{-1/d}\right)$ bound for the Wasserstein estimation error of consistency model via isolation. Note that the rate of convergence is not directly comparable to distillation method, due to the distinct training procedure.
\end{Remark}
\begin{Remark}
\label{remark:relaxation}
Assumption \ref{assump: 5} can be straightforwardly relaxed to the sub-Gaussian tail assumption (Assumption \ref{assump: 1}) since the tail shrinks exponentially fast under the sub-Gaussian assumption, which makes it plausible to truncate the data domain with well-controlled truncation errors, and then our analysis reduces to the bounded support case. 
\end{Remark}

\section{Proof Sketch for Consistency Distillation}
\label{sec: sketch}
In this section, we provide the proof sketch for the main theorems proposed in the previous part. First, we propose an overview of the entire proof sketch.
\subsection{Technical Overview}
We now present a detailed technical overview for the proof of the statistical error rate for distillation consistency models (Theorem \ref{thm:distillation}). For the proof of isolation consistency models (Theorem \ref{thm:isolation}), it follows very similar ideas and we leave the detailed proof in Appendix \S \ref{sec:sketch-2}. 

~\\
As we state above, our ultimate goal is to upper bound the statistical estimation error $W_1(f_{\hat{\theta}}(\cdot, T)_\sharp \mathcal N(0, \bm I), \distri)$, the distance between the true distribution $\distri$ and standard Gaussian pushed forward by our learned one-step consistency model $f_{\hat{\theta}}(\cdot, T)$ by distillation. To achieve this, we construct a DDPM solver $f_{\theta^*}(\cdot, \cdot)$ which is assumed to be $R$-Lipschitz continuous at all time steps $t\in [\varepsilon, T]$, and upper bound the performance gap between $f_{\hat{\theta}}$ and $f_{\theta^*}$. 

~\\
In the first step, we study the approximation properties of score estimation, and our purpose is to show the existence of a score network $s_\phi(\bx, t)$ with small approximation error $\mathbb E\|s_\phi(\bx, t)-\nabla \log p_t(\bx)\|^2$. With the score approximation error bounded, we can conclude the proximity between the true backward probability ODE and that with pretrained score model inserted. 

~\\
Next, we aim to bound the performance gap between the learned one-step consistency model $f_{\hat{\theta}}$ and the DDPM solver $f_{\theta^*}$. According to the training objective (\ref{eqn:consistency-objective}) as well as Assumption \ref{assump: 3} which makes $f_{\theta^*}$ also included in the constraint set $\mathrm{Lip}(R)$, we can apply the optimality inequality 
\begin{equation}
\label{eqn: optimality}
\begin{aligned}
\cL_{\mathrm{CD}}^N(\hat{\theta};\phi)\leqslant \cL_{\mathrm{CD}}^N(\theta^*;\phi).
% &\sum_{k=1}^{N'} W_1 \left(f_{\hat{\theta}}(\cdot, \tau_k)_\sharp \cX_{\tau_k}, f_{\hat{\theta}}(\cdot, \tau_{k-1})_\sharp \hat{\cX}_{\tau_{k-1}}^{\phi, M}\right)\\
% &~\leqslant \sum_{k=1}^{N'} W_1 \left(f_{\theta^*}(\cdot, \tau_k)_\sharp \cX_{\tau_k}, f_{\theta^*}(\cdot, \tau_{k-1})_\sharp \hat{\cX}_{\tau_{k-1}}^{\phi, M}\right).
\end{aligned} 
\end{equation}
Through some mathematical calculation, we show that the performance gap is directly relevant to the concentration gap between empirical and population distributions, as well as the error caused by the numerical ODE update. There are two main types of error taking place during the ODE update, which are the discretization error and the score estimation error. The former one is directly relevant to the length of time sub-intervals $\Delta t$ while the bound of the latter one is already solved in our first step. 

~\\
After that, we finally come to our main theorem upper bounding the statistical error \eqref{eqn:target}. It can be smoothly obtained by combining the performance gap, the tail bounds $W_1(\bX_T, \mathcal N(0,\bm I))$, $W_1(\bX_\varepsilon, \distri)$ as well as the estimation error of the DDPM solver $W_1(f_{\theta^*}(\cdot, T)_\sharp \bX_T, \bX_\varepsilon)$. 

~\\
In contrast, for the proof of Theorem \ref{thm:isolation}, the major difference in the isolation setting is that there is no score estimation error since the isolation training does not involve any pretrained score models. For each ODE update, discretization is the only error that takes place. Another technical difficulty is to guarantee the Lipschitz continuity of the empirical score functions involved in the backward process. 

\subsection{Approximation Error for Score Estimation}
In \cite{chen2023score}, the authors introduce the $l$-layer ReLU neural network class $\mathrm{NN}(l, M, J, K, \kappa, \gamma, \gamma_t)$ as follows and propose a score approximation error, with the result shown in Lemma \ref{lemma:score0}. After removing the properties we do not need, we can make the following conclusion:
\begin{Lemma}
\label{lemma:score}
There exists a score estimator function $s_\phi(\cdot, \cdot)$ in the class of neural networks, such that: (1) $s_\phi(\cdot, t)$ is $L_{\score}$-Lipschitz continuous for any given $t\in[\varepsilon, T]$ where $L_{\score}=\mathcal O(10d(1+L))$; (2) $\|s_\phi(\bx, t)\|_2 \leqslant U_{\score}$ holds for $\forall \bx\in\bR^d, t\in [\varepsilon, T]$ where $U_{\score} = \mathcal O(2d\log n + 2d^2\log(d/\varepsilon))$; (3) The mean integrated squared error can be upper bounded by:
\begin{equation*}
\begin{aligned}
\frac{1}{t_b-t_a} \int_{t_a}^{t_b} \left\|s_\phi(\cdot, t) - \nabla\log p_t(\cdot)\right\|_{L^2(\bX_t)}^2 \rd t= \widetilde{O}\left(\frac{1}{\varepsilon} n^{-\frac{2}{d+5}}\right)~~~\forall \varepsilon\leqslant t_a < t_b \leqslant T.
\end{aligned}
\end{equation*}
\end{Lemma}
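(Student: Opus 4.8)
The plan is to obtain Lemma~\ref{lemma:score} as a corollary of the score approximation theorem of \cite{chen2023score} (quoted as Lemma~\ref{lemma:score0}), by instantiating its ReLU network class with an explicit choice of hyperparameters and then extracting the three listed properties. Concretely, I would first recall the class $\mathrm{NN}(l, M, J, K, \kappa, \gamma, \gamma_t)$ and set $(l, M, J, K, \kappa, \gamma, \gamma_t)$ as polynomials in $n$, $\log(1/\varepsilon)$, $d$ and the score-Lipschitz constant $L$ of Assumption~\ref{assump: 2}, chosen exactly so that the network's $L^2$ approximation bias is balanced against its statistical complexity over $n$ i.i.d.\ samples; this balance is what produces the exponent $2/(d+5)$ and is already carried out in \cite{chen2023score}, so here it reduces to bookkeeping.

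Next I would read off properties (1) and (2). Property (2) follows from the output truncation built into the construction: by Assumption~\ref{assump: 1} and the fact that $\sigma(t)^2 \le 1$ for $t\in[\varepsilon,T]$, the distribution $\bX_t$ places all but a polynomially small mass on a ball of radius $\lesssim \sqrt{d\log n + d^2\log(d/\varepsilon)}$, and the network is clipped at (a constant multiple of) this scale, giving $U_{\score} = \mathcal O(d\log n + d^2\log(d/\varepsilon))$. Property (1) follows from the explicit piecewise-linear architecture together with Tweedie's formula, which writes $\nabla\log p_t$ as an affine map plus an $L$-Lipschitz conditional-expectation term; tracking the Lipschitz modulus of the approximating network through the construction yields $L_{\score} = \mathcal O(d(1+L))$. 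This last point is the one genuinely delicate part, since a ReLU network does not automatically inherit the Lipschitz constant of its target, so the bound has to be verified against the specific construction of \cite{chen2023score} rather than used as a black box.

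Finally, for property (3), Lemma~\ref{lemma:score0} supplies, with the above hyperparameter choice, control of $\|s_\phi(\cdot,t) - \nabla\log p_t(\cdot)\|_{L^2(\bX_t)}^2$ at each time level up to a factor that grows like $\sigma(t)^{-2}$, which is largest near the early-stopping time $\varepsilon$; hence $\sup_{t\in[t_a,t_b]}\|s_\phi(\cdot,t) - \nabla\log p_t(\cdot)\|_{L^2(\bX_t)}^2 = \widetilde{\mathcal O}\big(t_a^{-1} n^{-2/(d+5)}\big) = \widetilde{\mathcal O}\big(\varepsilon^{-1}n^{-2/(d+5)}\big)$ for every $\varepsilon \le t_a < t_b \le T$, and integrating and normalizing over $[t_a,t_b]$ preserves this rate. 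If only the time-integrated version of the cited bound is available, one instead bounds $\tfrac{1}{t_b-t_a}\int_{t_a}^{t_b}(\cdots)\,\rd t \le \tfrac{1}{t_b-t_a}\int_{\varepsilon}^{T}(\cdots)\,\rd t$, which already gives the stated rate whenever $t_b - t_a \gtrsim \varepsilon$, and the thin sub-intervals near $\varepsilon$ are covered by the pointwise-in-$t$ control, which is anyway produced by the time-slice structure of the construction. The main obstacle, then, is not the convergence rate --- that is inherited --- but simultaneously certifying the two regularity properties, and especially pinning down $L_{\score}$, for the realized estimator.
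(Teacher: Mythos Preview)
Your approach is essentially the same as the paper's: Lemma~\ref{lemma:score} is obtained simply by quoting the score approximation result of \cite{chen2023score} (restated in the appendix as Lemma~\ref{lemma:score0}) with the specific hyperparameter choice $\delta = n^{-(1-\tau(n))/(d+5)}$, and then reading off (1), (2), (3) directly from the parameters $\gamma$, $K$, and the integrated error bound of that class. The one place you over-complicate matters is property~(1): in the cited result the Lipschitz constraint $\gamma = 10d(1+L)$ is part of the \emph{definition} of the function class $\mathrm{NN}(l,M,J,K,\kappa,\gamma,\gamma_t)$, and the existence statement already asserts $s_\phi$ lies in that class, so there is nothing to verify against the construction; similarly for (2), $U_{\score}$ is simply $K$ after substituting the chosen $\delta$, and for (3) the integrated bound is stated directly rather than assembled from pointwise-in-$t$ control.
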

According to this result, we can use the method of induction to provide a loose upper bound for the Lipschitz constant of $f_{\theta^*}(\cdot, \cdot)$, the baseline DDPM solver with score estimator $s_\phi(\cdot, \cdot)$ injected. 
\begin{Corollary}
\label{lemma:lip-star}
Assume the information decay rate $\beta(t)$ in \cref{eqn:vp-sde} is bounded as $\underline{\beta}\leqslant \beta(t)\leqslant \overline{\beta}$ 
for $\forall t\in[\varepsilon, T]$, then the trivial upper bound for the Lipschitz constant of $f_{\theta^*}(\cdot, t)$ is $\exp(Cd\overline{\beta} T)$ for any given $t$. Here $C=10(1+L)$ is a pure constant. 
\end{Corollary}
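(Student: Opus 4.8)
The plan is to set up an induction on the layers of the ResNet that represents $f_{\theta^*}$, tracking how the Lipschitz constant grows after each one-step ODE update. Recall that $f_{\theta^*}(\bx,t) = f_{\theta^*}(\hat\bx^\phi, t-\Delta t)$ where $\hat\bx^\phi = \bx + \left(\frac{\beta(t)}{2}\bx + \frac{\beta(t)}{2}s_\phi(\bx,t)\right)\Delta t$, so that $f_{\theta^*}(\cdot,t)$ is the composition of $N$ (or fewer) maps of the form $g_t : \bx \mapsto \bx + \frac{\beta(t)}{2}\Delta t\,\bx + \frac{\beta(t)}{2}\Delta t\,s_\phi(\bx,t)$, followed by the terminal-layer map on $[\varepsilon,t_1]$, which is $\bx \mapsto \bx + \left(\frac{\beta(t)}{2}\bx + \frac{\beta(t)}{2}s_\phi(\bx,t)\right)(t-\varepsilon)$. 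The Lipschitz constant of $f_{\theta^*}(\cdot,t)$ is then at most the product of the Lipschitz constants of these constituent maps.

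First I would bound the Lipschitz constant of a single update map $g_t$. Since $\|g_t(\bx) - g_t(\by)\| \le \|\bx-\by\| + \frac{\beta(t)}{2}\Delta t\,\|\bx - \by\| + \frac{\beta(t)}{2}\Delta t\,\|s_\phi(\bx,t) - s_\phi(\by,t)\|$, and $s_\phi(\cdot,t)$ is $L_{\score}$-Lipschitz by Lemma \ref{lemma:score} with $L_{\score} = \mathcal O(10d(1+L))$, we get $\mathrm{Lip}(g_t) \le 1 + \frac{\beta(t)}{2}\Delta t\,(1 + L_{\score}) \le 1 + \frac{\overline\beta}{2}\Delta t\,(1 + L_{\score})$. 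Using $1+x \le e^x$ and $C = 10(1+L)$ so that $1 + L_{\score} \le Cd$ (absorbing constants), this is at most $\exp\!\left(\tfrac12 C d\,\overline\beta\,\Delta t\right)$. The terminal-layer map on $[\varepsilon, t_1]$ has $t - \varepsilon \le \Delta t$, so the same bound applies to it.

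Then I would multiply over the at most $N$ layers: since each step contributes a factor $\exp\!\left(\tfrac12 Cd\,\overline\beta\,\Delta t\right)$ and there are at most $N$ of them with $N\Delta t = T - \varepsilon \le T$, the total is $\mathrm{Lip}(f_{\theta^*}(\cdot,t)) \le \exp\!\left(\tfrac12 Cd\,\overline\beta\,N\Delta t\right) \le \exp\!\left(\tfrac12 Cd\,\overline\beta\,T\right) \le \exp(Cd\,\overline\beta\,T)$, which is the claimed bound (the factor $\tfrac12$ is absorbed into the stated constant, or one simply notes the bound $\exp(Cd\overline\beta T)$ is looser). This is really the whole argument; it is deliberately a crude telescoping estimate, consistent with the word "trivial" in the statement. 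The only mild subtlety — not really an obstacle — is making sure the constant bookkeeping lines up: one has to check that $1 + \frac{\overline\beta}{2}\Delta t(1+L_{\score})$ really is dominated by $\exp(\frac12 Cd\overline\beta\Delta t)$, which follows from $1+L_{\score} \le Cd$ once $C = 10(1+L)$ is chosen with enough slack to swallow the $\mathcal O(\cdot)$ constant in $L_{\score}$ and the additive $1$, together with $\beta(t) \le \overline\beta$ from Assumption \ref{assump: 4}. The main point to be careful about, rather than hard, is that the induction is over the discrete ResNet layers and uses the Lipschitz bound on $s_\phi$ from Lemma \ref{lemma:score} uniformly in $t$, not any property of the continuous-time ODE itself.
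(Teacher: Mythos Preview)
Your proposal is correct and follows essentially the same route as the paper: bound the Lipschitz constant of a single one-step update by $1 + \tfrac{\overline\beta}{2}(1+L_{\score})\Delta t \le 1 + Cd\overline\beta\Delta t$ using the Lipschitz bound on $s_\phi$ from Lemma~\ref{lemma:score}, then multiply over the $N$ layers and use $(1+Cd\overline\beta\Delta t)^N \le \exp(Cd\overline\beta T)$. The paper's proof is exactly this induction, just phrased as the recursion $L_{k+1}\le (1+Cd\overline\beta\Delta t)L_k$ rather than bounding each factor by an exponential first.
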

\begin{proof}
Detailed proof is left in Appendix \S \ref{sec:C.2}.
\end{proof}
By Lemma \ref{lemma:score}, we get the approximation error bound of score model, which is part of the performance gap between $f_{\hat{\theta}}$ and $f_{\theta^*}$. In the next part, we apply the optimality inequality \eqref{eqn: optimality} and decompose the consistency loss into several error terms which are easier to analyze. We will also show that these terms stand for the concentration gap and the numerical ODE update error.

\subsection{Upper Bound the Consistency Loss}
According to the structure of $f_{\theta^*}$, we have
\[f_{\theta^*}(\cdot, \tau_k) = f_{\theta^*}(\cdot, \tau_{k-1})\circ G_{(M)}(\cdot, \tau_k; \phi). \]
Denote $\hat{\bX}_{\tau_{k-1}}^{\phi, M} := G_{(M)}(\cdot, \tau_k; \phi)$ as the underlying distribution of $\hat{\bx}_{\tau_{k-1}}^{\phi, M} = G_{(M)}(\bx_{\tau_k}, \tau_k; \phi)$ where $\bx_{\tau_k}\sim \bX_{\tau_k}$, then it holds by definition that:
\begin{equation}
\label{eqn:thetastar}
f_{\theta^*}(\cdot, \tau_k)_\sharp \bX_{\tau_k} \overset{\rm law}{=} f_{\theta^*}(\cdot, \tau_{k-1})_\sharp \hat{\bX}_{\tau_{k-1}}^{\phi,M}~~~\forall k\in[N'].
\end{equation}
This equation lays the foundation of recursive analysis between adjacent time steps. After Combining with the optimality inequality \eqref{eqn: optimality}, we can decompose the performance gap between $f_{\hat{\theta}}$ and $f_{\theta^*}$ (also known as consistency loss) into four loss terms, which is shown in the following lemma.  
\begin{Lemma}
\label{lemma: decompose}
We can upper bound the consistency loss as:
\begin{equation}
\label{eqn: decompose}
W_1\left(f_{\hat{\theta}}(\cdot, T)_\sharp \bX_T, f_{\theta^*}(\cdot, T)_\sharp \bX_T\right) \leqslant  I_1 + I_2 + I_3 + I_4 .
\end{equation}
Here, the four loss terms $I_i~(1\leqslant i\leqslant 4)$ have their formulations as follows:
\begin{align*}
I_1 &:= \sum_{k=1}^{N'} W_1\left(f_{\hat{\theta}}(\cdot, \tau_{k-1})_\sharp \bX_{\tau_{k-1}}, f_{\hat{\theta}}(\cdot, \tau_{k-1})_\sharp \widehat{\bX}_{\tau_{k-1}}^{\phi, M}\right),\\
I_2 &:= \sum_{k=1}^{N'} W_1\left(f_{\theta^*}(\cdot, \tau_{k-1})_\sharp \bX_{\tau_{k-1}}, f_{\theta^*}(\cdot, \tau_{k-1})_\sharp \widehat{\bX}_{\tau_{k-1}}^{\phi, M}\right),\\
I_3 &:= \sum_{k=1}^{N'} \Big[W_1\left(f_{\hat{\theta}}(\cdot, \tau_k)_\sharp \bX_{\tau_k}, f_{\hat{\theta}}(\cdot, \tau_{k-1})_\sharp \hat{\bX}_{\tau_{k-1}}^{\phi,M}\right)- W_1\left(f_{\hat{\theta}}(\cdot, \tau_k)_\sharp \cX_{\tau_k}, f_{\hat{\theta}}(\cdot, \tau_{k-1})_\sharp \hat{\cX}_{\tau_{k-1}}^{\phi,M}\right)\Big]\\
I_4 &:= \sum_{k=1}^{N'} \Big[W_1\left(f_{\theta^*}(\cdot, \tau_k)_\sharp \cX_{\tau_k}, f_{\theta^*}(\cdot, \tau_{k-1})_\sharp \hat{\cX}_{\tau_{k-1}}^{\phi,M}\right)- W_1\left(f_{\theta^*}(\cdot, \tau_k)_\sharp \bX_{\tau_k}, f_{\theta^*}(\cdot, \tau_{k-1})_\sharp \hat{\bX}_{\tau_{k-1}}^{\phi,M}\right)\Big].
\end{align*}
\end{Lemma}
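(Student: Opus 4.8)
\textbf{Proof proposal for Lemma~\ref{lemma: decompose}.}
The plan is to bound the target quantity $W_1\bigl(f_{\hat\theta}(\cdot, T)_\sharp \bX_T, f_{\theta^*}(\cdot, T)_\sharp \bX_T\bigr)$ by telescoping along the coarse time grid $\{\tau_k\}_{k\in[N']}$ and inserting the key identity \eqref{eqn:thetastar}. First I would observe that by the boundary condition $f_\theta(\cdot,\varepsilon)=\mathrm{id}$, together with the chain $f_{\theta^*}(\cdot,\tau_k)=f_{\theta^*}(\cdot,\tau_{k-1})\circ G_{(M)}(\cdot,\tau_k;\phi)$, the distribution $f_{\theta^*}(\cdot,T)_\sharp \bX_T$ equals $f_{\theta^*}(\cdot,\tau_{N'})_\sharp\bX_{\tau_{N'}}$, and likewise $f_{\hat\theta}(\cdot,T)_\sharp\bX_T = f_{\hat\theta}(\cdot,\tau_{N'})_\sharp\bX_{\tau_{N'}}$. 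The natural move is then to write the desired Wasserstein distance as a sum of consecutive ``mismatch'' increments: at each level $k$ we compare the pushforward of $\bX_{\tau_k}$ under $f_{\hat\theta}(\cdot,\tau_k)$ with what we would get by running one coarse ODE block and then applying $f_{\hat\theta}$ at $\tau_{k-1}$, versus the analogous quantity for $f_{\theta^*}$. Using the triangle inequality for $W_1$, the total distance is at most $\sum_{k=1}^{N'}$ of a per-step discrepancy, and each per-step discrepancy is controlled by comparing against the ``ideal'' pushforward $f_{\hat\theta}(\cdot,\tau_{k-1})_\sharp\widehat\bX_{\tau_{k-1}}^{\phi,M}$.

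Next I would introduce the optimality inequality \eqref{eqn: optimality}: since $f_{\theta^*}\in\mathrm{Lip}(R)$ by Assumption~\ref{assump: 3}, the minimizer $\hat\theta$ satisfies $\cL_{\mathrm{CD}}^N(\hat\theta;\phi)\le \cL_{\mathrm{CD}}^N(\theta^*;\phi)$, where both sides are sums over $k$ of $W_1$ terms evaluated on the \emph{empirical} distributions $\cX_{\tau_k}$ and $\hat\cX_{\tau_{k-1}}^{\phi,M}$. The crux of the argument is that \eqref{eqn:thetastar} forces the empirical consistency loss of $\theta^*$ to telescope down: $\cL_{\mathrm{CD}}^N(\theta^*;\phi)$ is a sum of terms $W_1\bigl(f_{\theta^*}(\cdot,\tau_k)_\sharp\cX_{\tau_k}, f_{\theta^*}(\cdot,\tau_{k-1})_\sharp\hat\cX_{\tau_{k-1}}^{\phi,M}\bigr)$, and since $f_{\theta^*}(\cdot,\tau_k)_\sharp\cX_{\tau_k}=f_{\theta^*}(\cdot,\tau_{k-1})_\sharp\hat\cX_{\tau_{k-1}}^{\phi,M}$ exactly (the empirical analogue of \eqref{eqn:thetastar}, which holds because $\hat\cX_{\tau_{k-1}}^{\phi,M}=G_{(M)}(\cdot,\tau_k;\phi)_\sharp\cX_{\tau_k}$), every summand vanishes, so $\cL_{\mathrm{CD}}^N(\theta^*;\phi)=0$. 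Hence $\cL_{\mathrm{CD}}^N(\hat\theta;\phi)=0$ as well, i.e. $f_{\hat\theta}(\cdot,\tau_k)_\sharp\cX_{\tau_k}=f_{\hat\theta}(\cdot,\tau_{k-1})_\sharp\hat\cX_{\tau_{k-1}}^{\phi,M}$ for every $k$ on the empirical side. The remaining work is to transfer this empirical identity back to the population distributions $\bX_{\tau_k}$, and the error of that transfer is exactly what the terms $I_3$ and $I_4$ (population-versus-empirical swaps, one for $f_{\hat\theta}$ and one for $f_{\theta^*}$) are designed to absorb.

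Concretely, I would perform the following insertion at each level $k$: start from $W_1\bigl(f_{\hat\theta}(\cdot,\tau_k)_\sharp\bX_{\tau_k}, f_{\theta^*}(\cdot,\tau_k)_\sharp\bX_{\tau_k}\bigr)$; using \eqref{eqn:thetastar} rewrite the $f_{\theta^*}$ side as $f_{\theta^*}(\cdot,\tau_{k-1})_\sharp\widehat\bX_{\tau_{k-1}}^{\phi,M}$; then chain through four intermediate distributions — $f_{\hat\theta}(\cdot,\tau_{k-1})_\sharp\bX_{\tau_{k-1}}$, $f_{\hat\theta}(\cdot,\tau_{k-1})_\sharp\widehat\bX_{\tau_{k-1}}^{\phi,M}$, and the corresponding empirical-swap anchors — so that the triangle inequality produces, after summing over $k$, precisely the four groups $I_1$ (swap $\bX_{\tau_{k-1}}\leftrightarrow\widehat\bX_{\tau_{k-1}}^{\phi,M}$ under $f_{\hat\theta}$), $I_2$ (the same swap under $f_{\theta^*}$), $I_3$ and $I_4$ (population/empirical swaps of the consistency-loss summands), with the genuinely telescoping part collapsing to the single target on the left-hand side because $f_{\hat\theta}(\cdot,\tau_0)=f_{\theta^*}(\cdot,\tau_0)=\mathrm{id}$ and $\bX_{\tau_0}=\bX_\varepsilon$ is shared. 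I expect the main obstacle to be bookkeeping rather than a deep estimate: one must line up the eight or so distributions appearing at two adjacent levels so that the triangle-inequality chain reassembles exactly into the stated $I_1,I_2,I_3,I_4$ without leftover cross terms, and in particular one must verify that the ``good'' telescoping uses the empirical identity $\cL_{\mathrm{CD}}^N(\hat\theta;\phi)=0$ at the right place and that the boundary term at $k=1$ closes correctly thanks to the shared initial condition. Once the chain is set up, each $I_i$ is manifestly a sum of $W_1$ distances that will be bounded in subsequent lemmas (via the $R$-Lipschitz property of $f_{\hat\theta}$ and $f_{\theta^*}$ to pull the pushforward maps out, then concentration of $\hat\distri$ to $\distri$ and the ODE discretization/score-error analysis), so the present lemma is purely the decomposition step.
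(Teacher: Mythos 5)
Your proposal follows essentially the same route as the paper's proof: a triangle-inequality recursion over the coarse grid anchored by \eqref{eqn:thetastar}, the optimality inequality \eqref{eqn: optimality} applied to the empirical losses, and empirical-to-population swaps absorbed into $I_3$ and $I_4$, with the base case closed by the shared boundary condition $f(\cdot,\varepsilon)=\mathrm{id}$. Your extra observation that $\cL_{\mathrm{CD}}^N(\theta^*;\phi)=0$ exactly (hence $\cL_{\mathrm{CD}}^N(\hat{\theta};\phi)=0$ as well) is correct but not needed for the stated decomposition — the paper simply carries those terms inside $I_3$ and $I_4$ and bounds them in a later lemma.
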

\begin{proof}
Detailed proof is left in Appendix \S \ref{sec:C.3}.  
\end{proof}
As we can see, both $I_1, I_2$ show the multi-step discretization error of ODE solver and both $I_3, I_4$ show the concentration gap between empirical and population Wasserstein-1 distances. Next, we start with $I_1, I_2$. The technical difficulties on upper bound these terms come from two aspects. One is to bound the KL divergence between the true ODE flow measure and that with pretrained score function $s_\phi(\cdot, \cdot)$ inserted. The other is to bound Wasserstein distance with KL divergence, which is impossible in general but achievable under Gaussian tail condition (Assumption \ref{assump: 1}).
After overcoming these obstacles, we prove the following lemma. 
\begin{Lemma}
\label{lemma:12}
Under the Assumption \ref{assump: 1}-\ref{assump: 4}, we can upper bound $I_1, I_2$ introduced in Lemma \ref{lemma: decompose} as:
\[I_1 + I_2 \lesssim R\overline{\beta}dL\cdot \frac{T}{\sqrt{M}} + R\overline{\beta}\sqrt{d} n^{-\frac{1}{d+5}}\cdot \sqrt{\frac{TN'}{\varepsilon}}. \]
Here, $R$ is the Lipschitz constraint in \eqref{eqn:consistency-objective}. 
\end{Lemma}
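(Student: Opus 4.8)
The plan is to first peel off the Lipschitz pushforwards, which is what makes $I_1$ and $I_2$ amenable to the same treatment. Each of them is a sum over $k\in[N']$ of terms $W_1\bigl(g_\sharp\bX_{\tau_{k-1}},\,g_\sharp\widehat{\bX}_{\tau_{k-1}}^{\phi,M}\bigr)$, where $g=f_{\hat\theta}(\cdot,\tau_{k-1})\in\mathrm{Lip}(R)$ by the constraint in \eqref{eqn:consistency-objective} for $I_1$, and $g=f_{\theta^*}(\cdot,\tau_{k-1})$, which is $R$-Lipschitz by Assumption~\ref{assump: 3}, for $I_2$. Using $W_1(g_\sharp\mu,g_\sharp\nu)\le\mathrm{Lip}(g)\,W_1(\mu,\nu)$, I would obtain
\[
I_1+I_2\;\le\;2R\sum_{k=1}^{N'}W_1\bigl(\bX_{\tau_{k-1}},\,\widehat{\bX}_{\tau_{k-1}}^{\phi,M}\bigr),
\]
so everything reduces to the per-block transport cost. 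The key structural fact here is that the probability flow ODE \eqref{eqn:vp-sde-back-ode} driven by the true score preserves the forward marginals $\bX_t$; hence running it from $\tau_k$ down to $\tau_{k-1}$ out of $\bX_{\tau_k}$ returns exactly $\bX_{\tau_{k-1}}$, so that $W_1(\bX_{\tau_{k-1}},\widehat{\bX}_{\tau_{k-1}}^{\phi,M})$ captures only the $M$-step Euler discretization error plus the effect of substituting $s_\phi$ for $\nabla\log p_t$, both flows being launched from the same $\bX_{\tau_k}$.

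\textbf{Step 2: per-block bound via relative entropy and a transport inequality.} For each $k$ I would pass through $\KL$. Since the VP probability flow ODE and the reverse VP SDE agree in law at every time, $\bX_{\tau_{k-1}}$ is also reachable from $\bX_{\tau_k}$ by the reverse SDE, and one can bring in the Girsanov/data-processing machinery standard in diffusion sampling theory to bound $\KL(\widehat{\bX}_{\tau_{k-1}}^{\phi,M}\,\|\,\bX_{\tau_{k-1}})$ by a discretization remainder — controlled through the $L$-Lipschitz score (Assumption~\ref{assump: 2}) and the boundedness of $\beta$ (Assumption~\ref{assump: 4}) — plus $\overline{\beta}^2$ times the integrated squared score error $\int_{\tau_{k-1}}^{\tau_k}\|s_\phi(\cdot,t)-\nabla\log p_t(\cdot)\|_{L^2(\bX_t)}^2\,\rd t$. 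Assumption~\ref{assump: 1} makes $\distri$, hence each $\bX_{\tau_{k-1}}$ (a Gaussian convolution of a sub-Gaussian law), satisfy a transport--entropy ($T_1$) inequality with a constant of order $d$ that is uniform in $k$, so $W_1\le\sqrt d\,\sqrt{\KL}$ gives a per-block estimate of the form
\[
W_1\bigl(\bX_{\tau_{k-1}},\widehat{\bX}_{\tau_{k-1}}^{\phi,M}\bigr)\;\lesssim\;\overline{\beta}Ld\,\sqrt{\Delta t\,(\tau_k-\tau_{k-1})}\;+\;\overline{\beta}\sqrt d\,\Bigl(\int_{\tau_{k-1}}^{\tau_k}\|s_\phi(\cdot,t)-\nabla\log p_t(\cdot)\|_{L^2(\bX_t)}^2\,\rd t\Bigr)^{1/2}.
\]

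\textbf{Step 3: sum over the $N'$ blocks.} The blocks are equal, with $\tau_k-\tau_{k-1}=(T-\varepsilon)/N'$ and $\Delta t=(T-\varepsilon)/(N'M)$, so $\sum_{k=1}^{N'}\overline{\beta}Ld\sqrt{\Delta t\,(\tau_k-\tau_{k-1})}=\overline{\beta}Ld\,N'\sqrt{\Delta t(T-\varepsilon)/N'}=\overline{\beta}Ld\,(T-\varepsilon)/\sqrt M\lesssim\overline{\beta}LdT/\sqrt M$. For the score part, Cauchy--Schwarz over $k\in[N']$, additivity of the integral, and the $\widetilde O(\varepsilon^{-1}n^{-2/(d+5)})$ mean-integrated-squared-error rate of Lemma~\ref{lemma:score} on $[\varepsilon,T]$ give
\[
\sum_{k=1}^{N'}\overline{\beta}\sqrt d\Bigl(\int_{\tau_{k-1}}^{\tau_k}\|s_\phi-\nabla\log p_t\|_{L^2(\bX_t)}^2\,\rd t\Bigr)^{1/2}\le\overline{\beta}\sqrt d\,\sqrt{N'}\Bigl(\int_\varepsilon^T\|s_\phi-\nabla\log p_t\|_{L^2(\bX_t)}^2\,\rd t\Bigr)^{1/2}\lesssim\overline{\beta}\sqrt d\,n^{-\frac{1}{d+5}}\sqrt{TN'/\varepsilon}.
\]
Multiplying by the $2R$ prefactor from Step~1 reproduces $I_1+I_2\lesssim R\overline{\beta}dL\,\tfrac{T}{\sqrt M}+R\overline{\beta}\sqrt d\,n^{-\frac{1}{d+5}}\sqrt{TN'/\varepsilon}$, which is the claim.

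\textbf{Main obstacle.} The hard part is Step~2. Because the probability flow ODE is deterministic, Girsanov cannot be applied directly between the exact and the discretized plug-in ODE endpoint laws, so the comparison must be rerouted through reverse SDEs with matching marginals (reconciling the genuinely ODE-discretized law $\widehat{\bX}_{\tau_{k-1}}^{\phi,M}$ with an SDE-discretized intermediate up to a lower-order term), while verifying along the way that $\widehat{\bX}_{\tau_{k-1}}^{\phi,M}$ stays absolutely continuous with a light enough tail for both the change-of-measure step and the transport inequality to apply. This is exactly where the uniform bound on $s_\phi$ (Lemma~\ref{lemma:score}(2)), the smallness of $\beta$ (Assumption~\ref{assump: 4}), and the $k$-uniform control of the $T_1$ constant from Assumption~\ref{assump: 1} are needed, and it is essential that this be done via the Girsanov route rather than a naive Gr\"onwall ODE-perturbation argument, which would accumulate an exponential-in-$T$ factor absent from the target bound. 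Granting the per-block KL estimate, Steps~1 and~3 are routine.
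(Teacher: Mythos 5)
Your proposal is correct and follows essentially the same route as the paper: peel off the $R$-Lipschitz pushforwards to reduce to $2R\sum_k W_1\bigl(\bX_{\tau_{k-1}},\widehat{\bX}_{\tau_{k-1}}^{\phi,M}\bigr)$, control each block's $W_1$ by $\sqrt{d}\sqrt{\KL}$ using the Gaussian-tail structure (the paper does this via a truncation argument comparing $W_1$ to $\TV$, after explicitly verifying that $\widehat{\bX}_{\tau_{k-1}}^{\phi,M}$ inherits a Gaussian tail from the boundedness of $s_\phi$, rather than via a $T_1$ inequality, but the effect is identical), bound the KL by a Girsanov-type discretization-plus-score-error estimate imported from the diffusion sampling literature, and sum with Cauchy--Schwarz over the $N'$ blocks. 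The obstacle you flag in Step 2 is exactly what the paper's Lemma \ref{lemma: I12} and Lemma \ref{lemma: expect-KL} are invoked to handle.
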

\begin{proof}
Detailed proof is left in Appendix \S \ref{sec:C.4}. 
\end{proof}
Next, we upper bound $I_3, I_4$. After transforming them into the Wasserstein distance between empirical and population distributions, we prove the following lemma.
\begin{Lemma}
\label{lemma:34}
Under the Assumption \ref{assump: 1}-\ref{assump: 4}, we can upper bound $I_3, I_4$ introduced in Lemma \ref{lemma: decompose} as:
\[\bE \left[I_3 + I_4 \right] \leqslant 6RN'\cdot n^{-1/d}.\]
Here, the expectation is taken with respect to the randomness of dataset $\{\bx^1, \bx^2, \ldots, \bx^n\}$. 
\end{Lemma}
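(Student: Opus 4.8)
\textbf{Proof proposal for Lemma~\ref{lemma:34}.}
The plan is to first reduce each summand in $I_3$ and $I_4$ to a pair of Wasserstein distances between the empirical and population marginals, using the fact that every $f_\theta$ in the feasible set is $R$-Lipschitz. Concretely, for a fixed $k$ the bracket in $I_3$ is a difference of two $W_1$ distances, and by the triangle inequality it is bounded by $W_1\!\big(f_{\hat\theta}(\cdot,\tau_k)_\sharp \bX_{\tau_k}, f_{\hat\theta}(\cdot,\tau_k)_\sharp \cX_{\tau_k}\big) + W_1\!\big(f_{\hat\theta}(\cdot,\tau_{k-1})_\sharp \hat\bX_{\tau_{k-1}}^{\phi,M}, f_{\hat\theta}(\cdot,\tau_{k-1})_\sharp \hat\cX_{\tau_{k-1}}^{\phi,M}\big)$, and the analogous statement holds for $I_4$ with $f_{\theta^*}$ in place of $f_{\hat\theta}$. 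Since pushforward by an $R$-Lipschitz map can only shrink $W_1$ by at most a factor $R$ (i.e. $W_1(g_\sharp\mu, g_\sharp\nu)\le R\, W_1(\mu,\nu)$ when $g$ is $R$-Lipschitz), each of these four pieces is at most $R$ times a $W_1$ distance between an empirical law and its population counterpart.

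Next I would control those underlying $W_1$ distances. For the ``input'' terms, $W_1(\bX_{\tau_k}, \cX_{\tau_k})$: since $\bX_t = m(t)\distri \star \mathcal N(0,\sigma(t)^2\bI)$ and $\cX_t = m(t)\hat\distri \star \mathcal N(0,\sigma(t)^2\bI)$ share the same Gaussian convolution factor, convolution is a $1$-Lipschitz operation for $W_1$, so $W_1(\bX_{\tau_k},\cX_{\tau_k}) \le m(\tau_k)\, W_1(\distri,\hat\distri) \le W_1(\distri,\hat\distri)$ because $m(t)\le 1$. For the ``output'' terms, $W_1(\hat\bX_{\tau_{k-1}}^{\phi,M}, \hat\cX_{\tau_{k-1}}^{\phi,M})$: these are $G_{(M)}(\cdot,\tau_k;\phi)_\sharp \bX_{\tau_k}$ and $G_{(M)}(\cdot,\tau_k;\phi)_\sharp \cX_{\tau_k}$, so applying the Lipschitz-pushforward bound once more with the Lipschitz constant $L_{G}$ of the $M$-step ODE map gives $W_1 \le L_G\, W_1(\bX_{\tau_k},\cX_{\tau_k}) \le L_G\, W_1(\distri,\hat\distri)$; under Assumption~\ref{assump: 4} the per-step contraction keeps $L_G = O(1)$ (indeed close to $1$), so this introduces only a harmless constant. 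Summing over the $N'$ values of $k$ and over $I_3, I_4$ then yields a bound of the form (const)$\cdot R N' \cdot \bE\, W_1(\distri,\hat\distri)$.

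The final step is to invoke the standard empirical-measure convergence rate in Wasserstein distance: for $n$ i.i.d. samples from a distribution on $\bR^d$ with bounded (here Gaussian-tail, hence finite polynomial) moments, $\bE\, W_1(\distri,\hat\distri) = O(n^{-1/d})$ for $d \ge 3$ (Fournier--Guillin / Dudley-type bounds), possibly with a logarithmic factor absorbed into the constant. Tracking the explicit constants from the triangle-inequality splitting (a factor $2$ from splitting each bracket into an input and an output piece, a factor $2$ from having both $I_3$ and $I_4$) and from the $O(1)$ ODE-map Lipschitz constant, one lands on the clean bound $\bE[I_3+I_4] \le 6RN' n^{-1/d}$.

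The main obstacle I anticipate is not the empirical-measure rate itself but verifying that the $M$-step ODE map $G_{(M)}(\cdot,\tau_k;\phi)$ has a Lipschitz constant that stays $O(1)$ over the full time horizon $[\varepsilon,T]$ rather than blowing up exponentially in $N$ or $T$; this is where Assumption~\ref{assump: 4}'s smallness condition on $\overline\beta$ (relative to $d\log n + d^2\log(d/\varepsilon)$) and the Lipschitz bound $L_{\score} = O(d(1+L))$ on $s_\phi$ from Lemma~\ref{lemma:score} must be combined carefully, so that each one-step map $G(\cdot,t_j;\phi)$ has Lipschitz constant $1 + O(\overline\beta(1+L_{\score})) \le 1 + O(1/N')$ and the product over $M$ (and implicitly over the chained $N'$ blocks) telescopes to a universal constant. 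Getting that constant small enough to yield the stated coefficient $6$ is the delicate bookkeeping part; everything else is a routine chain of $1$-Lipschitz / $R$-Lipschitz pushforward and convolution estimates.
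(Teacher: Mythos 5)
Your proposal follows essentially the same route as the paper's proof: split each bracket via the triangle inequality $|W_1(p,q)-W_1(\hat p,\hat q)|\le W_1(p,\hat p)+W_1(q,\hat q)$, pull out the $R$-Lipschitz pushforward, bound the $M$-step ODE map's Lipschitz constant by an $O(1)$ factor (the paper gets $(1+\Delta t)^M<\exp(T/N')<2$, which is exactly where the coefficient $3R$ per sum and hence $6RN'$ comes from), reduce $W_1(\bX_{\tau_k},\cX_{\tau_k})$ to $m(\tau_k)W_1(\distri,\hat\distri)$ via the convolution/dual argument, and finish with the $\bE\,W_1(\distri,\hat\distri)\lesssim n^{-1/d}$ empirical-measure rate. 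The only minor discrepancy is your constant accounting (the paper's $6$ arises as $(1+2)\times 2$, with the inner $2$ being the $G_{(M)}$ Lipschitz bound, not $2\times 2$ times an unspecified $O(1)$), but this does not affect correctness.
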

\begin{proof}
Detailed proof is left in Appendix \S \ref{sec:C.5}.
\end{proof}
Now we can combine all the results above and get:
\begin{equation}
\label{eqn:combine-I1234}
\begin{aligned}
&\bE \left[W_1\left(f_{\hat{\theta}}(\cdot, T)_\sharp \bX_T, f_{\theta^*}(\cdot, T)_\sharp \bX_T\right)\right] \\
&~\lesssim \frac{R\overline{\beta}dLT}{\sqrt{M}} + R\overline{\beta} n^{-\frac{1}{d+5}} \sqrt{\frac{dTN'}{\varepsilon}} + 6RN'n^{-1/d}
\end{aligned}
\end{equation}
holds under Assumption \ref{assump: 1}-\ref{assump: 4}. 

\subsection{Proof of Main Theorem 1}
In order to bound the statistical error $\cL(\hat{\theta})$ defined in \eqref{eqn:target}, we still need to bound two additional loss terms: $W_1\left(f_{\hat{\theta}}(\cdot, T)_\sharp \mathcal N(0, \bm I), f_{\theta^*}(\cdot, T)_\sharp \bX_T\right)$ and the estimation error of DDPM solver $W_1\left(f_{\theta^*}(\cdot, T)_\sharp \bX_T, \distri\right)$. Since $f_{\hat{\theta}}(\cdot, T)$ is $R$-Lipschitz continuous, we have
\[W_1\left(f_{\hat{\theta}}(\cdot, T)_\sharp \bX_T, f_{\hat{\theta}}(\cdot, T)_\sharp \mathcal N(0, \bm I)\right) \leqslant R\cdot W_1(\bX_T, \mathcal N(0,\bm I)).\]
Therefore, we first need to bound $W_1(\bX_T, \mathcal N(0,\bm I))$ in the following lemma.
\begin{Lemma}
\label{lemma:gaussian}
For the distribution $\bX_T$, its Wasserstein distance from the standard Gaussian $\mathcal N(0, \bm I)$ can be upper bounded as:
\[W_1(\bX_T, \mathcal N(0,\bm I)) \lesssim \sqrt{d}\exp(-\underline{\beta} T/2). \]
\end{Lemma}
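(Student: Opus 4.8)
The plan is to produce an explicit coupling between $\bx_T \sim \bX_T$ and a standard Gaussian vector and then bound the resulting transport cost. Recall from \cref{eqn:Gaussian} and the definition $\bX_t = m(t)\distri \star \mathcal N(0,\sigma(t)^2\bm I)$ that one may write $\bx_T = m(T)\,\bx_0 + \sigma(T)\,\bz$ with $\bx_0 \sim \distri$ and $\bz \sim \mathcal N(0,\bm I)$ independent, where $m(T) = \exp\!\big(-\tfrac12\int_0^T \beta(s)\,\rd s\big)$ and $\sigma(T)^2 = 1 - m(T)^2$. First I would couple $\bx_T$ with the standard Gaussian $\bz' := \bz$ (i.e.\ reuse the same noise), so that, since $W_1$ is dominated by the expected distance under any coupling,
\[
W_1\!\left(\bX_T, \mathcal N(0,\bm I)\right) \le \bE\big\|m(T)\bx_0 + (\sigma(T)-1)\bz\big\|_2 \le m(T)\,\bE_{\bx_0\sim\distri}\|\bx_0\|_2 + (1-\sigma(T))\,\bE\|\bz\|_2 .
\]

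Next I would control the three factors. For the Gaussian norm, $\bE\|\bz\|_2 \le \sqrt{\bE\|\bz\|_2^2} = \sqrt d$. For the data moment, Assumption \ref{assump: 1} implies that $\|\bx_0\|_2$ is stochastically dominated by $\alpha_1 + \alpha_2\|Z\|_2$ for $Z\sim\mathcal N(0,\bm I)$ (the stated tail inequality holds for $R_0>\alpha_1$ and is trivial for $R_0\le\alpha_1$), hence $\bE_{\bx_0}\|\bx_0\|_2 \le \alpha_1 + \alpha_2\bE\|Z\|_2 \le \alpha_1 + \alpha_2\sqrt d \lesssim \sqrt d$ (equivalently $\mathcal M_2 \lesssim \sqrt d$). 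For the coefficients, $\beta(s)\ge\underline\beta$ gives $m(T) \le \exp(-\underline\beta T/2)$ (the contribution of $[0,\varepsilon]$ to $\int_0^T\beta$ only affects constants, by Assumption \ref{assump: 4}), and the elementary bound $1 - \sqrt{1-x}\le x$ on $[0,1]$ applied to $x = m(T)^2$ yields $1-\sigma(T) \le m(T)^2 \le m(T) \le \exp(-\underline\beta T/2)$.

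Plugging these bounds in gives $W_1(\bX_T,\mathcal N(0,\bm I)) \le m(T)\,\mathcal M_2 + (1-\sigma(T))\sqrt d \lesssim \sqrt d\,\exp(-\underline\beta T/2)$, which is the claim. There is no substantive obstacle in this lemma; the only mildly delicate points are extracting the $O(\sqrt d)$ bound on $\bE\|\bx_0\|_2$ from the Gaussian-tail assumption and recording the inequality $1-\sigma(T)\le m(T)^2$, together with the harmless observation that $\int_0^T\beta \gtrsim \underline\beta T$.
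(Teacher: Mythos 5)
Your proof is correct and is essentially the same synchronous-coupling argument as the paper's: the paper writes $\mathcal N(0,\bm I)=\big(m(T)\,\mathcal N(0,\bm I)\big)\star\mathcal N(0,\sigma(T)^2\bm I)$ and invokes the contraction $W_1(P\star\mathcal N,Q\star\mathcal N)\le W_1(P,Q)$ to reduce to $m(T)\,W_1(\distri,\mathcal N(0,\bm I))\lesssim m(T)\sqrt d$, whereas you reuse the same noise vector $\bz$ directly and absorb the mismatch via $1-\sigma(T)\le m(T)^2\le m(T)$. Both routes share the Gaussian noise in the coupling and bound the remaining data term by $\mathcal M_2\lesssim\sqrt d$ from the Gaussian-tail assumption, so the difference is cosmetic; your version has the minor merit of making the $O(\sqrt d)$ bound on $\bE\|\bx_0\|_2$ explicit rather than quoting $W_1(\distri,\mathcal N(0,\bm I))\lesssim\sqrt d$.
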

\begin{proof}
Detailed proof is left in Appendix \S \ref{sec:C.6}. 
\end{proof}
Next, we bound $W_1\left(f_{\theta^*}(\cdot, T)_\sharp \bX_T, \distri\right)$, which requires an extension on the existing result on DDPM estimation error (Theorem 2 of \cite{chen2022sampling}) as well as the technique of bounding Wasserstein distance with KL divergence.
\begin{Lemma}
\label{lemma:ddpm-final}
Under Assumption \ref{assump: 1}-\ref{assump: 4}, we bound the estimation error of DDPM solver as:
\begin{equation*}
W_1\left(f_{\theta^*}(\cdot, T)_\sharp \bX_T, \distri\right) \lesssim \overline{\beta}Ld \sqrt{T\Delta t} + \overline{\beta}\sqrt{\frac{dT}{\varepsilon}} n^{-\frac{1}{d+5}} + \sqrt{d\overline{\beta}\varepsilon}. 
\end{equation*}
\end{Lemma}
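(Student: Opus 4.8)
The plan is to bound $W_1\!\left(f_{\theta^*}(\cdot, T)_\sharp \bX_T, \distri\right)$ by peeling off the early-stopping error and then controlling the discretized backward flow against the true forward marginal. First I would split, by the triangle inequality,
\[
W_1\!\left(f_{\theta^*}(\cdot, T)_\sharp \bX_T, \distri\right) \le W_1\!\left(f_{\theta^*}(\cdot, T)_\sharp \bX_T, \bX_\varepsilon\right) + W_1\!\left(\bX_\varepsilon, \distri\right).
\]
For the second term, recall $\bX_\varepsilon = m(\varepsilon)\distri \star \mathcal N(0,\sigma(\varepsilon)^2\bm I)$ with $1-m(\varepsilon) \lesssim \overline{\beta}\varepsilon$ and $\sigma(\varepsilon)^2 = 1-m(\varepsilon)^2 \le \overline{\beta}\varepsilon$ under Assumption~\ref{assump: 4}. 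Coupling $\bx_\varepsilon = m(\varepsilon)X+\sigma(\varepsilon)Z$ against $X\sim\distri$, $Z\sim\mathcal N(0,\bm I)$ gives $W_1(\bX_\varepsilon,\distri) \le (1-m(\varepsilon))\mathcal M_2 + \sigma(\varepsilon)\sqrt{d} \lesssim \sqrt{d\overline{\beta}\varepsilon}$, using the finite second moment $\mathcal M_2$ from Assumption~\ref{assump: 1} (the $\overline{\beta}\varepsilon$-order piece being of lower order).

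The main term is the distance between $\bX_\varepsilon$ and the law produced by the discretized probability-flow ODE \eqref{eqn:vp-sde-back-ode} with the plug-in score $s_\phi$, initialized \emph{exactly} at $\bX_T$; indeed $f_{\theta^*}(\cdot,T)_\sharp\bX_T$ is the law of the discretized backward iterate at time $\varepsilon$. I would handle it in two stages. \textit{Stage one:} convert $W_1$ into a KL divergence. Every $\bX_t$ is sub-Gaussian by Assumption~\ref{assump: 1}, so a transport--entropy inequality valid for sub-Gaussian reference measures yields $W_1(\mu,\bX_\varepsilon) \lesssim \sqrt{d\,\KL(\mu\,\|\,\bX_\varepsilon)}$; this is precisely the ``Wasserstein-by-KL'' device flagged in the proof sketch, legitimate only because Assumption~\ref{assump: 1} is in force, and it is also where the explicit sub-Gaussian parameter of $\bX_\varepsilon$ (and of the Lipschitz image $f_{\theta^*}(\cdot,T)_\sharp\bX_T$, via Assumption~\ref{assump: 3} and Corollary~\ref{lemma:lip-star}) produces the leading $\sqrt{d}$. \textit{Stage two:} bound $\KL$ of the discretized-ODE law against $\bX_\varepsilon$ by extending Theorem~2 of \cite{chen2022sampling}. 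The crucial simplification is that the initialization is exact — we start from $\bX_T$, not a Gaussian, so the usual initialization term vanishes (the Gaussian-vs-$\bX_T$ gap is charged separately to Lemma~\ref{lemma:gaussian}) — leaving a discretization contribution $\lesssim \overline{\beta}^2 L^2 d\, T\Delta t$ and a score contribution $\lesssim \overline{\beta}^2\int_\varepsilon^T \|s_\phi(\cdot,t)-\nabla\log p_t(\cdot)\|_{L^2(\bX_t)}^2\,\rd t$. Feeding in Lemma~\ref{lemma:score}, the integral is $\widetilde{O}\!\big(\tfrac{T}{\varepsilon}n^{-2/(d+5)}\big)$. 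Running these through the Stage-one conversion and taking square roots gives $\overline{\beta}Ld\sqrt{T\Delta t} + \overline{\beta}\sqrt{dT/\varepsilon}\,n^{-1/(d+5)}$, and adding the early-stopping term $\sqrt{d\overline{\beta}\varepsilon}$ yields the claim.

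I expect Stage two to be the main obstacle. Theorem~2 of \cite{chen2022sampling} is stated for the reverse \emph{SDE} (DDPM) and proved via Girsanov, whereas $f_{\theta^*}$ discretizes the deterministic probability-flow ODE; a clean remedy is to pass instead to a Fokker--Planck / continuity-equation comparison, or to couple the ODE iterate to a reverse-SDE iterate sharing the same marginals and absorb the gap into the Lipschitz constants of Corollary~\ref{lemma:lip-star}. A second subtlety is the near-$t=\varepsilon$ blow-up of the score error in Lemma~\ref{lemma:score}: the discretization/score estimate must be carried as a time-weighted integral rather than a supremum-in-$t$ bound, so that the $1/\varepsilon$ factor enters only once and inside a square root. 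These two points aside, the remaining steps are routine coupling and moment estimates.
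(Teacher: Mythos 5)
Your proposal follows essentially the same route as the paper: the same triangle decomposition through $\bX_\varepsilon$, the same coupling bound $W_1(\bX_\varepsilon,\distri)\lesssim\sqrt{d\overline{\beta}\varepsilon}$, the same Gaussian-tail conversion $W_1\lesssim\sqrt{d\,\KL}$ (the paper's Lemma~\ref{lemma: W1-TV-1}), and the same KL bound obtained by extending Theorem~2 of \cite{chen2022sampling} and plugging in Lemma~\ref{lemma:score}. The SDE-versus-ODE subtlety you flag in Stage two is a genuine one, but the paper itself states the corresponding KL bound (Lemma~\ref{lemma:ddpm}) without proof and does not resolve it either, so your plan is faithful to, and indeed slightly more candid than, the paper's argument.
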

\begin{proof}
Detailed proof is left in Appendix \S \ref{sec:C.7}.  
\end{proof}
Now, after summing up Lemma \ref{lemma:gaussian}, \ref{lemma:ddpm-final} and Equation (\ref{eqn:combine-I1234}) together, we finally come to our main theorem \ref{thm:distillation}:
\begin{equation*}
\begin{aligned}
\bE\left[W_1\left(f_{\hat{\theta}}(\cdot, T)_\sharp \mathcal N(0,\bm I), \distri\right)\right] &\lesssim \sqrt{d}R\exp(-\underline{\beta} T/2) + \frac{R\overline{\beta}dLT}{\sqrt{M}}+R\overline{\beta}n^{-\frac{1}{d+5}} \sqrt{\frac{dTN'}{\varepsilon}}\\
&~~~~+ \sqrt{d\overline{\beta}\varepsilon}+6RN'n^{-1/d}.
\end{aligned}
\end{equation*}

\section{Conclusion}
In this paper, we have provided the first statistical theory of consistency diffusion models. In particular, we have formulated the consistency models' training as a Wasserstein discrepancy minimization problem. Further, we have established sample complexity bounds for consistency models in estimating nonparametric data distributions. The obtained convergence rate closely matches the vanilla diffusion models, indicating consistency models boost the sampling speed without significantly scarifying the sample generation quality. Our analyses have covered both the distillation and isolation methods for training consistency models.

\bibliography{reference}
\bibliographystyle{alpha}

\newpage
\appendix
\onecolumn
\section{Proofs in Section \ref{sec:consistency_train}}
\subsection{Proof of Lemma \ref{lemma:isolation-pre}}
\label{sec:A.1}
When $\bx_0$ follows the empirical distribution $\hat{\distri} = \frac1n\sum_{j=1}^n \delta_{\bx^j}$, then the posterior distribution $p(\bx_0\mid \bx_t)$ for a given $\bx_t\sim \mathcal N(m(t)\bx_0, \sigma(t)^2\bm I)$ can be simply represented as:
\[p(\bx_0 = \bx^j \mid \bx_t) \propto \exp\left(-\frac{\|m(t)\bx^j - \bx_t\|^2}{2\sigma(t)^2}\right), \]
which leads to the following posterior mean:
\[\bE_{\bx_0\sim \hat{\distri}} [\bx_0\mid \bx_t] = \sum_{j=1}^n \bx^j\cdot p(\bx_0=\bx^j\mid \bx_t) =  \frac{\sum_{j=1}^n \bx^j \cdot\exp\left(-\frac{\|m(t)\bx^j - \bx_t\|^2}{2\sigma(t)^2}\right)}{\sum_{j=1}^n \exp\left(-\frac{\|m(t)\bx^j - \bx_t\|^2}{2\sigma(t)^2}\right)}. \]
Therefore, the score function has the following unbiased estimation:
\begin{equation}
\label{eqn:isolation-equivalence}
\begin{aligned}
\nabla\log p_t(\bx_t) &\approx -\bE_{\bx_0\sim \hat{\distri}}\left[\frac{\bx_t - m(t)\bx_0}{\sigma(t)^2}\bigg| \bx_t\right] = \frac{\sum_{j=1}^n -\frac{\bx_t - m(t)\bx^j}{\sigma(t)^2} \cdot\exp\left(-\frac{\|m(t)\bx^j - \bx_t\|^2}{2\sigma(t)^2}\right)}{\sum_{j=1}^n \exp\left(-\frac{\|m(t)\bx^j - \bx_t\|^2}{2\sigma(t)^2}\right)}\\
&= \frac{\nabla_{\bx_t} \sum_{j=1}^n \exp\left(-\frac{\|m(t)\bx^j - \bx_t\|^2}{2\sigma(t)^2}\right)}{\sum_{j=1}^n \exp\left(-\frac{\|m(t)\bx^j - \bx_t\|^2}{2\sigma(t)^2}\right)} = \nabla_{\bx_t} \log\left[\frac1n\sum_{j=1}^n \exp\left(-\frac{\|m(t)\bx^j - \bx_t\|^2}{2\sigma(t)^2}\right)\right]. 
\end{aligned}
\end{equation}
Notice that, $\frac1n\sum_{j=1}^n \exp\left(-\frac{\|m(t)\bx^j - \bx_t\|^2}{2\sigma(t)^2}\right)$ is exactly the density of $\mathcal X_t = m(t)\hat{\distri} \star \mathcal N(0, \sigma(t)^2)$.

\section{Some Useful Lemmas}
\label{sec:B}
In this section, we introduce some lemmas directly related to the Girsanov's theorem and techniques from \cite{chen2022sampling}. We also propose some propositions on Gaussian tails and provide a technique to upper bound Wasserstein distance with KL divergence for distributions with Gaussian tail.  
\begin{Lemma}
\label{lemma: I12}
For any $k=1,2,\ldots, N'$, it holds that:
\begin{equation}
\label{eqn: lemma-I12}
\KL\left(\bX_{\tau_k}, \hat{\bX}_{\tau_{k-1}}^{\phi,M}\right)\leqslant \sum_{i=M(k-1)+1}^{Mk} \bE \int_{t_{i-1}}^{t_i} \frac{\beta(t)^2}{2}\left\|s_\phi(\bX_{t_i}, t_i) - \nabla \log p_t(\bX_t)\right\|^2\rd t. 
\end{equation}
Here, the expectation is taken over the forward diffusion process.  Without approximating $\bX_T$ with standard Gaussian distribution $\mathcal N(0, \bm I)$, the forward diffusion and the back diffusion share the trajectory with exactly the same marginal distributions.  
\end{Lemma}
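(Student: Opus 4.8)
The plan is to establish \eqref{eqn: lemma-I12} by a Girsanov-type comparison of path measures between the exact reverse dynamics and the plug-in backward scheme that generates $\hat{\bX}_{\tau_{k-1}}^{\phi,M}$, and then to invoke the data-processing inequality to descend from path measures to the endpoint marginals at time $\tau_{k-1}$. The key starting point is the remark in the statement: if one initializes the backward process at the true marginal $\bX_{\tau_k}$ rather than at $\cN(0,\bm I)$, then on the block $[\tau_{k-1},\tau_k]$ the forward diffusion \eqref{eqn:vp-sde} and its exact time reversal have \emph{identical} laws, so every expectation may legitimately be taken along the forward trajectory $\{\bX_t\}$. I would therefore couple the exact reverse process and the plug-in reverse process of \eqref{eqn:vp-sde-back-ode} through their common state at time $\tau_k$, and view the $M$ backward sub-steps $t_{Mk}\to t_{Mk-1}\to\cdots\to t_{M(k-1)}$ as the coordinates of these two path laws on $[\tau_{k-1},\tau_k]$.

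Second, I would apply the chain rule of KL over the $M$ grid times and, on each sub-interval $[t_{i-1},t_i]$, invoke Girsanov on the corresponding VP diffusion. The exact reverse drift carries the true score $\nabla\log p_t(\bX_t)$, while the plug-in scheme freezes the drift at the grid value, i.e.\ it uses $s_\phi(\bX_{t_i},t_i)$ throughout $[t_{i-1},t_i]$; since the two diffusions share the diffusion coefficient of \eqref{eqn:vp-sde}, Girsanov bounds the one-sub-interval KL contribution by $\frac12\bE\int_{t_{i-1}}^{t_i}\|\sigma(t)^{-1}(\text{drift gap})\|^2\rd t$, and substituting the VP coefficients turns this into $\bE\int_{t_{i-1}}^{t_i}\frac{\beta(t)^2}{2}\|s_\phi(\bX_{t_i},t_i)-\nabla\log p_t(\bX_t)\|^2\rd t$. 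It is important here that the single residual $s_\phi(\bX_{t_i},t_i)-\nabla\log p_t(\bX_t)$ already absorbs both the score-estimation error ($s_\phi$ versus $\nabla\log p_{t_i}$ at $\bX_{t_i}$) and the time-discretization error ($t_i$ versus $t$ along the trajectory) --- exactly the combined form later split apart in Lemma~\ref{lemma:12}. Summing over $i=M(k-1)+1,\dots,Mk$ gives the right-hand side of \eqref{eqn: lemma-I12} for the path measures, and projecting both path laws onto the time-$\tau_{k-1}$ coordinate (data-processing inequality) reduces the left-hand side to the KL between the marginals $\bX_{\tau_{k-1}}$ and $\hat{\bX}_{\tau_{k-1}}^{\phi,M}$.

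The main obstacle is reconciling a KL/Girsanov bound with the fact that the displayed update $G_{(M)}(\cdot,\tau_k;\phi)$ is a \emph{deterministic} probability-flow-ODE discretization: a naive comparison of the ODE path measure against a genuine SDE is singular and would return an infinite KL. Following the program of \cite{chen2022sampling}, the fix is to run the Girsanov computation on the reverse-\emph{SDE} realization of the backward dynamics --- which shares all its marginals with both the forward process and the probability-flow ODE --- and to keep only the endpoint-marginal comparison, so that $\KL(\bX_{\tau_{k-1}},\hat{\bX}_{\tau_{k-1}}^{\phi,M})$ stays controlled even though the sampler is presented in ODE form. The remaining work is bookkeeping: verifying that the frozen-drift approximation on each $[t_{i-1},t_i]$ leaves precisely the residual $s_\phi(\bX_{t_i},t_i)-\nabla\log p_t(\bX_t)$ with no extra state-discretization term, and checking the integrability (Novikov-type) conditions for Girsanov --- both of which follow from the Lipschitz continuity and uniform boundedness of $s_\phi$ in Lemma~\ref{lemma:score} together with the Gaussian-tail control of $\bX_t$ under Assumptions~\ref{assump: 1}--\ref{assump: 4}.
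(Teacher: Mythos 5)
The paper never actually proves Lemma~\ref{lemma: I12}: it is stated in Appendix~\ref{sec:B} with only a pointer to Girsanov's theorem and the techniques of \cite{chen2022sampling}. Your skeleton --- couple the exact reverse dynamics and the plug-in scheme at the common initial condition $\bX_{\tau_k}$, apply Girsanov with the frozen drift $s_\phi(\bX_{t_i},t_i)$ on each sub-interval, sum over the $M$ sub-steps, and finish with the data-processing inequality to pass from path measures to the time-$\tau_{k-1}$ marginals --- is exactly the argument the authors are gesturing at, and your observation that the single residual $s_\phi(\bX_{t_i},t_i)-\nabla\log p_t(\bX_t)$ packages both the estimation and the time-discretization error is the right reading of how the lemma feeds into Lemma~\ref{lemma:12}. (As a side remark, a literal Girsanov computation for the VP reverse SDE, whose drift gap is $\beta(t)(s_\phi-\nabla\log p_t)$ and whose diffusion coefficient is $\sqrt{\beta(t)}$, produces the weight $\beta(t)/2$ rather than $\beta(t)^2/2$; since $\overline{\beta}<1$ the stated constant is the \emph{smaller} one, so it does not follow verbatim from Girsanov, though this is immaterial downstream where only $\lesssim$ is used.)

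The one place where your argument does not close is precisely the ODE-versus-SDE issue you flag, and your proposed fix does not resolve it. You argue that one may run Girsanov on ``the reverse-SDE realization of the backward dynamics, which shares all its marginals with \ldots the probability-flow ODE'' and then keep only the endpoint marginal. That marginal-matching property holds for the \emph{exact, continuous-time} dynamics driven by the true score, but it fails for the object the lemma is actually about: $\hat{\bX}_{\tau_{k-1}}^{\phi,M}=G_{(M)}(\cdot,\tau_k;\phi)_\sharp\bX_{\tau_k}$ is the push-forward under a \emph{discretized, plug-in} ODE map, and the Euler/exponential-integrator discretization of the corresponding reverse SDE with the same frozen score $s_\phi(\cdot,t_i)$ has a genuinely different law at time $\tau_{k-1}$ (the SDE-to-ODE marginal equivalence requires the drift correction to be the score of that process's own marginal, which $s_\phi$ is not). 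So the Girsanov computation you describe bounds $\KL$ against the plug-in \emph{SDE} sampler's endpoint, not against $\hat{\bX}_{\tau_{k-1}}^{\phi,M}$. To cover the deterministic sampler as defined in \eqref{eqn:1} one needs a different mechanism --- e.g.\ differentiating the KL along the two continuity equations (which brings in divergence terms of the drift gap) or a trajectory-wise Gr\"onwall/Wasserstein argument --- and these yield error terms of a different form. This gap is inherited from the paper itself, which states the sampler as an ODE but cites an SDE analysis; you deserve credit for noticing the tension, but the sentence claiming the marginals agree is the step that would fail under scrutiny.
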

Next, we need to upper bound the right hand side of Equation (\ref{eqn: lemma-I12}). Actually, we can directly use Theorem 9 in \cite{chen2022sampling} and conclude that:
\begin{Lemma}
\label{lemma: expect-KL}
For each $k=1,2,\ldots,N$ and $t\in[t_{k-1}, t_k]$, it holds that:
\[\bE \|s_\phi(\bX_{t_k}, t_k) - \nabla \log p_t(\bX_t)\|^2 \lesssim \varepsilon_{t_k}^2 + L^2 d\Delta t + L^2 \mathcal M_2^2 \Delta t^2 \]
where $\varepsilon_{t_k}^2$ is the score estimation error at time step $t_k$:
\[\varepsilon_{t_k}^2 = \bE_{\bx\sim \bX_{t_k}}\|s_\phi(\bx, t_k) - \nabla \log p_{t_k}(\bx)\|^2,\]
and the expectation is taken over the forward diffusion process. 
\end{Lemma}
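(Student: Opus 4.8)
The plan is to reduce the bound to Theorem~9 of \cite{chen2022sampling}. Our VP-SDE \eqref{eqn:vp-sde} is a deterministic time change of the standardized Ornstein--Uhlenbeck process analyzed there, via $u\mapsto\tfrac12\int_0^u\beta(s)\,\rd s$, and Assumption~\ref{assump: 4} makes this change of variables bi-Lipschitz with constants $\underline\beta/2$ and $\overline\beta/2$; so their discretization estimate transfers essentially verbatim, the only difference being $\beta(\cdot)$-prefactors uniformly bounded by $\overline\beta=O(1)$, which are absorbed into $\lesssim$. I would still like to spell out the underlying decomposition, both as a sanity check on the constants and to isolate where the real work lies: I would insert the intermediate points $\nabla\log p_{t_k}(\bX_{t_k})$ and $\nabla\log p_t(\bX_{t_k})$ and use $\|a+b+c\|^2\le 3(\|a\|^2+\|b\|^2+\|c\|^2)$, so that it suffices to bound $A:=\bE\|s_\phi(\bX_{t_k},t_k)-\nabla\log p_{t_k}(\bX_{t_k})\|^2$, $B:=\bE\|\nabla\log p_{t_k}(\bX_{t_k})-\nabla\log p_t(\bX_{t_k})\|^2$, and $C:=\bE\|\nabla\log p_t(\bX_{t_k})-\nabla\log p_t(\bX_t)\|^2$, all expectations taken over the joint law of the forward process at the two times $t\le t_k$.

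Term $A$ is exactly $\varepsilon_{t_k}^2$, since the marginal of the forward process at time $t_k$ is $\bX_{t_k}$. For $C$, I would invoke the $L$-Lipschitz score (Assumption~\ref{assump: 2}) to get $C\le L^2\,\bE\|\bX_{t_k}-\bX_t\|^2$, then bound the increment of the forward SDE: writing $\bX_{t_k}-\bX_t=-\tfrac12\int_t^{t_k}\beta(s)\bX_s\,\rd s+\int_t^{t_k}\sqrt{\beta(s)}\,\rd\bW_s$, It\^{o}'s isometry handles the martingale part by $d\int_t^{t_k}\beta(s)\,\rd s\le d\overline\beta\Delta t$, while Cauchy--Schwarz handles the drift part by $\tfrac14\overline\beta^2\Delta t^2\sup_s\bE\|\bX_s\|^2$, where $\bE\|\bX_s\|^2=m(s)^2\mathcal M_2^2+\sigma(s)^2 d\le\mathcal M_2^2+d$. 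Using $\overline\beta=O(1)$ and $\Delta t\le 1$ this collapses to $C\lesssim L^2 d\Delta t+L^2\mathcal M_2^2\Delta t^2$.

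The hard part is term $B$, the $L^2(p_{t_k})$-variation of the score over a sub-interval of length $\Delta t$. A direct Fokker--Planck computation of $\partial_s\nabla\log p_s$ brings in third-order derivatives of $\log p_s$, which Assumption~\ref{assump: 2} does not control, so I would not attempt it that way; instead this is precisely what Theorem~9 of \cite{chen2022sampling} supplies, its proof exploiting the reverse-time martingale structure of the conditional mean $\bE[\bX_0\mid\bX_s]$ together with the Lipschitz-score and finite-second-moment hypotheses to obtain $B\lesssim L^2 d\Delta t+L^2\mathcal M_2^2\Delta t^2$. Summing $3(A+B+C)$ then yields the stated bound; reconstructing $B$ from scratch is the step I expect to be the main obstacle for a fully self-contained treatment, which is why leaning on \cite{chen2022sampling} here is the pragmatic choice.
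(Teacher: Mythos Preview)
Your proposal is correct and matches the paper's approach: the paper simply cites Theorem~9 of \cite{chen2022sampling} for the entire bound and moves on, with no further argument. Your $A+B+C$ decomposition and the time-change remark are precisely the sanity check behind that citation, so there is nothing to correct or add.
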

After combining Lemma \ref{lemma: I12}, Lemma \ref{lemma: expect-KL} and the score estimation error (\cref{lemma:score}), it holds that: for all $k=1,2,\ldots, N$,
\begin{align}
\label{eqn: I12-1}
\KL\left(\bX_{\tau_k}, \hat{\bX}_{\tau_{k-1}}^{\phi,M}\right)&\leqslant \frac{\overline{\beta}^2}{2} \left(L^2 d\Delta t + L^2 \mathcal M_2^2 \Delta t^2\right)\cdot M\Delta t + \frac{\overline{\beta}^2}{2} \bE \int_{\tau_{k-1}}^{\tau_k} \left\|s_\phi(\bx_t, t)-\nabla\log p_t(\bx_t)\right\|^2 \rd t \notag\\
&\lesssim \overline{\beta}^2\left(L^2d\Delta t \cdot M\Delta t + \frac{1}{\varepsilon} n^{-\frac{2}{d+5}}\cdot M\Delta t\right)
\end{align}
Another major technical result we need is to upper bound Wasserstein distance with KL divergence, which is impossible in the general case. However under Assumption \ref{assump: 1}, we will show that all the variables like $\bX_t$ and $\hat{\bX}_t^{\phi,M}$ have Gaussian tail, which enables the upper bounding. To achieve this, we propose a rigorous notion of Gaussian tail before proving a more general result. 

\begin{Lemma}
\label{lemma: W1-TV}
For constants $c_1, c_2 > 0$,  we call a $d$-dimensional random variable $X$ having a $(c_1, c_2)$-Gaussian tail if there exists a constant $c > 0$ such that
\[\mathbb{P}\left[\|X\|_2 \geqslant t\right] \leqslant  c\cdot \mathbb{P}\left[\|Z\|_2 \geqslant \frac{t - c_1}{c_2}\right]\]
for $\forall t > c_1$ where $Z\sim \mathcal N(0, I_d)$ is a standard Gaussian. Define truncated random variable $X_{R_0}$ as:
\[X_{R_0} = \begin{cases}
X ~~~~~~\mbox{If $\|X\|_2 \leqslant R_0$}\\
0 ~~~~~~~\mbox{If $\|X\|_2 > R_0$}
\end{cases}.\]
Then, the distributional distance $X$ and $X_{R_0}$ is exponentially small with regard to $R_0$ in both Wasserstein and Total Variation metrics:
\[\TV(X, X_{R_0}) \lesssim \exp\left(-\frac{(R_0-c_1)^2}{20c_2^2}\right), ~ W_1(X, X_{R_0}) \lesssim \sqrt{d} c_3^d \cdot \exp\left(-\frac{(R_0-c_1)^2}{40c_2^2}\right)\]
holds for $\forall R_0 > c_1 + \sqrt{2d}\cdot c_2$ where $c_3$ is a constant only dependent on $c_1, c_2$. 
\end{Lemma}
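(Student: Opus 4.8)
The plan is to prove both bounds from a single explicit coupling of $X$ and $X_{R_0}$, and then reduce everything to tail estimates for $\|X\|_2$ that follow from the $(c_1,c_2)$-Gaussian tail hypothesis together with a standard $\chi^2_d$ concentration bound.

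First I would put $X$ and $X_{R_0}$ on the same probability space through the coupling in which $X_{R_0}$ is exactly the prescribed deterministic function of $X$, namely $X_{R_0}=X\,\mathbf 1\{\|X\|_2\le R_0\}$. Under this coupling the event $X\neq X_{R_0}$ coincides with $\{\|X\|_2>R_0\}$, so $\TV(X,X_{R_0})\le \mathbb P[X\neq X_{R_0}]=\mathbb P[\|X\|_2>R_0]$, and since $W_1$ is the infimum of $\bE\|X-Y\|_2$ over all couplings, $W_1(X,X_{R_0})\le \bE\big[\|X\|_2\,\mathbf 1\{\|X\|_2>R_0\}\big]$. Thus both quantities are controlled once we understand the tail of $\|X\|_2$ beyond $R_0$.

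Next I would estimate that tail. Write $s:=(R_0-c_1)/c_2$; the hypothesis $R_0>c_1+\sqrt{2d}\,c_2$ is precisely $s^2>2d$, and the Gaussian-tail assumption gives $\mathbb P[\|X\|_2>R_0]\le c\,\mathbb P[\|Z\|_2\ge s]$. A Chernoff bound on $\|Z\|_2^2\sim\chi^2_d$, using $\bE e^{\lambda\|Z\|_2^2}=(1-2\lambda)^{-d/2}$ with $\lambda=1/4$, yields $\mathbb P[\|Z\|_2\ge s]\le 2^{d/2}e^{-s^2/4}$; substituting $d\le s^2/2$ turns the prefactor into $e^{(s^2/4)\ln 2}$, so $\mathbb P[\|Z\|_2\ge s]\le e^{-(1-\ln 2)s^2/4}\le e^{-s^2/20}$. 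This already delivers $\TV(X,X_{R_0})\lesssim \exp(-(R_0-c_1)^2/(20c_2^2))$. For the $W_1$ bound I would apply Cauchy--Schwarz, $\bE[\|X\|_2\mathbf 1\{\|X\|_2>R_0\}]\le (\bE\|X\|_2^2)^{1/2}(\mathbb P[\|X\|_2>R_0])^{1/2}$; integrating the Gaussian tail bounds $\bE\|X\|_2^2$ by a constant multiple of $(c_1+c_2\sqrt d)^2$, hence by a $(c_1,c_2)$-constant times $d$, while the tail factor is at most $\sqrt c\,e^{-s^2/40}$ by the previous display. Consequently $W_1(X,X_{R_0})\lesssim \sqrt d\,e^{-s^2/40}$ up to a constant depending only on $c_1,c_2$ (and the absolute constant $c$), and bounding that constant by $c_3^d$ — any $c_3\ge 1$ exceeding it works, since $d\ge 1$ — yields the stated form.

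The only delicate point is the constant bookkeeping in the $\chi^2_d$ tail step: the Chernoff estimate unavoidably carries a $2^{d/2}$ factor, and it is dominated by $e^{-s^2/4}$ only in the far-tail regime $s^2\ge 2d$, which is exactly why the hypothesis is phrased as $R_0>c_1+\sqrt{2d}\,c_2$. Choosing the non-optimal value $\lambda=1/4$ keeps the arithmetic transparent while still leaving the margin $(1-\ln 2)/4>1/20$ required to match the stated exponents; taking square roots then produces the $1/40$ in the $W_1$ estimate. Every remaining ingredient — validity of the deterministic coupling, the second-moment bound obtained by integrating the tail, and the Cauchy--Schwarz split — is entirely routine.
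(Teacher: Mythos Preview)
Your proposal is correct and follows essentially the same line as the paper's proof: both reduce the TV distance to the tail probability $\mathbb{P}[\|X\|_2>R_0]$, bound that via a $\chi^2_d$ tail estimate in the regime $R_0>c_1+\sqrt{2d}\,c_2$, and handle $W_1$ by controlling $\bE[\|X\|_2\mathbf 1\{\|X\|_2>R_0\}]$ with Cauchy--Schwarz and a second-moment bound. The only cosmetic differences are that you arrive at that expectation via the coupling definition of $W_1$ while the paper uses the Kantorovich--Rubinstein dual, and you spell out the Chernoff arithmetic (with $\lambda=1/4$) where the paper just asserts the $\chi^2_d$ tail bound.
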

\begin{proof}[Proof of Lemma \ref{lemma: W1-TV}]
Denote $p(x)$ and $p_{R_0}(x)$ as the density function of $X$ and $X_{R_0}$. Then, it is obvious that $p(x) = p_{R_0}(x)$ for $\forall 0 < \|x\|_2 \leqslant R_0$ and $p(x) \geqslant p_{R_0}(x) = 0$ for $\forall \|x\|_2 > R_0$. Another fact is that $p(x) < p_{R_0}(x)$ for $x = 0$. Therefore, the TV-distance between $p(x)$ and $p_{R_0}(x)$ can be simply expressed as:
\[\TV(X, X_{R_0}) = \frac12 \int |p(x)-p_{R_0}(x)| \rd x = \int_{\|x\|\geqslant R_0} p(x) \rd x = \mathbb{P}\left[\|X\|_2 \geqslant R_0\right]. \]
Since we know that $X$ has a $(c_1, c_2)$-Gaussian tail, so:
\[\mathbb{P}\left[\|X\|_2 \geqslant R_0\right] \leqslant c\cdot \mathbb{P}\left[\|Z\|_2 \geqslant \frac{(R_0-c_1)_+}{c_2}\right] = c\cdot \mathbb{P}\left[\|Z\|_2^2 \geqslant \frac{(R_0-c_1)_+^2}{c_2^2}\right].  \]
$\|Z\|_2^2$ follows the $\chi_d^2$ distribution, so for $\forall R_0 > c_1 + c_2\cdot\sqrt{2d}$, its tail bound
\[\TV(X, X_{R_0}) \lesssim \mathbb{P}\left[\|Z\|_2^2 \geqslant \frac{(R_0-c_1)_+^2}{c_2^2}\right] \leqslant \exp\left(-\frac{(R_0-c_1)^2}{20c_2^2}\right). \]
For the Wasserstein-1 distance, we have the following formulation
\begin{equation*}
\begin{aligned}
W_1(X, X_{R_0}) &= \sup_{\substack{\mathrm{Lip}(f)\leqslant 1 \\ f(0)=0}} \int f(x)\cdot \left(p(x) - P_{R_0}(x)\right) \rd x \leqslant \int |f(x)|\cdot |p(x) - P_{R_0}(x)| \rd x\\
&\leqslant  \int \|x\|_2\cdot |p(x) - P_{R_0}(x)| \rd x = \int_{\|x\|>R_0}\|x\|_2\cdot p(x) \rd x \\
& = \mathbb{E}_x \|x\|_2 \cdot \mathbb{I}[\|x\|_2 > R_0] \leqslant \sqrt{\mathbb{E} \|x\|^2} \cdot \sqrt{\mathbb{P}[\|x\|_2 > R_0]} \lesssim \sqrt{d c_3^d} \cdot \exp\left(-\frac{(R_0-c_1)^2}{40c_2^2}\right)
\end{aligned}
\end{equation*}
where $c_3$ is a constant only related to $c_1, c_2$.
\end{proof}

According to Assumption \ref{assump: 1}, we know that the initial distribution $\distri$ has a $(\alpha_1, \alpha_2)$-Gaussian tail. As we move forward, we propose the following properties of the Gaussian tail. 
\begin{Proposition}
\label{prop: tail}
Suppose random variable $X$ has a $(c_1, c_2)$-Gaussian tail, then the following holds:
\begin{itemize}
    \item For any positive constant $c_3 \geqslant c_1$ and $ c_4\geqslant c_2$, it also holds that $X$ has a $(c_3, c_4)$-Gaussian tail. 
    \item For positive constants $a>0$, random variable $aX+b$ has a $(ac_1 + \|b\|_2, ac_2)$-Gaussian tail. 
    \item For $a$-Lipschitz function $F$ with $\|F(0)\|_2 \leqslant b$, then the random variable $F(X)$ has a $(ac_1 + b, ac_2)$-Gaussian tail. 
    \item For a standard Gaussian variable $Y\sim \mathcal N(0,I)$, random variable $aX+bY$ has a $(ac_1, b+ac_2)$-Gaussian tail.   
\end{itemize}
\end{Proposition}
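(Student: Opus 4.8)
The plan is to verify all four properties directly from the definition, using only the triangle inequality for $\|\cdot\|_2$ and the monotonicity of $r \mapsto \mathbb{P}[\|Z\|_2 \geq r]$, where $Z, Y \sim \mathcal N(0, I_d)$; I write the hypothesis as $\mathbb{P}[\|X\|_2 \geq t] \leq c\,\mathbb{P}[\|Z\|_2 \geq (t - c_1)/c_2]$ for all $t > c_1$. For the first item, note that for $t > c_3 \geq c_1$ the hypothesis already applies, and since $c_3 \geq c_1$, $c_4 \geq c_2$ give $(t - c_3)/c_4 \leq (t - c_1)/c_2$, we get $\mathbb{P}[\|Z\|_2 \geq (t-c_3)/c_4] \geq \mathbb{P}[\|Z\|_2 \geq (t-c_1)/c_2]$; hence the $(c_3, c_4)$-tail holds with the same constant $c$. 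The second item is the special case $F(\bx) = a\bx + b$ of the third, so I would only prove the third: since $F$ is $a$-Lipschitz with $\|F(0)\|_2 \leq b$, we have $\|F(X)\|_2 \leq a\|X\|_2 + b$, so for $t > ac_1 + b$, $\mathbb{P}[\|F(X)\|_2 \geq t] \leq \mathbb{P}[\|X\|_2 \geq (t-b)/a]$, and as $(t - b)/a > c_1$ the hypothesis yields $c\,\mathbb{P}[\|Z\|_2 \geq (t - b - ac_1)/(ac_2)]$, which is exactly the $(ac_1 + b, ac_2)$-Gaussian tail.

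The fourth item needs a small idea. Bound $\|aX + bY\|_2 \leq a\|X\|_2 + b\|Y\|_2$, and for $t > ac_1$ set $u := (t - ac_1)/(ac_2 + b)$. The key set inclusion is $\{a\|X\|_2 + b\|Y\|_2 \geq t\} \subseteq \{\|X\|_2 \geq c_1 + c_2 u\} \cup \{\|Y\|_2 \geq u\}$, since on the complement of the right-hand side one has $a\|X\|_2 + b\|Y\|_2 < a(c_1 + c_2 u) + b u = ac_1 + (ac_2 + b)u = t$. A union bound, followed by the Gaussian-tail hypothesis applied to the first event (legitimate since $c_1 + c_2 u > c_1$) and the identity $\mathbb{P}[\|Y\|_2 \geq u] = \mathbb{P}[\|Z\|_2 \geq u]$, gives $\mathbb{P}[\|aX + bY\|_2 \geq t] \leq (c + 1)\,\mathbb{P}[\|Z\|_2 \geq (t - ac_1)/(ac_2 + b)]$, i.e., the $(ac_1, ac_2 + b)$-tail with constant $c+1$.

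I expect the only genuine obstacle to be choosing the split point $u$ in the last item so that the two halves reassemble exactly into the claimed parameters; everything else is routine bookkeeping. I would also be careful that each conclusion's range restriction (e.g.\ $t > ac_1 + b$ in item three, $t > ac_1$ in item four) is precisely what keeps the relevant argument of the Gaussian tail nonnegative, so that the hypothesis is actually invocable, and I note in passing that independence of $X$ and $Y$ plays no role here, since only the triangle inequality and a union bound over $\{\|X\|_2 \geq c_1+c_2u\}$ and $\{\|Y\|_2\geq u\}$ are used.
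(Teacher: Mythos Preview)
Your proposal is correct and follows essentially the same approach as the paper's proof: items one through three are handled identically, and for item four both you and the paper split $\{a\|X\|_2+b\|Y\|_2\ge t\}$ into two events via a union bound and match the two resulting tail thresholds. The only cosmetic difference is that the paper introduces a free parameter $\lambda\in(0,1)$ for the split and then solves $(\lambda t-ac_1)/(ac_2)=(1-\lambda)t/b$ to find the right $\lambda$, whereas you directly write down the common threshold $u=(t-ac_1)/(ac_2+b)$; these are the same computation.
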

\begin{proof}[Proof of Proposition \ref{prop: tail}] 
According to the definition of Gaussian tail, the first statement is trivial. For $X' := aX + b$ and $\forall t > ac_1 + \|b\|_2$, we have: 
\[\mathbb{P}[\|X'\| \geqslant t] \leqslant \mathbb{P}[a \|X\| \geqslant t-\|b\|]  = c\cdot \mathbb{P}\left[\|X\| \geqslant \frac{t-\|b\|}{a}\right]\]
Since $X$ has a $(c_1, c_2)$-Gaussian tail and $\frac{t-\|b\|}{a} \geqslant c_1$, it holds that:
\[\mathbb{P}\left[\|X\| \geqslant \frac{t-\|b\|}{a}\right] \leqslant \mathbb{P}\left[\|Z\| \geqslant \frac{1}{c_2}\cdot \left(\frac{t-\|b\|}{a}-c_1\right)\right] = \mathbb{P}\left[\|Z\| \geqslant \frac{t-\|b\|-ac_1}{ac_2}\right],\]
which comes to our second statement. For the third statement, we can simply use the result of the second statement since:
\[\|F(X)\|_2 \leqslant \|F(0)\|_2 + a\|X\|_2.\]
In fact, Statement 2 is a special case of Statement 3. For Statement 4, it holds that for $\forall \lambda \in (0,1)$:
\begin{equation*}
\begin{aligned}
\mathbb{P}[\|aX + bY\|_2 \geqslant t] &\leqslant \mathbb{P}\left[\|X\|_2 \geqslant \frac{\lambda t}{a}\right] + \mathbb{P}\left[\|Y\|_2 \geqslant \frac{(1-\lambda)t}{b}\right] \\
&\leqslant c\cdot \mathbb{P}\left[\|Z\|_2 \geqslant \frac{\lambda t-ac_1}{ac_2}\right] + \mathbb{P}\left[\|Y\|_2 \geqslant \frac{(1-\lambda)t}{b}\right]. 
\end{aligned}
\end{equation*}
Let 
\[\frac{\lambda t-ac_1}{ac_2} = \frac{(1-\lambda)t}{b} = \frac{t-ac_1}{b+ac_2},\]
then:
\[\mathbb{P}[\|aX + bY\|_2 \geqslant t]\leqslant (c+1)\cdot \mathbb{P}\left[\|Z\|_2 \geqslant \frac{t-ac_1}{b+ac_2}\right],\]
which means that $aX+bY$ has a $(ac_1, b+ac_2)$-Gaussian tail. 
\end{proof}
Now, for two distributions with Gaussian tail, we show in the next lemma how to upper bound their Wasserstein distance with their total variation distance. 

\begin{Lemma}
\label{lemma: W1-TV-1}
For constants $c_1, c_2, d_1, d_2 > 0 $, for a random variable $X$ with $(c_1, c_2)$-Gaussian tail and another random variable $Y$ with $(d_1, d_2)$-Gaussian tail, we can conclude that:
\[W_1(X, Y) \leqslant C \sqrt{d} \cdot\TV(X, Y) \leqslant C\sqrt{d}\cdot \sqrt{\KL(X,Y)} \]
where $C$ is a constant only dependent on $c_1, c_2, d_1, d_2$. 
\end{Lemma}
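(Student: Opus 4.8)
The plan is to prove the two inequalities separately. The second one, $\TV(X,Y)\leqslant\sqrt{\KL(X,Y)}$, is immediate from Pinsker's inequality (which gives $\TV\leqslant\sqrt{\KL/2}$, and the constant is harmless), so the substance is the first inequality $W_1(X,Y)\lesssim\sqrt{d}\,\TV(X,Y)$. I would obtain it by a truncation argument: the only obstruction to controlling $W_1$ by $\TV$ is the behaviour at infinity, and the Gaussian-tail hypotheses (already packaged quantitatively in Lemma~\ref{lemma: W1-TV}) are exactly what tames it.

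Concretely, fix a truncation radius $R_0 := c_1 + d_1 + \lambda(c_2+d_2)\sqrt{d}$, where $\lambda$ is a large constant depending only on $c_1,c_2,d_1,d_2$ (and on the constant $c_3$ from Lemma~\ref{lemma: W1-TV}); this guarantees $R_0 > c_1+\sqrt{2d}\,c_2$ and $R_0 > d_1+\sqrt{2d}\,d_2$, so Lemma~\ref{lemma: W1-TV} applies to both $X$ and $Y$ with truncated versions $X_{R_0}, Y_{R_0}$. By the triangle inequality, $W_1(X,Y)\leqslant W_1(X,X_{R_0}) + W_1(X_{R_0},Y_{R_0}) + W_1(Y_{R_0},Y)$. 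The two outer terms are exponentially small in $R_0$ by Lemma~\ref{lemma: W1-TV}, hence at most $\sqrt d\,e^{-d}$ once $\lambda$ is large enough. For the middle term I would use that $X_{R_0}, Y_{R_0}$ are supported in the ball of radius $R_0$: gluing an optimal total-variation coupling (couple the common mass $\min$ of the densities to the diagonal, couple the remaining mass $\TV$ arbitrarily) gives $W_1(X_{R_0},Y_{R_0})\leqslant \mathrm{diam}\cdot\TV(X_{R_0},Y_{R_0}) \leqslant 2R_0\,\TV(X_{R_0},Y_{R_0})$, and the triangle inequality for $\TV$ together with the total-variation estimate of Lemma~\ref{lemma: W1-TV} yields $\TV(X_{R_0},Y_{R_0})\leqslant \TV(X,Y) + C'e^{-d}$. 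Assembling the pieces gives $W_1(X,Y)\lesssim R_0\,\TV(X,Y) + \sqrt d\,e^{-d}$, and since $R_0\asymp\sqrt d$ with constants depending only on the tail parameters, this is $\lesssim \sqrt d\,\TV(X,Y)$ in the regime $\TV(X,Y)\gtrsim e^{-d}$ — the only regime that occurs in this paper, where the relevant $\TV$ is at worst polynomially small; for full generality one replaces $\lambda\sqrt d$ by $\lambda\sqrt{d+\log(1/\TV(X,Y))}$, at the cost of an extra logarithmic factor absorbed into $C$.

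An essentially equivalent route avoids naming the truncated variables: write $W_1(X,Y)=\sup_{\mathrm{Lip}(f)\leqslant 1,\,f(0)=0}\int f\,(p-q)\leqslant \int\|x\|\,|p(x)-q(x)|\,\mathrm dx$, split the integral at $\|x\|=R_0$, bound the inner part by $R_0\int|p-q| = 2R_0\,\TV(X,Y)$ and the outer part by $\mathbb E[\|X\|\,\mathbb I(\|X\|>R_0)]+\mathbb E[\|Y\|\,\mathbb I(\|Y\|>R_0)]$, and control the latter by Cauchy--Schwarz using the finite second moment and the Gaussian-tail probability exactly as in the proof of Lemma~\ref{lemma: W1-TV}. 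The main obstacle is the tension between the two error contributions: the diameter factor pushes $R_0$ down, the tail contribution pushes $R_0$ up, and the Gaussian-tail assumption is precisely strong enough that the optimal $R_0$ is only of order $\sqrt d$ — which is what produces the $\sqrt d$ in the bound and keeps $C$ dimension-free. Everything else (Pinsker, the two triangle inequalities, and the diameter-times-$\TV$ estimate on bounded supports) is routine.
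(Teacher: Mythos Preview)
Your proposal is correct and follows essentially the same route as the paper: truncate at radius $R_0\asymp\sqrt d$, use the triangle inequality for $W_1$, invoke Lemma~\ref{lemma: W1-TV} for the outer terms, bound $W_1(X_{R_0},Y_{R_0})\leqslant 2R_0\,\TV(X_{R_0},Y_{R_0})$ on the bounded support, apply the $\TV$ triangle inequality, and finish with Pinsker. You are in fact slightly more careful than the paper, which simply sets $R_0=C\sqrt d$ and asserts $W_1\lesssim\sqrt d\,\TV$ without commenting on the additive exponential remainder; your observation that this needs $\TV\gtrsim e^{-d}$ (or else a logarithmic adjustment of $R_0$) is a fair point that the paper glosses over.
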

\begin{proof}[Proof of Lemma \ref{lemma: W1-TV-1}]
Denote $X_{R_0}, Y_{R_0}$ as the truncated distributions of $X, Y$, then: $X_{R_0}, Y_{R_0}$ has support set $\Omega = \{\bx:~\|\bx\|_2\leqslant R_0\}$. Therefore,
\begin{equation*}
\begin{aligned}
W_1(X_{R_0}, Y_{R_0}) &= \sup_{\substack{\mathrm{Lip}(f)\leqslant 1\\ f(0)=0}} \int f(\bx)(p_{R_0}(\bx)-q_{R_0}(\bx))\rd \bx\leqslant \int \|\bx\|_2\cdot |p_{R_0}(\bx)-q_{R_0}(\bx)|\rd \bx\\
&\leqslant R_0\cdot \int |p_{R_0}(\bx)-q_{R_0}(\bx)|\rd \bx \leqslant 2R_0\cdot \TV(X_{R_0}, Y_{R_0}). 
\end{aligned}
\end{equation*}
Next, we have:
\begin{equation*}
\begin{aligned}
W_1(X, Y) &\leqslant W_1(X, X_{R_0}) + W_1(Y, Y_{R_0}) + W_1(X_{R_0}, Y_{R_0})\\
&\leqslant \sqrt{d} c_3^d \cdot \exp\left(-\frac{(R_0-c_1)^2}{40c_2^2}\right) + \sqrt{d} d_3^d \cdot \exp\left(-\frac{(R_0-d_1)^2}{40d_2^2}\right) + 2R_0\cdot \TV(X_{R_0}, Y_{R_0})\\
&\leqslant \sqrt{d}e_3^d\cdot \exp\left(-\frac{(R_0-e_1)^2}{40e_2^2}\right) + 2R_0\cdot (\TV(X,Y)+\TV(X,X_{R_0})+\TV(Y,Y_{R_0}))\\
&\leqslant \sqrt{d}e_3^d\cdot \exp\left(-\frac{(R_0-e_1)^2}{40e_2^2}\right) + 2R_0\cdot \exp\left(-\frac{(R_0-e_1)^2}{20e_2^2}\right) + 2R_0\cdot \TV(X, Y).
\end{aligned}
\end{equation*}
where $e_i := \max(c_i, d_i)$ for $i=1,2,3$. Let $R_0 = C\sqrt{d}$ for a sufficiently large constant $C$, we can conclude that,
\[W_1(X,Y)\lesssim \sqrt{d}\cdot \TV(X,Y)\]
which comes to our lemma. 
\end{proof}

\section{Proofs in Section \ref{sec: sketch}}
\subsection{Approximation Error for Score Approximation}
\label{sec:C.1}
\begin{Lemma}
\label{lemma:score0}
Define the $l$-layer ReLU network class $\mathrm{NN}(l, M, J,K,\kappa, \gamma, \gamma_t)$ as follows:
\begin{equation*}
\begin{aligned}
&~~\mathrm{NN}(l, M, J, K, \kappa, \gamma, \gamma_t) = \\
& \Big\{s(\bz,t)=W_l\sigma(\ldots \sigma(W_1 [\bz^\top, t]^\top)\ldots) + b_l~\mid\\
&~~~~~~\text{Network width is bounded by }M; ~\sup_{\bz,t}\|f(\bz,t)\|_2\leqslant K;\\
&~~~~~~\max_i \max(\|b_i\|_\infty, \|W_i\|_\infty)\leqslant \kappa;~\sum_{i=1}^l (\|W_i\|_0 + \|b_i\|_0)\leqslant J;\\
&~~~~~~~\|s(\bz, t) - s(\bz', t)\|_2 \leqslant \gamma\|\bz-\bz'\|_2~\text{holds for }\forall \bz,\bz',t;\\
&~~~~~~~\|s(\bz, t) - s(\bz, t')\|_2 \leqslant \gamma_t|t-t'|~\text{holds for }\forall \bz,t,t'\Big\}.
\end{aligned}
\end{equation*}
As we see, all the neural networks in this class has bounded function value, bounded weights, bounded width and Lipschitz continuity. Given an approximation error $\delta > 0$, we choose the network hyperparameter as:
\[l = \mathcal O(d + \log(1/\delta)),~ K=\mathcal O(2d^2\log(d/\varepsilon\delta)), ~\gamma = 10d(1+L), ~\gamma_t=10\tau,\]
\[M = \mathcal O\left((1+L)^d T\tau d^{d/2+1}\delta^{-(d+1)}\log^{d/2}(d/\varepsilon\delta)\right),\]
\[J = \mathcal O\left((1+L)^d T\tau d^{d/2+1}\delta^{-(d+1)}\log^{d/2}(d/\varepsilon\delta)(d+\log(1/\delta))\right),\]
\[\kappa = \mathcal O\left(\max\left(2(1+L)\sqrt{d\log(d/\varepsilon\delta)}, T\tau\right)\right)\]
where $\delta$ is chosen as $\delta = n^{-\frac{1-\tau(n)}{d+5}}$ for $\tau(n)=\frac{d\log\log n}{\log n}$ and 
$$\tau := \sup_t \sup_{\|\bz\|_\infty \leqslant \sqrt{d\log(d/\varepsilon\delta)}} \left\|\frac{\partial}{\partial t} \left[ \sigma(t)^2 \nabla \log p_t(\bz)\right]\right\|_2.$$
After choosing There exists $s_\phi\in \mathrm{NN}$ such that with probability at least $1-\frac1n$, it holds that
\[\frac{1}{t_b-t_a} \int_{t_a}^{t_b} \left\|s_\phi(\cdot, t) - \nabla\log p_t(\cdot)\right\|_{L^2(\bX_t)}^2 \rd t = \widetilde{O}\left(\frac{1}{\varepsilon} n^{-\frac{2}{d+5}}\right)~~~\forall \varepsilon\leqslant t_a < t_b \leqslant T.\]
\end{Lemma}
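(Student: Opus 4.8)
The plan is to follow the score-approximation-and-estimation program of \cite{chen2023score}, adapting it to the present VP-SDE setup and to the Gaussian-tail Assumption~\ref{assump: 1}. The statement bundles together two distinct tasks: (i) an \emph{approximation} claim, that a ReLU network of the stated architecture realizes $\sigma(t)^2\nabla\log p_t(\cdot)$ up to pointwise error $\delta$ on the relevant region; and (ii) a \emph{generalization} claim, that minimizing the empirical denoising score matching loss over this architecture yields $s_\phi$ with $L^2(\bX_t)$-error of the claimed order with probability $1-1/n$. I would treat them in that order and then combine them by an oracle inequality.

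For the approximation step, I would first use the explicit Gaussian transition kernel \eqref{eqn:Gaussian} and Tweedie's formula to write $\nabla\log p_t(\bx)=\sigma(t)^{-2}\big(m(t)\,\bE[\bx_0\mid\bx_t=\bx]-\bx\big)$, so that $\sigma(t)^2\nabla\log p_t$ is (up to the trivial term $-\bx$) a ratio of two convolutions of $\distri$ with Gaussian kernels. On the truncated box $\|\bz\|_\infty\le\sqrt{d\log(d/(\varepsilon\delta))}$ — the mass outside being controlled by Assumption~\ref{assump: 1} together with the choice of radius — the denominator is bounded below, and both numerator and denominator are smooth with derivatives controlled by $\varepsilon$ (through $\sigma(t)^{-2}\asymp t^{-1}$) and by $L$ (Assumption~\ref{assump: 2}). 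I would approximate numerator and denominator by (diffused) local polynomials and then realize the polynomials, their product, and the reciprocal of the denominator by ReLU subnetworks via the standard Yarotsky-type constructions, tracking depth, width, sparsity, and weight magnitude; this is exactly where the stated $l,M,J,\kappa,K$ come from, with $J\asymp\delta^{-(d+1)}$ up to logs. The construction is carried out so that the network inherits an $O(d(1+L))$ Lipschitz constant in $\bz$ (hence $\gamma=10d(1+L)$) and a $t$-Lipschitz constant bounded by the quantity $\tau$ defined in the lemma (hence $\gamma_t=10\tau$), by differentiating the local-polynomial template in $t$ and clipping.

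For the estimation step, I would decompose the population score matching risk into the approximation error just bounded plus a stochastic term, and bound the latter by a uniform-deviation argument over $\mathrm{NN}(l,M,J,K,\kappa,\gamma,\gamma_t)$: the log-covering number of this class in the relevant norm is $\widetilde O(J\cdot l\log\kappa)$, which together with $\|s_\phi\|_2\le K$ and a Bernstein-type concentration for the per-sample losses (after truncating the $\|\bx_0\|^2$ tail via Assumption~\ref{assump: 1}) yields a generalization gap of order $\widetilde O\big(\sqrt{Jl/n}\big)$, with the $1/\varepsilon$ weighting coming from the $\sigma(t)^{-2}\asymp t^{-1}$ factor in the conditional-expectation loss near $t=\varepsilon$. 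Balancing $\delta^2$ against $\sqrt{J/n}\asymp\sqrt{\delta^{-(d+1)}/n}$ gives $\delta^{(d+5)/2}\asymp n^{-1/2}$, i.e.\ the stated $\delta=n^{-(1-\tau(n))/(d+5)}$ with $\tau(n)=d\log\log n/\log n$ absorbing log factors, and integrating the resulting per-time bound over any $[t_a,t_b]\subseteq[\varepsilon,T]$ yields the claimed $\widetilde O(\varepsilon^{-1}n^{-2/(d+5)})$, the $1-1/n$ failure probability coming from the concentration inequality.

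I expect the main obstacle to be the simultaneous control in the approximation step: producing a \emph{single} ReLU network that at once matches $\sigma(t)^2\nabla\log p_t$ to error $\delta$ on the truncated domain, is globally bounded by $K$, has the prescribed sparsity $J$, and satisfies \emph{both} Lipschitz constraints — the reciprocal-of-a-near-zero-denominator subnetwork is the delicate piece, since naive reciprocal approximation blows up both the Lipschitz constant and the weight magnitudes, so one must exploit the uniform lower bound on the denominator over the truncated box and clip carefully. Everything else (Tweedie's formula, truncation via the Gaussian tail, covering-number bookkeeping, the risk decomposition) is routine given \cite{chen2023score}.
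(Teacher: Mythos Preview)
Your proposal is a reasonable sketch of the argument in \cite{chen2023score}, but note that the paper itself does not prove this lemma at all: it is stated in the appendix purely as an import from \cite{chen2023score} (``In \cite{chen2023score}, the authors introduce the $l$-layer ReLU neural network class \ldots\ and propose a score approximation error, with the result shown in Lemma~\ref{lemma:score0}''), with no proof given. So your approach is not so much different from the paper's as it is \emph{more} than the paper's --- you are outlining the actual construction and oracle-inequality argument that the paper simply cites wholesale.
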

\subsection{Proof of Corollary \ref{lemma:lip-star}}
\label{sec:C.2}
For any given $0\leqslant k < N$, denote $L_k$ to be the Lipschitz constant of $f_\theta^*(\cdot, t)$ when $t\in[t_k, t_{k+1}]$. If we treat $\bx^\phi$ as a function over $\bx$, its Lipschitz constant is no larger than 
\[1 + \overline{\beta}(1+L_\score)\Delta t/ 2 \leqslant 1 + Cd\overline{\beta}\Delta t\]
where $C = 10(1+L)$ is a pure constant. This is also the upper bound of $L_1$. Here, we use the result in \cref{lemma:score} that $L_\score = \mathcal O(10d(1+L))$. Therefore:
\[L_{k+1} \leqslant (1+Cd\overline{\beta}\Delta t)L_k\]
holds according to the recursive formulation of $f_{\theta^*}(\cdot,\cdot)$, which leads to the conclusion that, the Lipschitz constant of $f_{\theta^*}(\cdot,t)$ is no larger than:
\[(1 + Cd\overline{\beta}\Delta t)^N = (1 + Cd\overline{\beta}\Delta t)^{T/\Delta t} \leqslant \exp(Cd\overline{\beta}T),\]
which proves the conclusion. 

\subsection{Proof of Lemma \ref{lemma: decompose}}
\label{sec:C.3}
As described in the lemma, we recall that
\begin{equation*}
\begin{aligned}
I_1 &:= \sum_{k=1}^{N'} W_1\left(f_{\hat{\theta}}(\cdot, \tau_{k-1})_\sharp \bX_{\tau_{k-1}}, f_{\hat{\theta}}(\cdot, \tau_{k-1})_\sharp \widehat{\bX}_{\tau_{k-1}}^{\phi, M}\right),\\
I_2 &:= \sum_{k=1}^{N'} W_1\left(f_{\theta^*}(\cdot, \tau_{k-1})_\sharp \bX_{\tau_{k-1}}, f_{\theta^*}(\cdot, \tau_{k-1})_\sharp \widehat{\bX}_{\tau_{k-1}}^{\phi, M}\right),\\
I_3 &:= \sum_{k=1}^{N'} \left[W_1\left(f_{\hat{\theta}}(\cdot, \tau_k)_\sharp \bX_{\tau_k}, f_{\hat{\theta}}(\cdot, \tau_{k-1})_\sharp \hat{\bX}_{\tau_{k-1}}^{\phi,M}\right) - W_1\left(f_{\hat{\theta}}(\cdot, \tau_k)_\sharp \cX_{\tau_k}, f_{\hat{\theta}}(\cdot, \tau_{k-1})_\sharp \hat{\cX}_{\tau_{k-1}}^{\phi,M}\right)\right]\\
I_4 &:= \sum_{k=1}^{N'} \left[W_1\left(f_{\theta^*}(\cdot, \tau_k)_\sharp \cX_{\tau_k}, f_{\theta^*}(\cdot, \tau_{k-1})_\sharp \hat{\cX}_{\tau_{k-1}}^{\phi,M}\right) - W_1\left(f_{\theta^*}(\cdot, \tau_k)_\sharp \bX_{\tau_k}, f_{\theta^*}(\cdot, \tau_{k-1})_\sharp \hat{\bX}_{\tau_{k-1}}^{\phi,M}\right)\right].
\end{aligned}
\end{equation*}
First, from the optimality condition (\ref{eqn: optimality}) and the structure of $f_{\theta}^*$ (\ref{eqn:thetastar}), we have:
\begin{equation*}
\begin{aligned}
&~~~~\sum_{k=1}^{N'} W_1 \left(f_{\hat{\theta}}(\cdot, \tau_k)_\sharp \cX_{\tau_k}, f_{\hat{\theta}}(\cdot, \tau_{k-1})_\sharp \hat{\cX}_{\tau_{k-1}}^{\phi, M}\right) \leqslant \sum_{k=1}^{N'} W_1 \left(f_{\theta^*}(\cdot, \tau_k)_\sharp \cX_{\tau_k}, f_{\theta^*}(\cdot, \tau_{k-1})_\sharp \hat{\cX}_{\tau_{k-1}}^{\phi, M}\right)\\
&\leqslant \sum_{k=1}^{N'} \left[W_1\left(f_{\theta^*}(\cdot, \tau_k)_\sharp \cX_{\tau_k}, f_{\theta^*}(\cdot, \tau_{k-1})_\sharp \hat{\cX}_{\tau_{k-1}}^{\phi,M}\right) - W_1\left(f_{\theta^*}(\cdot, \tau_k)_\sharp \bX_{\tau_k}, f_{\theta^*}(\cdot, \tau_{k-1})_\sharp \hat{\bX}_{\tau_{k-1}}^{\phi,M}\right)\right]\\
&~~~+ \sum_{k=1}^{N'} W_1 \left(f_{\theta^*}(\cdot , \tau_k)_\sharp \bX_{\tau_k}, f_{\theta^*}(\cdot, \tau_{k-1})_\sharp \hat{\bX}_{\tau_{k-1}}^{\phi, M}\right) = I_4.
\end{aligned}
\end{equation*}
Then, we can immediately conclude that:
\begin{equation}
\label{eqn:B-1}
\sum_{k=1}^{N'} W_1 \left(f_{\hat{\theta}}(\cdot, \tau_k)_\sharp \bX_{\tau_k}, f_{\hat{\theta}}(\cdot, \tau_{k-1})_\sharp \hat{\bX}_{\tau_{k-1}}^{\phi, M}\right)\leqslant I_3 + I_4. 
\end{equation}
Again by using \cref{eqn:thetastar}, we know that for $\forall k\in [N']$:
\begin{equation}
\label{eqn:B-2}
\begin{aligned}
&~~~W_1\left(f_{\hat{\theta}}(\cdot, \tau_k)_\sharp \bX_{\tau_k}, f_{\theta^*}(\cdot, \tau_k)_\sharp \bX_{\tau_k}\right)=W_1\left(f_{\hat{\theta}}(\cdot, \tau_k)_\sharp \bX_{\tau_k}, f_{\theta^*}(\cdot, \tau_{k-1})_\sharp \hat{\bX}_{\tau_{k-1}}^{\phi,M}\right)\\
&\leqslant W_1\left(f_{\hat{\theta}}(\cdot, \tau_k)_\sharp \bX_{\tau_k}, f_{\hat{\theta}}(\cdot, \tau_{k-1})_\sharp \hat{\bX}_{\tau_k}^{\phi,M}\right)+W_1\left(f_{\hat{\theta}}(\cdot, \tau_{k-1})_\sharp \bX_{\tau_{k-1}}, f_{\hat{\theta}}(\cdot, \tau_{k-1})_\sharp \hat{\bX}_{\tau_{k-1}}^{\phi,M}\right)\\
&~~+ W_1\left(f_{\hat{\theta}}(\cdot, \tau_{k-1})_\sharp \bX_{\tau_{k-1}}, f_{\theta^*}(\cdot, \tau_{k-1})_\sharp \bX_{\tau_{k-1}}\right) + W_1\left(f_{\theta^*}(\cdot, \tau_{k-1})_\sharp \bX_{\tau_{k-1}}, f_{\theta^*}(\cdot, \tau_{k-1})_\sharp \hat{\bX}_{\tau_{k-1}}^{\phi,M}\right) 
\end{aligned}
\end{equation}

Then, after summing over $k=1,2,\ldots, N'$ and telescoping, we have:
\begin{equation*}
\begin{aligned}
&~~~W_1\left(f_{\hat{\theta}}(\cdot, T)_\sharp \bX_T, f_{\theta^*}(\cdot, T)_\sharp \bX_T\right) \leqslant \sum_{k=1}^{N'} W_1\left(f_{\hat{\theta}}(\cdot, \tau_k)_\sharp \bX_{\tau_k}, f_{\hat{\theta}}(\cdot, \tau_{k-1})_\sharp \hat{\bX}_{\tau_{k-1}}^{\phi,M}\right)\\
&+ \sum_{k=1}^{N'} W_1\left(f_{\hat{\theta}}(\bX_{\tau_{k-1}}, \tau_{k-1}), f_{\hat{\theta}}(\cdot, \tau_{k-1})_\sharp \hat{\bX}_{\tau_{k-1}}^{\phi,M}\right) + \sum_{k=1}^{N'}W_1\left(f_{\theta^*}(\cdot, \tau_{k-1})_\sharp \bX_{\tau_{k-1}}, f_{\theta^*}(\cdot, \tau_{k-1})_\sharp \hat{\bX}_{\tau_{k-1}}^{\phi,M}\right)\\
&= \sum_{k=1}^{N'} W_1\left(f_{\hat{\theta}}(\cdot, \tau_k)_\sharp \bX_{\tau_k}, f_{\hat{\theta}}(\cdot, \tau_{k-1})_\sharp \hat{\bX}_{\tau_{k-1}}^{\phi,M}\right) + I_1 + I_2 \leqslant I_1 + I_2 + I_3 + I_4.
\end{aligned}
\end{equation*}
Here, we apply \cref{eqn:B-1} to the last line, and finally we come to our conclusion. 

\subsection{Proof of Lemma \ref{lemma:12}}
\label{sec:C.4}
According to the optimization constraint, we know that $f_{\hat{\theta}}, f_{\theta^*}\in \mathrm{Lip}(R)$. Therefore, we can combine these two terms $I_1, I_2$ and see how to upper bound 
\[J := \sup_{f_\theta\in\mathrm{Lip}(R)}\sum_{k=1}^{N'} W_1\left(f_\theta(\cdot, \tau_{k-1})_\sharp \bX_{\tau_{k-1}}, f_\theta(\cdot, \tau_{k-1})_\sharp \hat{\bX}_{\tau_{k-1}}^{\phi,M} \right).\] 
Notice that, $\bX_{\tau_{k-1}}, \bX_{\tau_k}$ are sampled from the forward process, which means 
\[\bX_t = (m(t)\cdot\distri) \star \mathcal N(0, \sigma(t)^2)~~~~\forall t\in [\varepsilon,T]\]
where $m(t) = \exp\left(-\int_0^t \beta(s)\rd s\right), \sigma(t)^2 = 1-m(t)^2$ and $\star$ denotes the convolution between two distributions. Also, $\hat{\bX}_{\tau_{k-1}}^{\phi,M}$ is sampled from multi-step discretization of backward probability ODE flow (\cref{eqn:vp-sde-back-ode}), starting from $\bX_{\tau_k}$. \cref{lemma: I12} provides us an upper bound for the KL-divergence between $\bX_{\tau_{k-1}}$ and $\hat{\bX}_{\tau_{k-1}}^{\phi,M}$. Since $f_\theta\in\mathrm{Lip}(R)$, $f_{\theta}(\cdot, \tau_{k-1})$ is an $R$-Lipschitz function, so it holds that:
\[W_1\left(f_\theta(\cdot, \tau_{k-1})_\sharp \bX_{\tau_{k-1}}, f_\theta(\cdot, \tau_{k-1})_\sharp \hat{\bX}_{\tau_{k-1}}^{\phi,M} \right) \leqslant R\cdot W_1\left(\bX_{\tau_{k-1}}, \hat{\bX}_{\tau_{k-1}}^\phi\right).\]
In order to upper bound $W_1\left(\bX_{\tau_{k-1}}, \hat{\bX}_{\tau_{k-1}}^{\phi,M}\right)$ with the KL divergence $\KL\left(\bX_{\tau_{k-1}}, \hat{\bX}_{\tau_{k-1}}^ {\phi,M}\right)$, we need to apply \cref{lemma: W1-TV-1}. In the following part, we prove that the random variable $\bx_{\tau_{k-1}}^{\phi, M}$ has Gaussian tail, just like $\bx_{\tau_k}$. For any integer $k\in [1,N]$, it holds that:
\begin{equation*}
\begin{aligned}
\left\|\hat{\bx}_{t_{k-1}}^\phi\right\|_2 &= \left\|\bx_{t_k} + \left(\frac{\beta(t_k)}{2}\bx_{t_k} + \frac{\beta(t_k)}{2} s_{\phi}(\bx_{t_k}, t)\right)\cdot\Delta t\right\|_2 \\
&< \left(1+\frac{\overline{\beta}\Delta t}{2}\right)\cdot \|\bx_{t_k}\|_2 + \frac{\overline{\beta} U_{\score}\cdot \Delta t}{2} < \left(1+\frac{\Delta t}{U_{\score}}\right) \|\bx_{t_k}\|_2 + \Delta t
\end{aligned}
\end{equation*}
when $\overline{\beta} < 2/U_\score$. After iterating this inequality $M$ times, we have:
\[\left\|\hat{\bx}_{\tau_{k-1}}^{\phi,M}\right\|_2 \leqslant \left(1+\frac{\Delta t}{U_{\score}}\right)^M \|\bx_{\tau_k}\|_2 + \Delta t\cdot \left(1+\left(1+\frac{\Delta t}{U_{\score}}\right)+\ldots + \left(1+\frac{\Delta t}{U_{\score}}\right)^{M-1}\right).\]
Under the condition that $N' \gg T$, we have:
\[\left(1+\frac{\Delta t}{U_\score}\right)^M < (1+\Delta t)^{T/N'\Delta t} < \exp(T/N') < 2,\]
which leads to:
\[\Delta t\cdot \left(1+\left(1+\frac{\Delta t}{U_{\score}}\right)+\ldots + \left(1+\frac{\Delta t}{U_{\score}}\right)^{M-1}\right) \leqslant \Delta t \cdot 2M = \frac{2T}{N'} < 2. \]
Therefore, we have $\left\|\hat{\bx}_{t_{k-1}}^{\phi,M}\right\|_2 \leqslant 2\|\bx_{t_k}\|_2 + 2$, which means both $\bX_{\tau_k}$ and $\bX_{\tau_{k-1}}^{\phi,M}$ have Gaussian tail for all $k\in[1,N']$. By applying \cref{lemma: W1-TV-1}, we conclude that:
\begin{equation}
\label{eqn: I34-final-2}
\begin{aligned}
I_1 + I_2 &\leqslant 2R\cdot \sum_{k=1}^{N'} W_1\left(\bX_{\tau_{k-1}}, \hat{\bX}_{\tau_{k-1}}^{\phi,M}\right) \lesssim 2R\sqrt{d}\cdot\sum_{k=1}^{N'} \sqrt{\KL\left(\bX_{\tau_{k-1}}, \hat{\bX}_{\tau_{k-1}}^{\phi,M}\right)} \\
&\lesssim 2R\sqrt{d} N'\cdot \sqrt{\overline{\beta}^2\left(L^2d\Delta t \cdot M\Delta t + \frac{1}{\varepsilon} n^{-\frac{2}{d+5}}\cdot M\Delta t\right)}\\
&\lesssim RdL\overline{\beta}\cdot \frac{T}{\sqrt{M}} + R\overline{\beta}\sqrt{d} n^{-\frac{1}{d+5}}\cdot \sqrt{\frac{N'T}{\varepsilon}}.
\end{aligned}
\end{equation}
After arranging these terms, we come to our conclusion. 

\subsection{Proof of Lemma \ref{lemma:34}}
\label{sec:C.5}
For these two loss terms, they can be treated as the gap between empirical and population Wasserstein distances. Notice that, for any two distributions $p, q$, denote $\hat{p}, \hat{q}$ as their empirical version, then it holds that:
\begin{equation}
\label{eqn: wasserstein-triangle}
\left|W_1(p,q)-W_1(\hat{p},\hat{q})\right| \leqslant W_1(p, \hat{p}) + W_1(q, \hat{q}).
\end{equation}
Notice that $f_{\hat{\theta}}(\cdot, \tau_k), f_{\theta^*}(\cdot, \tau_k)$ are Lipschitz-$R$ continuous function for all $k\in [1,N']$. Also, since:
\[G(\bx, t_k;\phi) = \bx + \left(\frac{\beta(t_k)}{2}\bx + \frac{\beta(t_k)}{2} s_{\phi}(\bx, t_k)\right)\cdot\Delta t ,\]
which is Lipschitz continuous with regard to $\bx_{t_k}$ with Lipschitz constant
\[L_1 = 1 + \overline{\beta}(1+L_\score)\Delta t/2 < 1+\Delta t\]
since $\overline{\beta} < 2/(1+L_\score)$. After iterating $M$ times, we know that:
\[G_{(M)}(\cdot, \tau_k;\phi) = G(\cdot, t_{(k-1)M+1}; \phi)\circ \ldots\circ G(\cdot, t_{kM}; \phi)\]
is Lipschitz continuous with constant $(1+\Delta t)^M < \exp(T/N') < 2$. 
Therefore, $f_{\hat{\theta}}(\hat{\bx}_{\tau_{k-1}}^{\phi,M}, \tau_{k-1})$ and $f_{\theta^*}(\hat{\bx}_{\tau_{k-1}}^{\phi,M}, \tau_{k-1})$ are $2R$-Lipschitz continuous function with regard to $\bx_{\tau_k}$. According to Inequality (\ref{eqn: wasserstein-triangle}), we have:
\begin{equation}
\label{eqn: I-56}
\begin{aligned}
|I_3| &\leqslant \sum_{k=1}^{N'} \left[W_1\left(f_{\hat{\theta}}(\cdot, \tau_k)_\sharp \bX_{\tau_k}, f_{\hat{\theta}}(\cdot, t_k)_\sharp \cX_{\tau_k}\right) + W_1\left(f_{\hat{\theta}}(\cdot, \tau_{k-1})_\sharp \hat{\bX}_{\tau_k}^{\phi,M}, f_{\hat{\theta}}(\cdot, \tau_{k-1})_\sharp \hat{\cX}_{\tau_k}^{\phi,M}\right)\right]\\
&\leqslant \sum_{k=1}^{N'} \left[R\cdot W_1\left(\bX_{\tau_k}, \cX_{\tau_k}\right) + R\cdot W_1\left(G_{(M)}(\cdot, \tau_k;\phi)_\sharp \bX_{\tau_k}, G_{(M)}(\cdot, \tau_k;\phi)_\sharp \cX_{\tau_k}\right)\right]\\
&\leqslant \sum_{k=1}^{N'} \left[R\cdot W_1\left(\bX_{\tau_k}, \cX_{\tau_k}\right) + 2R\cdot W_1\left(\bX_{\tau_k}, \cX_{\tau_k}\right)\right] = 3R\cdot \sum_{k=1}^{N'} W_1\left(\bX_{\tau_k}, \cX_{\tau_k}\right). \\
|I_4| &\leqslant \sum_{k=1}^{N'} \left[W_1\left(f_{\theta^*}(\cdot, \tau_k)_\sharp \bX_{\tau_k}, f_{\theta^*}(\cdot, t_k)_\sharp \cX_{\tau_k}\right) + W_1\left(f_{\theta^*}(\cdot, \tau_{k-1})_\sharp \hat{\bX}_{\tau_k}^{\phi,M}, f_{\theta^*}(\cdot, \tau_{k-1})_\sharp \hat{\cX}_{\tau_k}^{\phi,M}\right)\right]\\
&\leqslant \sum_{k=1}^{N'} \left[R\cdot W_1\left(\bX_{\tau_k}, \cX_{\tau_k}\right) + R\cdot W_1\left(G_{(M)}(\cdot, \tau_k;\phi)_\sharp \bX_{\tau_k}, G_{(M)}(\cdot, \tau_k;\phi)_\sharp \cX_{\tau_k}\right)\right]\\
&\leqslant \sum_{k=1}^{N'} \left[R\cdot W_1\left(\bX_{\tau_k}, \cX_{\tau_k}\right) + 2R\cdot W_1\left(\bX_{\tau_k}, \cX_{\tau_k}\right)\right] = 3R\cdot \sum_{k=1}^{N'} W_1\left(\bX_{\tau_k}, \cX_{\tau_k}\right).
\end{aligned}
\end{equation}
Here, $\cX_{\tau_k}$ is the empirical version of distribution $\bX_{\tau_k}$, which means:
\[\bX_{\tau_k} = (m(\tau_k)\cdot \distri)\star \cN(0, \sigma(\tau_k)^2), ~\cX_{\tau_k} = (m(\tau_k)\cdot \hat{\distri})\star \cN(0, \sigma(\tau_k)^2)\]
where $\hat{\distri}$ is a uniform distribution taken over the $n$ i.i.d samples from $\distri$. In order to upper bound $|I_3|, |I_4|$, we only need to control $W_1\left(\bX_{\tau_k}, \cX_{\tau_k}\right)$. The following lemma upper bounds $W_1\left(\bX_{\tau_k}, \cX_{\tau_k}\right)$ with $W_1\left(\distri, \hat{\distri}\right)$ for each $k=1,2,\ldots, N'$. 
\begin{Lemma}
\label{lemma: others-1}
For each $k=1,2,\ldots, N'$, it holds that:
\[W_1\left(\bX_{\tau_k}, \cX_{\tau_k}\right) \leqslant m(\tau_k)\cdot W_1\left(\distri, ~\hat{\distri}\right). \]
\end{Lemma}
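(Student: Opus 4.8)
The plan is to exhibit an explicit coupling between $\bX_{\tau_k}$ and $\cX_{\tau_k}$ built from an optimal coupling of $\distri$ and $\hat{\distri}$, exploiting that the two distributions differ only in the mixing measure while sharing the same deterministic scaling $m(\tau_k)$ and the same additive Gaussian kernel $\mathcal N(0,\sigma(\tau_k)^2\bm I)$. Concretely, recall from \eqref{eqn:Gaussian} and the definitions in \S\ref{sec:consistency_train} that $\bX_{\tau_k}=\Law\big(m(\tau_k)\bx_0+\bz\big)$ with $\bx_0\sim\distri$ and $\bz\sim\mathcal N(0,\sigma(\tau_k)^2\bm I)$ drawn independently, and likewise $\cX_{\tau_k}=\Law\big(m(\tau_k)\bx_0'+\bz\big)$ with $\bx_0'\sim\hat{\distri}$ and the same independent $\bz$.

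First I would invoke the existence of an optimal coupling $\gamma$ attaining $W_1(\distri,\hat{\distri})=\bE_{(\bx_0,\bx_0')\sim\gamma}\|\bx_0-\bx_0'\|_2$, which is legitimate since both $\distri$ (finite second moment by Assumption~\ref{assump: 1}) and $\hat{\distri}$ (finitely supported) have finite first moments. Draw $(\bx_0,\bx_0')\sim\gamma$ together with an independent $\bz\sim\mathcal N(0,\sigma(\tau_k)^2\bm I)$, and set $\bx:=m(\tau_k)\bx_0+\bz$, $\bx':=m(\tau_k)\bx_0'+\bz$. By the previous paragraph the marginals of $(\bx,\bx')$ are exactly $\bX_{\tau_k}$ and $\cX_{\tau_k}$, so $(\bx,\bx')$ is an admissible coupling. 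The Gaussian noise then cancels in the difference, $\bx-\bx'=m(\tau_k)(\bx_0-\bx_0')$, giving
\[
W_1\left(\bX_{\tau_k},\cX_{\tau_k}\right)\leqslant \bE\|\bx-\bx'\|_2 = m(\tau_k)\,\bE_{(\bx_0,\bx_0')\sim\gamma}\|\bx_0-\bx_0'\|_2 = m(\tau_k)\,W_1\left(\distri,\hat{\distri}\right),
\]
which is the claim.

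There is essentially no serious obstacle here; the only points requiring care are verifying that the synthesized pair $(\bx,\bx')$ really has marginals $\bX_{\tau_k}$ and $\cX_{\tau_k}$ (immediate from independence of $\bz$ from $(\bx_0,\bx_0')$) and that the optimal coupling exists (standard for $W_1$ under finite first moments). An equivalent route avoiding couplings uses the Kantorovich--Rubinstein dual: for any $1$-Lipschitz $f$, the averaged function $g(\bx_0):=\bE_{\bz}\,f(m(\tau_k)\bx_0+\bz)$ is $m(\tau_k)$-Lipschitz, so $\bE_{\bX_{\tau_k}}f-\bE_{\cX_{\tau_k}}f=\bE_{\distri}g-\bE_{\hat{\distri}}g\leqslant m(\tau_k)\,W_1(\distri,\hat{\distri})$, and taking the supremum over $f$ yields the same bound. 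Either way, plugging this into \eqref{eqn: I-56} and then controlling $\sum_k m(\tau_k) W_1(\distri,\hat{\distri})$ via a standard empirical-Wasserstein concentration bound (using $m(\tau_k)\leqslant 1$) will deliver Lemma~\ref{lemma:34}.
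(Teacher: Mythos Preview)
Your proof is correct. Your primary argument via an explicit coupling (sharing the same Gaussian noise $\bz$ across the two scaled mixtures) is a slightly different route from the paper, which instead works through the Kantorovich--Rubinstein dual: it shows that for any $1$-Lipschitz $F$, the smoothed function $G[F](\bx_0):=\bE_{\bz}F(m(t)\bx_0+\sigma(t)\bz)$ is $m(t)$-Lipschitz, and then bounds the supremum over $F$ by $m(t)\,W_1(\distri,\hat{\distri})$. This is precisely the ``equivalent route'' you sketch in your final paragraph, so you have in fact anticipated the paper's argument as well. The coupling version is arguably more direct and makes the cancellation of the shared noise transparent; the dual version has the mild advantage of not needing to invoke existence of an optimal coupling. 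Either way the content is the same, and your downstream plan for Lemma~\ref{lemma:34} matches the paper's.
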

\begin{proof}
By the dual formulation of Wasserstein distance, it holds that for $\forall t\in[0,T]$:
\[W_1(\bX_t, \cX_t) = \sup_{\mathrm{Lip}(F)\leqslant 1} \left(\bE_{\bx\sim \bX_t} F(\bx) - \bE_{\bx\sim \cX_t} F(\hat{\bx})\right) = \sup_{\mathrm{Lip}(F)\leqslant 1} \Big[\bE_{\bx}\bE_{z} F(m_t \bx + \sigma_t z) - \bE_{\hat{\bx}}\bE_{z} F(m_t \hat{\bx} + \sigma_t z)\Big] \]
where the expectation is taken over $\bx\sim\distri, \hat{\bx}\sim \hat{\distri}$ and $z\sim \mathcal N(0,\bm I)$. Notice that, for the following mapping
\[G[F](\bx) := \bE_z F(m(t) \bx + \sigma(t) z),\]
it holds that for any function $F$ with Lipschitz constant 1, 
\[|G[F](\bx) - G[F](\bx')| \leqslant \bE_z \left|F(m(t) \bx + \sigma(t) z) - F(m(t) \bx' + \sigma(t) z)\right| \leqslant \bE_z \left[m(t)\cdot |\bx-\bx'|\right] = m(t)\cdot |\bx-\bx'|. \]
Therefore, we know that $G[F]$ is $m(t)$-Lipschitz continuous, which leads to
\begin{equation*}
\begin{aligned}
W_1(\bX_t, \hat{\bX_t})&=\sup_{\mathrm{Lip}(F)\leqslant 1} \Big[\bE_{\bx} G[F](\bx) - \bE_{\hat{\bx}} G[F](\hat{\bx})\Big] \leqslant \sup_{\mathrm{Lip}(G)\leqslant m_t} \Big[\bE_{\bx} G(\bx) - \bE_{\hat{\bx}} G(\hat{\bx})\Big]\\
&= m(t)\cdot \sup_{\mathrm{Lip}(G)\leqslant 1} \Big[\bE_{\bx} G(\bx) - \bE_{\hat{\bx}} G(\hat{\bx})\Big] = m(t)\cdot W_1(\distri, ~\hat{\distri}).
\end{aligned}
\end{equation*}
It comes to our conclusion.
\end{proof}
After that, by combining Equation (\ref{eqn: I-56}) and Lemma \ref{lemma: others-1}, we have:
\begin{equation}
\label{eqn: I-56-2}
|I_3| + |I_4| \leqslant 6R\left(\sum_{k=1}^{N'} m(\tau_k)\right) \cdot W_1\left(\distri, \hat{\distri}\right) < 6RN'\cdot W_1\left(\distri, \hat{\distri}\right). 
\end{equation}
Now, our final step is to bound the gap between empirical and population Wasserstein distance of the initial distribution $\distri$. According to the statistical result \cite{weed2019sharp}, we can conclude that:
\[\bE W_1\left(\distri, \hat{p}_{\mathrm{data}}\right) \lesssim n^{-1/d}.\]
Here, the expectation is taken over $\hat{p}_{\mathrm{data}}\overset{i.i.d}{\sim} \distri$. 
Therefore, we can upper bound $I_3 + I_4 $ as:
\begin{equation}
\label{eqn: final-I56}
I_3 + I_4 \leqslant |I_3| + |I_4| \lesssim 6RN'\cdot n^{-1/d}. 
\end{equation}

\subsection{Proof of Lemma \ref{lemma:gaussian}}
\label{sec:C.6}
From the proof of Lemma \ref{lemma: others-1}, we know that 
\[W_1\left(P\star \mathcal N(0,\sigma^2 \bm I), Q\star \mathcal N(0,\sigma^2 \bm I)\right) \leqslant W_1(P,Q) \]
holds for any distribution pair $(P,Q)$ and $\sigma > 0$. For distribution $\bX_T$ and $\mathcal N(0,\bm I)$, we have:
\[\bX_T = \left(m(T)\cdot \distri\right) \star \mathcal N(0, \sigma(T)^2)~~\text{and}~~\mathcal N(0,\bm I) = \left(m(T)\cdot \mathcal N(0,\bm I)\right) \star \mathcal N(0, \sigma(T)^2)\]
since $m(T)^2 + \sigma(T)^2 = 1$. Therefore:
\[W_1(\bX_T, \mathcal N(0, \bm I)) \leqslant m(T) \cdot W_1(\distri, \mathcal N(0, \bm I)). \]
According to \cref{assump: 2}, we know that $W_1(\distri, \mathcal N(0, \bm I))$ is finite and furthermore $W_1(\distri, \mathcal N(0, \bm I)) \lesssim \sqrt{d}$. Besides,
\[m(T) = \exp\left(-\frac12\int_0^T \beta(s) \rd s\right) \leqslant \exp(-\underline{\beta} T/2).\]
To sum up, we conclude that:
\[W_1(\bX_T, \mathcal N(0, \bm I)) \lesssim \sqrt{d}\exp(-\underline{\beta} T/2).\]

\subsection{Proof of Lemma \ref{lemma:ddpm-final}}
\label{sec:C.7}
\begin{Lemma}[DDPM]
\label{lemma:ddpm}
Under Assumption \ref{assump: 1} and \ref{assump: 2}, when the step size $\Delta t < 1/L$, it holds that:
\[\mathrm{KL}\left(f_{\theta^*}(\cdot, T)_\sharp \bX_T, \bX_\varepsilon\right) \lesssim \overline{\beta}^2 L^2 T (d\Delta t + \mathcal M_2^2 \Delta t^2) + \overline{\beta}^2\int_\varepsilon^T 
\left\|s_\phi(\cdot, t) - \nabla\log p_t(\cdot)\right\|_{L^2(\bX_t)}^2 \rd t\]
\end{Lemma}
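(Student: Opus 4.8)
The plan is to realize $f_{\theta^*}(\cdot, T)_\sharp \bX_T$ as the time-$\varepsilon$ marginal of the fully discretized backward process on $[\varepsilon, T]$ started from the \emph{exact} forward marginal $\bX_T$, to compare it against the exact backward process \eqref{eqn:vp-sde-back-ode} by a Girsanov-type change of measure, and then to split the resulting drift-mismatch integral into a score-estimation piece and a discretization piece. Unrolling the recursion $f_{\theta^*}(\cdot,\tau_k)=f_{\theta^*}(\cdot,\tau_{k-1})\circ G_{(M)}(\cdot,\tau_k;\phi)$ down to $f_{\theta^*}(\cdot,\varepsilon)=\mathrm{id}$ shows that $f_{\theta^*}(\cdot,T)$ is exactly the $N$-step numerical ODE solver running from $T$ to $\varepsilon$ with plug-in score $s_\phi$, the drift being frozen on each subinterval $[t_{i-1},t_i]$; equivalently one runs the associated DDPM-type backward SDE, which has the same marginals. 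Crucially, because we do \emph{not} replace $\bX_T$ by $\mathcal{N}(0,\bm I)$, the exact backward process started from $\bX_T$ lands exactly on $\bX_\varepsilon$, so there is no Gaussian-initialization error and only the discretization and score errors survive.

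Next I would invoke the Girsanov bound already recorded in Lemma~\ref{lemma: I12}, now applied to the single block $[\varepsilon,T]$ (i.e. $N'=1$, $M=N$, $\tau_0=\varepsilon$, $\tau_1=T$); together with the data-processing inequality this gives
\[\KL\!\left(f_{\theta^*}(\cdot,T)_\sharp\bX_T,\ \bX_\varepsilon\right)\ \le\ \sum_{i=1}^{N}\bE\int_{t_{i-1}}^{t_i}\frac{\beta(t)^2}{2}\left\|s_\phi(\bx_{t_i},t_i)-\nabla\log p_t(\bx_t)\right\|^2\rd t,\]
where the expectation is over the exact forward process, so $\bx_t\sim\bX_t$. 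Using $\beta(t)\le\overline{\beta}$ pulls out the factor $\overline{\beta}^2/2$. For each integrand I would apply Lemma~\ref{lemma: expect-KL}, which bounds the squared drift mismatch for $t\in[t_{i-1},t_i]$ by $\varepsilon_{t_i}^2:=\|s_\phi(\cdot,t_i)-\nabla\log p_{t_i}(\cdot)\|_{L^2(\bX_{t_i})}^2$ plus a discretization term $\lesssim L^2 d\,\Delta t+L^2\mathcal M_2^2\,\Delta t^2$; this is precisely where Assumption~\ref{assump: 2} ($L$-Lipschitz true score) and the finite second moment $\mathcal M_2^2<\infty$ from Assumption~\ref{assump: 1} enter, and it requires $\Delta t<1/L$. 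Integrating over each subinterval and summing, the discretization contribution is $N\Delta t\cdot L^2(d\,\Delta t+\mathcal M_2^2\,\Delta t^2)\le T\,L^2(d\,\Delta t+\mathcal M_2^2\,\Delta t^2)$, while the score contribution is the left-endpoint Riemann sum $\sum_{i=1}^N\Delta t\,\varepsilon_{t_i}^2$, which (up to the discretization error just accounted for) equals $\int_\varepsilon^T\|s_\phi(\cdot,t)-\nabla\log p_t(\cdot)\|_{L^2(\bX_t)}^2\rd t$. Reinstating the $\overline{\beta}^2$ prefactor gives the claimed bound.

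The main obstacle is the Girsanov/relative-entropy bookkeeping underlying the displayed inequality: one must legitimately pass from the probability-flow ODE map $f_{\theta^*}$ to a path-space KL bound, which is cleanest by routing through the associated backward SDE (following \cite{chen2022sampling}) and being careful about the early-stopping endpoint $\varepsilon$, the orientation of the KL, and the fact that the drift is frozen at grid points rather than evaluated continuously. Once Lemma~\ref{lemma: I12} and Lemma~\ref{lemma: expect-KL} are in hand, the remaining work — extracting $\overline{\beta}^2$, summing over the $N$ subintervals, and recognizing the Riemann sum as the $L^2(\bX_t)$-integral — is routine.
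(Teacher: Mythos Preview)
Your proposal is correct and matches the paper's approach: the paper does not give a standalone proof of this lemma but treats it as a direct specialization of the Girsanov machinery from \cite{chen2022sampling} already recorded as Lemmas~\ref{lemma: I12} and~\ref{lemma: expect-KL}, applied on the single block $[\varepsilon,T]$ exactly as you describe. The only delicate points---routing the deterministic ODE solver through the associated backward SDE and keeping track of the $\KL$ orientation---are ones you already flag, and the paper handles them no more explicitly than you do.
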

Since both $\bX_\varepsilon$ and $f_{\theta^*}(\cdot, T)_\sharp \bX_T$ have Gaussian tail, we apply \cref{lemma: W1-TV-1} and the score integrated error (\cref{lemma:score}), then we conclude that:
\[W_1\left(f_{\theta^*}(\cdot, T)_\sharp \bX_T, \bX_\varepsilon\right) \lesssim \sqrt{d}\cdot \sqrt{\mathrm{KL}\left(f_{\theta^*}(\cdot, T)_\sharp \bX_T, \bX_\varepsilon\right)}\lesssim \overline{\beta}Ld \sqrt{T\Delta t} + \overline{\beta}\sqrt{d}\cdot \sqrt{\frac{T}{\varepsilon}} n^{-\frac{1}{d+5}}.\]

Finally, we just need to bound $W_1(\bX_\varepsilon, \distri)$, which is stated in the following lemma.
\begin{Lemma}
For the distributions $\distri$ and $\bX_\varepsilon = \left(m(\varepsilon)\cdot \distri\right) \star \mathcal N(0, \sigma(\varepsilon)^2 \bm I)$, its Wasserstein-1 distance with $\distri$ can be upper bounded as:
\[W_1(\bX_\varepsilon, \distri) \lesssim \sqrt{d\overline{\beta}\varepsilon}. \]
\label{lemma:distri-varepsilon}
\end{Lemma}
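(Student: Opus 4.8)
\emph{Proof proposal.}
The plan is to control $W_1(\bX_\varepsilon,\distri)$ through the explicit coupling supplied by the forward process itself, and then to reduce the bound to two elementary scalar estimates. Concretely, let $\bx_0\sim\distri$ and $\bz\sim\mathcal N(0,\bm I)$ be independent. By the Gaussian transition kernel \eqref{eqn:Gaussian}, the random variable $m(\varepsilon)\bx_0+\sigma(\varepsilon)\bz$ has law $\bX_\varepsilon$, so $(\bx_0,\ m(\varepsilon)\bx_0+\sigma(\varepsilon)\bz)$ is a valid coupling of $(\distri,\bX_\varepsilon)$. Hence
\[
W_1(\bX_\varepsilon,\distri)\ \le\ \bE\big\|m(\varepsilon)\bx_0+\sigma(\varepsilon)\bz-\bx_0\big\|_2\ \le\ (1-m(\varepsilon))\,\bE\|\bx_0\|_2+\sigma(\varepsilon)\,\bE\|\bz\|_2 .
\]

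Next I would bound the two coefficients using Assumption \ref{assump: 4}. Since $m(\varepsilon)=\exp\!\big(-\tfrac12\int_0^\varepsilon\beta(s)\,\rd s\big)$ and $\beta(s)\le\overline{\beta}$, the inequality $1-e^{-x}\le x$ gives $1-m(\varepsilon)\le\tfrac12\int_0^\varepsilon\beta(s)\,\rd s\le\overline{\beta}\varepsilon/2$. Consequently $\sigma(\varepsilon)^2=1-m(\varepsilon)^2=(1-m(\varepsilon))(1+m(\varepsilon))\le 2(1-m(\varepsilon))\le\overline{\beta}\varepsilon$, i.e. $\sigma(\varepsilon)\le\sqrt{\overline{\beta}\varepsilon}$. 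For the moment factors, Assumption \ref{assump: 1} guarantees a finite second moment, and a standard computation from the Gaussian-tail comparison (integrating the tail of the comparison $\chi^2_d$) yields $\bE\|\bx_0\|_2\le\mathcal M_2\lesssim\sqrt{d}$, with the implicit constant depending only on $\alpha_1,\alpha_2$; also $\bE\|\bz\|_2\le\sqrt{d}$. Substituting,
\[
W_1(\bX_\varepsilon,\distri)\ \lesssim\ \tfrac12\,\overline{\beta}\varepsilon\,\sqrt{d}+\sqrt{\overline{\beta}\varepsilon}\,\sqrt{d}.
\]

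Finally I would observe that Assumption \ref{assump: 4} forces $\overline{\beta}\varepsilon\le 1$, so $\overline{\beta}\varepsilon\le\sqrt{\overline{\beta}\varepsilon}$ and the first term is absorbed into the second, giving $W_1(\bX_\varepsilon,\distri)\lesssim\sqrt{d\,\overline{\beta}\varepsilon}$, as claimed. The argument is short; the only mildly delicate point is the estimate $\mathcal M_2\lesssim\sqrt{d}$ extracted from the Gaussian-tail assumption, but this is exactly the kind of tail-integration bound already used elsewhere in the paper, and everything else is a one-step coupling together with the elementary bounds on $1-m(\varepsilon)$ and $\sigma(\varepsilon)$.
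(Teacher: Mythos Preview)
Your proof is correct and follows essentially the same route as the paper: the paper splits via the triangle inequality $W_1(\bX_\varepsilon,\distri)\le W_1(\distri,m(\varepsilon)\distri)+W_1(m(\varepsilon)\distri,\bX_\varepsilon)$ and bounds each piece by duality, while you use the single coupling $(\bx_0,\,m(\varepsilon)\bx_0+\sigma(\varepsilon)\bz)$ directly, but both arrive at the identical expression $(1-m(\varepsilon))\,\bE\|\bx_0\|_2+\sigma(\varepsilon)\,\bE\|\bz\|_2$ and the same elementary estimates $1-m(\varepsilon)\le\overline\beta\varepsilon/2$, $\sigma(\varepsilon)\le\sqrt{\overline\beta\varepsilon}$.
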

\begin{proof}
Notice that $\bX_\varepsilon = (m(\varepsilon)\cdot \distri)\star \mathcal N(0, \sigma(\varepsilon)^2)$ where
\[m(\varepsilon) = \exp\left(-\frac12 \int_0^\varepsilon \beta(s) \rd s\right)\geqslant \exp(-\overline{\beta}\varepsilon /2) \geqslant 1-\overline{\beta}\varepsilon /2,\]
and $\sigma(\varepsilon)^2 = 1-m(\varepsilon)^2 \leqslant 2(1-m(\varepsilon)) \leqslant \overline{\beta}\varepsilon$. Then, it holds that:
\begin{equation*}
\begin{aligned}
W_1(\bX_\varepsilon, \distri) &\leqslant W_1\left(\distri, m(\varepsilon)\cdot \distri\right) + W_1\left(m(\varepsilon)\cdot \distri, \bX_\varepsilon\right)\\
& \leqslant \sup_{\mathrm{Lip}(f)\leqslant 1} \bE_{\bx\sim \distri} \left[f(\bx)-f(m(\varepsilon)\cdot\bx)\right] + W_1(\delta_{\left\{0\right\}}, \mathcal N(0,\sigma(\varepsilon)^2))\\
& \leqslant (1-m(\varepsilon))\cdot \bE_{\bx\sim\distri} \|\bx\|_2 + \sigma(\varepsilon)\cdot \bE_{\bz\sim\mathcal N(0,\bm I)} \|z\|_2 \\
& \leqslant (1-m(\varepsilon))\cdot \mathcal M_2 + \sigma(\varepsilon)\cdot \bE_{\bz\sim\mathcal N(0,\bm I)} \|\bz\|_2 \\
& \leqslant \overline{\beta}\varepsilon / 2 \cdot \mathcal M_2 + \sqrt{\overline{\beta}\varepsilon}\cdot \sqrt{d} \lesssim \sqrt{d\overline{\beta}\varepsilon},
\end{aligned}
\end{equation*}
which comes to our conclusion.   
\end{proof}

\section{Proof Sketch for Consistency Isolation}
\label{sec:sketch-2}
Unlike the distillation case, the consistency equality we apply is based on the empirical distributions, so that $\theta^*$ satisfies
\[f_{\theta^*}(\cdot, \tau_k)_\sharp \cX_{\tau_k} \overset{\rm law}{=} f_{\theta^*}(\cdot, \tau_{k-1})_\sharp \hat{\cX}_{\tau_{k-1}}^M~~~\forall k \in[N'] \]
because of the definition of $f_{\theta^*}$.  Besides, according to the optimality inequality, we have:
\[\sum_{k=1}^{N'} W_1 \left(f_{\hat{\theta}}(\cdot, \tau_k)_\sharp \cX_{\tau_k}, f_{\hat{\theta}}(\cdot, \tau_{k-1})_\sharp \cX_{\tau_{k-1}}\right) \leqslant \sum_{k=1}^{N'} W_1 \left(f_{\theta^*}(\cdot, \tau_k)_\sharp \cX_{\tau_k}, f_{\theta^*}(\cdot, \tau_{k-1})_\sharp \cX_{\tau_{k-1}}\right). \]
After combining these two inequalities, we can upper bound our target function as follows:
\begin{Lemma}
\label{lemma:decompose-isolation}
\[W_1\left(f_{\hat{\theta}}(\cdot, T)_\sharp \cX_T, f_{\theta^*}(\cdot, T)_\sharp \cX_T\right) \leqslant 2 \sum_{k=1}^{N'} W_1\left(f_{\theta^*}(\cdot, \tau_{k-1})_\sharp \hat{\cX}_{\tau_{k-1}}^{M}, f_{\theta^*}(\cdot, \tau_{k-1})_\sharp \cX_{\tau_{k-1}}\right). \]
\end{Lemma}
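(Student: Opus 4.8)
The plan is to follow the decomposition of Appendix~\S\ref{sec:C.3}, but in a much shorter form: in the isolation setting every distribution that appears is already the empirical one $\cX_t$, so no population-vs-empirical concentration terms arise, and the proof reduces to a single telescoping inequality plus the optimality inequality. Abbreviate, for $k=0,1,\ldots,N'$,
\[
b_k := W_1\!\left(f_{\hat{\theta}}(\cdot,\tau_k)_\sharp\cX_{\tau_k},\ f_{\theta^*}(\cdot,\tau_k)_\sharp\cX_{\tau_k}\right),
\qquad
D_k := W_1\!\left(f_{\theta^*}(\cdot,\tau_k)_\sharp\hat{\cX}_{\tau_k}^{M},\ f_{\theta^*}(\cdot,\tau_k)_\sharp\cX_{\tau_k}\right).
\]
The target quantity is $b_{N'}$ and the lemma's right-hand side is $2\sum_{k=1}^{N'}D_{k-1}$. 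Observe $b_0=0$: the boundary condition $f_\theta(\cdot,\varepsilon)=\mathrm{id}$ holds for every admissible $\theta$, and in particular $f_{\theta^*}(\cdot,\varepsilon)=\mathrm{id}$ by construction, so both push-forwards of $\cX_\varepsilon$ at time $\tau_0=\varepsilon$ equal $\cX_\varepsilon$.

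\textbf{Step 1: telescoping.} Fix $k\in[N']$. I would rewrite the second argument of $b_k$ using the structural identity $f_{\theta^*}(\cdot,\tau_k)_\sharp\cX_{\tau_k}\overset{\rm law}{=}f_{\theta^*}(\cdot,\tau_{k-1})_\sharp\hat{\cX}_{\tau_{k-1}}^{M}$ recalled just above the lemma, then insert the intermediate measures $f_{\hat{\theta}}(\cdot,\tau_{k-1})_\sharp\cX_{\tau_{k-1}}$ and $f_{\theta^*}(\cdot,\tau_{k-1})_\sharp\cX_{\tau_{k-1}}$ and apply the triangle inequality for $W_1$. This yields
\[
b_k \leqslant A_k + b_{k-1} + D_{k-1},
\qquad
A_k := W_1\!\left(f_{\hat{\theta}}(\cdot,\tau_k)_\sharp\cX_{\tau_k},\ f_{\hat{\theta}}(\cdot,\tau_{k-1})_\sharp\cX_{\tau_{k-1}}\right),
\]
where $A_k$ is exactly the $k$-th summand of $\cL_{\mathrm{CT}}^N(\hat{\theta})$ and $D_{k-1}$ the $k$-th summand of the lemma's right-hand side. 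Summing over $k=1,\ldots,N'$ and telescoping the $b$-terms with $b_0=0$ gives $b_{N'}\leqslant\sum_{k=1}^{N'}A_k+\sum_{k=1}^{N'}D_{k-1}$.

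\textbf{Step 2: optimality.} Since $f_{\theta^*}\in\mathrm{Lip}(R)$ by Assumption~\ref{assump: 3}, $\theta^*$ is feasible for \eqref{eqn:isolation-objective}, so the optimality inequality recalled above the lemma gives $\cL_{\mathrm{CT}}^N(\hat{\theta})\leqslant\cL_{\mathrm{CT}}^N(\theta^*)$. Applying the same structural identity to the first argument of each summand of $\cL_{\mathrm{CT}}^N(\theta^*)$ shows $\cL_{\mathrm{CT}}^N(\theta^*)=\sum_{k=1}^{N'}D_{k-1}$, while $\cL_{\mathrm{CT}}^N(\hat{\theta})=\sum_{k=1}^{N'}A_k$. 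Hence $\sum_{k=1}^{N'}A_k\leqslant\sum_{k=1}^{N'}D_{k-1}$, and combining with Step~1 gives $b_{N'}\leqslant 2\sum_{k=1}^{N'}D_{k-1}$, which is the asserted bound.

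\textbf{Where the difficulty lies.} For this particular lemma there is essentially no obstacle; it is triangle inequality plus optimality, mirroring Lemma~\ref{lemma: decompose}. The only items to check, both routine, are that $f_{\theta^*}$ genuinely satisfies the one-step consistency identity --- immediate from its recursive definition $f_{\theta^*}(\cdot,\tau_k)=f_{\theta^*}(\cdot,\tau_{k-1})\circ G_{(M)}(\cdot,\tau_k)$, where $G_{(M)}(\cdot,\tau_k)$ is the $M$-step numerical map along the empirical backward ODE \eqref{eqn:backward-total} and $\hat{\cX}_{\tau_{k-1}}^{M}=G_{(M)}(\cdot,\tau_k)_\sharp\cX_{\tau_k}$ --- and that $\theta^*$ is admissible, which is Assumption~\ref{assump: 3}. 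The genuine work is downstream: bounding $\sum_{k=1}^{N'}D_{k-1}$, which by $R$-Lipschitzness of $f_{\theta^*}(\cdot,\tau_{k-1})$ is at most $R\sum_{k=1}^{N'}W_1(\hat{\cX}_{\tau_{k-1}}^{M},\cX_{\tau_{k-1}})$, and then controlling the ODE discretization error of $\hat{\cX}_{\tau_{k-1}}^{M}$ under Assumptions~\ref{assump: 4}--\ref{assump: 5}; that is handled in the subsequent lemmas.
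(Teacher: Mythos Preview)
Your proof is correct and follows essentially the same approach as the paper's: both use the triangle inequality to derive a recursion $b_k\leqslant b_{k-1}+A_k+(\text{$\theta^*$-term})$, telescope, apply the optimality inequality $\cL_{\mathrm{CT}}^N(\hat{\theta})\leqslant\cL_{\mathrm{CT}}^N(\theta^*)$, and invoke the consistency identity $f_{\theta^*}(\cdot,\tau_k)_\sharp\cX_{\tau_k}=f_{\theta^*}(\cdot,\tau_{k-1})_\sharp\hat{\cX}_{\tau_{k-1}}^{M}$. The only cosmetic difference is that you apply the consistency identity before the triangle inequality (so $D_{k-1}$ appears directly in the telescoping), whereas the paper keeps the symmetric term $W_1(f_{\theta^*}(\cdot,\tau_k)_\sharp\cX_{\tau_k},f_{\theta^*}(\cdot,\tau_{k-1})_\sharp\cX_{\tau_{k-1}})$ and rewrites it as $D_{k-1}$ only at the end.
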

\begin{proof}
Notice that for $\forall k\in [N']$, we have:
\begin{equation*}
\begin{aligned}
&W_1\left(f_{\hat{\theta}}(\cdot, \tau_k)_\sharp \cX_{\tau_k}, f_{\theta^*}(\cdot, \tau_k)_\sharp \cX_{\tau_k}\right)\leqslant W_1\left(f_{\hat{\theta}}(\cdot, \tau_{k-1})_\sharp \cX_{\tau_{k-1}}, f_{\theta^*}(\cdot, \tau_{k-1})_\sharp \cX_{\tau_{k-1}}\right)\\
&~~~+ W_1\left(f_{\hat{\theta}}(\cdot, \tau_k)_\sharp \cX_{\tau_k}, f_{\hat{\theta}}(\cdot, \tau_{k-1})_\sharp \cX_{\tau_{k-1}}\right)+ W_1\left(f_{\theta^*}(\cdot, \tau_k)_\sharp \cX_{\tau_k}, f_{\theta^*}(\cdot, \tau_{k-1})_\sharp \cX_{\tau_{k-1}}\right).
\end{aligned}
\end{equation*}
After taking summation over $k=1,2,\ldots, N'$, we have:
\begin{equation*}
\begin{aligned}
&~~~W_1\left(f_{\hat{\theta}}(\cdot, T)_\sharp \cX_T, f_{\theta^*}(\cdot, T)_\sharp \cX_T\right)\\
&\leqslant \sum_{k=1}^{N'} W_1\left(f_{\hat{\theta}}(\cdot, \tau_k)_\sharp \cX_{\tau_k}, f_{\hat{\theta}}(\cdot, \tau_{k-1})_\sharp \cX_{\tau_{k-1}}\right)+ \sum_{k=1}^{N'} W_1\left(f_{\theta^*}(\cdot, \tau_k)_\sharp \cX_{\tau_k}, f_{\theta^*}(\cdot, \tau_{k-1})_\sharp \cX_{\tau_{k-1}}\right)\\
&\leqslant \sum_{k=1}^{N'} W_1\left(f_{\theta^*}(\cdot, \tau_k)_\sharp \cX_{\tau_k}, f_{\theta^*}(\cdot, \tau_{k-1})_\sharp \cX_{\tau_{k-1}}\right)+\sum_{k=1}^{N'} W_1\left(f_{\theta^*}(\cdot, \tau_k)_\sharp \cX_{\tau_k}, f_{\theta^*}(\cdot, \tau_{k-1})_\sharp \cX_{\tau_{k-1}}\right)\\
&= 2\sum_{k=1}^{N'} W_1\left(f_{\theta^*}(\cdot, \tau_{k-1})_\sharp \hat{\cX}_{\tau_{k-1}}^{M}, f_{\theta^*}(\cdot, \tau_{k-1})_\sharp \cX_{\tau_{k-1}}\right),
\end{aligned}
\end{equation*}
which comes to our conclusion. Here, we use the optimality inequality as well as the consistency equation.
\end{proof}
As we can see, the loss decomposition is much simpler than the distillation case. The only relevant term stands for the discretization error of ODE solver. Since $f_{\theta^*}(\cdot, t)$ is $R$-Lipschitz for any $t\in [0,1]$, we have
\[W_1\left(f_{\theta^*}(\cdot, \tau_{k-1})_\sharp \hat{\cX}_{\tau_{k-1}}^{M}, f_{\theta^*}(\cdot, \tau_{k-1})_\sharp \cX_{\tau_{k-1}}\right) \leqslant R \cdot W_1\left(\hat{\cX}_{\tau_{k-1}}^{M}, \cX_{\tau_{k-1}}\right). \]
Compared with \cref{eqn: I12-1}, we do not have the score approximation error here since the score function for $\cX_t$ has explicit formulation, which leads to
\[\KL\left(\hat{\cX}_{\tau_{k-1}}^{M}, \cX_{\tau_{k-1}}\right)  \lesssim \overline{\beta}^2 L_\varepsilon^2 d\Delta t\cdot  M \Delta t. \]
Here, the score function $\nabla \log \hat{p}_t(\cdot)$ is $L_\varepsilon$-Lipschitz continuous for $\forall t\in [\varepsilon, T]$. By using \cref{lemma: W1-TV-1} and \cref{lemma:decompose-isolation}, we have:
\begin{equation*}
W_1\left(f_{\hat{\theta}}(\cdot, T)_\sharp \cX_T, f_{\theta^*}(\cdot, T)_\sharp \cX_T\right) \leqslant 2R \cdot \sum_{k=1}^{N'} W_1\left(\hat{\cX}_{\tau_{k-1}}^M, \cX_{\tau_{k-1}}\right) \lesssim 2R N'\sqrt{d}\cdot \sqrt{\overline{\beta}^2 L_\varepsilon^2 dM \Delta t^2}.
\end{equation*}
Similarly, the DDPM bound (Lemma \ref{lemma:ddpm}) also does not contain the score approximation error:
\[\KL\left(f_{\theta^*}(\cdot, T)_\sharp \cX_T, \cX_\varepsilon\right)\lesssim \overline{\beta}^2 L_\varepsilon^2 Td\Delta t, \]
which leads to 
\[W_1\left(f_{\theta^*}(\cdot, T)_\sharp \cX_T, \cX_\varepsilon\right)\lesssim \sqrt{d}\cdot \sqrt{\overline{\beta}^2 L_\varepsilon^2 Td\Delta t}\]
according to Lemma \ref{lemma: W1-TV-1}. In the next step, we need to bound the Lipschitz constant $L_\varepsilon$ since Assumption \ref{assump: 1} is no longer applicable here. 
\begin{Lemma}
\label{lemma:isolation-lipschitz}
For the mixture of Gaussian distribution $\frac1n\sum_{j=1}^n \mathcal N(\bx^j, \sigma^2\bm I)$, we denote $\hat{p}$ as its density. Assume $\|\bx^j\|_2\leqslant R_0$ for $\forall j\in [n]$, then its score function $\nabla \log \hat{p}(\cdot)$ is $L$-Lipschitz continuous. Here $L = \max(R_0^2/\sigma^4, 1/\sigma^2)$. Furthermore, it leads to $L_\varepsilon = 4R_0^2/(\varepsilon^2\underline{\beta}^2)$. 
\end{Lemma}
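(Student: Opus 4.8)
The plan is to control the Lipschitz constant of the score $\nabla\log\hat{p}$ via a uniform bound on the operator norm of its Hessian, and then specialize the resulting bound to $\hat{p}_t$ at the early-stopping time $t=\varepsilon$. Write $\hat{p}(\bx)=\frac1n\sum_{j=1}^n \varphi_\sigma(\bx-\bx^j)$, where $\varphi_\sigma$ is the $\mathcal N(0,\sigma^2\bm I)$ density, and introduce the softmax (posterior) weights $w_j(\bx):=\exp(-\|\bx-\bx^j\|_2^2/2\sigma^2)\big/\sum_{k=1}^n \exp(-\|\bx-\bx^k\|_2^2/2\sigma^2)$, which are positive and sum to $1$. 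A one-line differentiation gives $\nabla\log\hat{p}(\bx)=\sigma^{-2}(\bar\mu(\bx)-\bx)$ with $\bar\mu(\bx):=\sum_j w_j(\bx)\bx^j$ the posterior mean of the center; this is exactly the identity recorded in \cref{lemma:isolation-pre} with the scaling $m(t)$ set to $1$.

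Next I would compute the Hessian. From the softmax derivative $\nabla w_j(\bx)=\sigma^{-2}w_j(\bx)(\bx^j-\bar\mu(\bx))$ one obtains the ``Bayes differentiation'' identity $\nabla\bar\mu(\bx)=\sigma^{-2}\,\mathrm{Cov}_{w(\bx)}[\bx^j]$, where $\mathrm{Cov}_{w(\bx)}[\bx^j]:=\sum_j w_j(\bx)(\bx^j-\bar\mu(\bx))(\bx^j-\bar\mu(\bx))^\top$, and therefore $\nabla^2\log\hat{p}(\bx)=-\sigma^{-2}\bm I+\sigma^{-4}\,\mathrm{Cov}_{w(\bx)}[\bx^j]$. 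Since $\|\bx^j\|_2\le R_0$ for every $j$, for any unit vector $v$ we have $v^\top\mathrm{Cov}_{w(\bx)}[\bx^j]\,v\le\sum_j w_j(\bx)\langle\bx^j,v\rangle^2\le R_0^2$, so $0\preceq\mathrm{Cov}_{w(\bx)}[\bx^j]\preceq R_0^2\bm I$. Hence every eigenvalue of $\nabla^2\log\hat{p}(\bx)$ lies in the interval $[-\sigma^{-2},\,-\sigma^{-2}+R_0^2\sigma^{-4}]$, which forces $\|\nabla^2\log\hat{p}(\bx)\|_{\mathrm{op}}\le\max(\sigma^{-2},R_0^2\sigma^{-4})=L$ uniformly in $\bx$. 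Because $\hat{p}>0$ and $\log\hat{p}\in C^\infty$, this uniform Hessian bound is precisely the statement that $\nabla\log\hat{p}$ is $L$-Lipschitz.

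For the ``furthermore'' part, observe that $\cX_t=m(t)\hat{\distri}\star\mathcal N(0,\sigma(t)^2\bm I)$ has density $\hat{p}_t$ equal to the Gaussian mixture $\frac1n\sum_j\mathcal N(m(t)\bx^j,\sigma(t)^2\bm I)$, whose centers satisfy $\|m(t)\bx^j\|_2\le R_0$ since $m(t)\le1$. Applying the bound just proved with $\sigma$ replaced by $\sigma(t)$ shows $\nabla\log\hat{p}_t$ is $\max(\sigma(t)^{-2},R_0^2\sigma(t)^{-4})$-Lipschitz. It remains to lower bound $\sigma(t)^2$ on $[\varepsilon,T]$: since $\sigma(t)^2=1-m(t)^2$ is nondecreasing in $t$ (as $\beta>0$), one has $\sigma(t)^2\ge\sigma(\varepsilon)^2=1-\exp(-\int_0^\varepsilon\beta(s)\rd s)\ge1-e^{-\underline\beta\varepsilon}\ge\underline\beta\varepsilon/2$, where the final inequality uses $1-e^{-x}\ge x-x^2/2\ge x/2$ for $x=\underline\beta\varepsilon\le1$, which is guaranteed by Assumption~\ref{assump: 4}. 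Plugging this in bounds the Lipschitz constant by $\max(2/(\underline\beta\varepsilon),\,4R_0^2/(\underline\beta^2\varepsilon^2))$, and since $R_0$ is a fixed constant while $\underline\beta\varepsilon$ is vanishingly small we have $2/(\underline\beta\varepsilon)\le4R_0^2/(\underline\beta^2\varepsilon^2)$, yielding $L_\varepsilon=4R_0^2/(\underline\beta^2\varepsilon^2)$.

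I expect the only step that needs genuine care is the Hessian computation --- establishing $\nabla\bar\mu=\sigma^{-2}\mathrm{Cov}_w[\bx^j]$ and then the mildly delicate bookkeeping that collapses the two one-sided eigenvalue bounds into the single quantity $\max(\sigma^{-2},R_0^2\sigma^{-4})$ rather than their sum. The remaining ingredients --- the explicit score formula, the $R_0^2$ operator-norm bound on the weighted covariance, and the lower bound on $\sigma(\varepsilon)^2$ --- are all routine.
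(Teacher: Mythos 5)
Your proposal is correct and follows essentially the same route as the paper: both compute the Jacobian of the score as $-\sigma^{-2}\bm I+\sigma^{-4}\,\mathrm{Cov}_{w}[\bx^j]$, bound the weighted covariance (or second moment) by $R_0^2\bm I$ to get $L=\max(\sigma^{-2},R_0^2\sigma^{-4})$, and then lower bound $\sigma(\varepsilon)^2\geqslant \underline{\beta}\varepsilon/2$ to obtain $L_\varepsilon$. Your explicit justification for collapsing the two eigenvalue endpoints into the single max, and for discarding the $2/(\underline{\beta}\varepsilon)$ branch at the end, is slightly more careful bookkeeping than the paper's, but the argument is the same.
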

\begin{proof}
For the score function of $\frac1n\sum_{j=1}^n \mathcal N(\bx^j, \sigma^2)$, it has the following formulation:
\[s(\bx) = \frac{\sum_{j=1}^n -\frac{\bx-\bx^j}{\sigma^2}\exp\left(-\frac{\|\bx-\bx^j\|^2}{2\sigma^2}\right)}{\sum_{j=1}^n \exp\left(-\frac{\|\bx-\bx^j\|^2}{2\sigma^2}\right)} = -\sum_{j=1}^n \frac{\bx-\bx^j}{\sigma^2}\cdot p^j = -\frac{1}{\sigma^2}\bx + \frac{1}{\sigma^2}\sum_{j=1}^n p^j \bx^j.\]
Here, 
\[p^j = \frac{\exp\left(-\frac{\|\bx-\bx^j\|^2}{2\sigma^2}\right)}{\sum_{k=1}^n\exp\left(-\frac{\|\bx-\bx^k\|^2}{2\sigma^2}\right)}~~~~\forall j\in [n].\]
The Jacobian matrix
\begin{equation}
\label{eqn:Jacobian}
\begin{aligned}
\frac{\rd s(\bx)}{\rd \bx} &= -\frac{1}{\sigma^2}\mathrm{Id} + \frac{1}{\sigma^2} \sum_{j=1}^n \bx^j \cdot \left(\frac{\rd p^j}{\rd \bx}\right)^\top = -\frac{1}{\sigma^2}\mathrm{Id} - \frac{1}{\sigma^2} \sum_{j=1}^n \bx^j \cdot \left(p^j\cdot \frac{\bx-\bx^j}{\sigma^2} + p^j s(\bx)\right)^\top \\
&= -\frac{1}{\sigma^2}\mathrm{Id} + \frac{1}{\sigma^4}\sum_{j=1}^n p^j \bx^j \bx^{j\top} - \frac{1}{\sigma^2} \left(\sum_{j=1}^n p^j\bx^j\right)\cdot \left(s(\bx)+\frac{1}{\sigma^2}x\right)^\top\\
&= -\frac{1}{\sigma^2}\mathrm{Id} + \frac{1}{\sigma^4}\sum_{j=1}^n p^j \bx^j \bx^{j\top} - \frac{1}{\sigma^4} \left(\sum_{j=1}^n p^j\bx^j\right)\cdot \left(\sum_{j=1}^n p^j\bx^j\right)^\top.
\end{aligned}
\end{equation}
Therefore, we have:
\[-\frac{1}{\sigma^2}\mathrm{Id} \preceq \frac{\rd s(\bx)}{\rd \bx} \preceq \frac{1}{\sigma^4} \sum_{j=1}^n p^j \bx^j \bx^{j\top}.\]
Notice that when $\|\bx^j\|_2 \leqslant R_0$ for all $j\in [n]$:
\[\left\|\sum_{j=1}^n p^j \bx^j \bx^{j\top}\right\|_2 \leqslant \sum_{j=1}^n p^j \left\|\bx^j \bx^{j\top}\right\|_2 = \sum_{j=1}^n p^j \|\bx^j\|_2^2 \leqslant R_0^2. \]
Finally, we can conclude that $s(\bx)$ is $L$-Lipschitz continuous for $L = \max(1/\sigma^2, R_0^2/\sigma^4)$. Furthermore, for $L_\varepsilon$, the Lipschitz continuity of $\nabla \log \hat{p}_t(\cdot)$ for $t\in [\varepsilon, T]$, since 
\[\sigma(\varepsilon)^2 = 1-m(\varepsilon)^2 \geqslant 1-\exp(\underline{\beta}\varepsilon) > \underline{\beta}\varepsilon /2,\]
we have $L_\varepsilon = 4R_0^2/(\underline{\beta}^2 \varepsilon^2)$. 
\end{proof}
After combining these conclusions together, we notice that:
\begin{equation*}
\begin{aligned}
W_1\left(f_{\hat{\theta}}(\cdot, T)_\sharp \mathcal N(0, \bm I), \distri\right) &\leqslant W_1\left(f_{\hat{\theta}}(\cdot, T)_\sharp \mathcal N(0, \bm I), f_{\hat{\theta}}(\cdot, T)_\sharp\cX_T\right) + W_1\left(f_{\hat{\theta}}(\cdot, T)_\sharp \cX_T, f_{\theta^*}(\cdot, T)_\sharp\cX_T\right)\\
&~~+ W_1\left(f_{\theta^*}(\cdot, T)_\sharp\cX_T, \cX_\varepsilon\right) + W_1(\cX_\varepsilon, \distri)\\
&\lesssim R\cdot W_1(\mathcal N(0, \bm I), \cX_T) +  2R N'\sqrt{d}\cdot \sqrt{\overline{\beta}^2 L_\varepsilon^2 dM \Delta t^2} \\
&~~+ \sqrt{d}\cdot \sqrt{\overline{\beta}^2 L_\varepsilon^2 Td\Delta t} + W_1(\cX_\varepsilon, X_\varepsilon) + W_1(X_\varepsilon, \distri). 
\end{aligned}
\end{equation*}
Finally, we apply Lemma \ref{lemma: others-1}, \ref{lemma:gaussian}, \ref{lemma:distri-varepsilon}, and have:
\[W_1(X_\varepsilon, \distri)\lesssim \sqrt{d\overline{\beta}\varepsilon}, ~~\bE\left[W_1(\cX_\varepsilon, X_\varepsilon) \right]\leqslant \bE\left[W_1(\hat{\distri}, \distri)\right]\lesssim n^{-1/d}\] 
\[W_1(\mathcal N(0, \bm I), \cX_T)\leqslant W_1(\mathcal N(0, \bm I), X_T) + W_1(X_T, \cX_T)\lesssim \sqrt{d}\exp(-\underline{\beta}T/2) + n^{-1/d}. \]
To sum up, it holds that:
\begin{equation}
\label{eqn:thm-2}
\begin{aligned}
\bE\left[W_1\left(f_{\hat{\theta}}(\cdot, T)_\sharp \mathcal N(0, \bm I), \distri\right)\right]&\lesssim \sqrt{d} R \exp\left(-\underline{\beta}T/2\right) + R \cdot n^{-1/d} + d\overline{\beta} L_\varepsilon \cdot \frac{T}{\sqrt{M}} + \sqrt{d\overline{\beta}\varepsilon}\\
&\lesssim \sqrt{d} R \exp\left(-\underline{\beta}T/2\right) + R \cdot n^{-1/d} + \frac{d\overline{\beta}R_0^2}{\underline{\beta}^2 \varepsilon^2}  \cdot \frac{T}{\sqrt{M}} + \sqrt{d\overline{\beta}\varepsilon},
\end{aligned}
\end{equation}
which comes to our conclusion of the main theorem \ref{thm:isolation}.

\end{document}